\newcommand*{\rom}[1]{\expandafter\@slowromancap\romannumeral #1@}
\newcommand{\parenv}[1]{\left( #1 \right)}
\newcommand{\ceil}[1]{\lceil #1 \rceil}
\title{Adaptive Sampling for Heterogeneous Rank Aggregation from Noisy Pairwise Comparisons}
\author{
	Yue Wu\thanks{Department of Computer Science, University of California, Los Angeles, Los Angeles, CA 90095; e-mail: {\tt wuy@g.ucla.edu}}~\thanks{Equal contribution}
    ~~~and~~~
    Tao Jin\thanks{Department of Computer Science, University of Virginia, Charlottesville, VA 22904; e-mail: {\tt taoj@virginia.edu}}~\footnotemark[2]
	~~~and~~~
	Hao Lou\thanks{Department of Electrical and Computer Engineering, University of Virginia, Charlottesville, VA 22904; e-mail: {\tt hl2nu@virginia.edu}}
	~~~and~~~
	Pan Xu\thanks{Department of Computing and Mathematical Sciences, California Institute of Technology, Pasadena, CA 91125; e-mail: {\tt panxu@caltech.edu}}
	~~~and~~~ \\
	Farzad Farnoud\thanks{Department of Electrical and Computer Engineering, Department of Computer Science, University of Virginia, Charlottesville, VA 22904; e-mail: {\tt farzad@virginia.edu}}~\thanks{Co-corresponding authors}
	~~~and~~~
	Quanquan Gu\thanks{Department of Computer Science, University of California, Los Angeles, Los Angeles, CA 90095; e-mail: {\tt qgu@cs.ucla.edu}}~~\footnotemark[7]
}
\begin{document}

\date{}
\maketitle

\begin{abstract}
In heterogeneous rank aggregation problems, users often exhibit various accuracy levels when comparing pairs of items. Thus a uniform querying strategy over users may not be optimal. To address this issue, we propose an elimination-based active sampling strategy, which estimates the ranking of items via noisy pairwise comparisons from users and improves the users' average accuracy by maintaining an active set of users. We prove that our algorithm can return the true ranking of items with high probability. We also provide a sample complexity bound for the proposed algorithm which is better than that of non-active strategies in the literature. Experiments are provided to show the empirical advantage of the proposed methods over the state-of-the-art baselines.

\end{abstract}

\section{Introduction}
To rank a set of items from noisy pairwise comparisons or preferences is a widely studied topic in machine learning \citep{braverman2008noisy, weng2011bayesian,ren2019sample,jin2020rank}. This is also referred to as rank aggregation. Multiple applications exist for this task: ranking online game players \citep{Herbrich2006TrueSkillTMAB}, evaluating agents in games \citep{Rowland2019MultiagentEU}, recommendation systems \citep{Valcarce2017CombiningTR}, etc.
In the above mentioned cases, all data used in inference shares the assumption that each preference has the same credibility. In a heterogeneous setting, the providers of fractions of data may have varying unknown accuracy levels. Thus, it is natural to take advantage of the more accurate ones to obtain a more accurate ranking using a smaller number of queries. 

Nowadays, it is common to collect large scale datasets in order to facilitate the process of knowledge discovery. Due to its large scale,  data collection is usually carried out by crowdsourcing \citep{kumar2011,chen2013pairwise}, where different parties with heterogeneous backgrounds generate a subsets of the data. While crowdsourcing makes it possible to scale up the size, it also brings in new challenges when it comes to the cost of operation and cleanness of the data. Therefore, the optimal ranking algorithm in the single user setting \citep{ren2019sample} may not be straightforwardly extended to the heterogeneous setting while maintaining optimality. In particular, if we know the best one among all users, the best we can do is to apply the Iterative-Insertion-Ranking (IIR) algorithm in \citet{ren2019sample} with the best user (i.e., with the highest accuracy). Unfortunately, in practice, the accuracy of the users are often unknown. A naive way may be to randomly select a user for the query at each time step and use the comparisons provided by this user to perform IIR. As we show in later sections, this naive method usually bears a high sample complexity. Therefore, it is an interesting and important research problem how we can adaptively select a subset of users at each time to make the queries (pairwise comparisons) in order to insert an item correctly into the ranked list. 

In this paper, we study the rank aggregation problem, where a heterogeneous set of users provide noisy pairwise comparisons for the items. We propose a novel algorithm that queries pairwise comparisons of two items from a changing active user set. Specifically, we maintain a short history of user responses for a set of comparisons. When the inferred rank of these comparisons is estimated to be true with high probability, it is then used to calculate a reward based on the recorded responses. Then a UCB-style elimination process is called to remove inaccurate users from sampling pool.
We theoretically analyze the sample complexity of the proposed algorithm, which reduces to the result in the state-of-the-art ranking algorithm \citep{ren2019sample} for a single user. We conducted experiments on a synthetic dataset, which demonstrates that our adaptive sampling algorithm based on user elimination is much more sample efficient than baseline algorithms.

\textbf{Contributions of this paper} are summarized as follows:
\begin{itemize}
	\item We propose a novel algorithm for heterogeneous rank aggregation, which uses a successive elimination subroutine to adaptively maintain a set of active  users during the ranking process.
	\item The proposed algorithm shares the same order of sample complexity as that of the oracle algorithm which has access to the optimal user and uses the state-of-the-art ranking approach in the single-user setting.
	\item 
	We conducted experiments in heterogeneous rank aggregation problems where different users have different accuracy levels and only a small fraction of users are highly accurate in providing the comparison result. Our experiment shows that the proposed algorithm costs significantly fewer samples than baseline algorithms in order to recover the exact ranking. 
\end{itemize}

The remainder of this paper is organized as follows: we review the most relevant work in the literature to ours in Section \ref{sec:related}. We present some preliminaries of ranking from noisy pairwise comparisons in Section \ref{sec:preliminary} and discuss the challenge of extending single-user optimal ranking algorithms to the heterogeneous setting. In Section \ref{sec:alg_elim}, we present our main algorithm and provide detailed description of the method. Then in Section \ref{sec:main_theory}, we provide theoretical analysis of the upper bound on the sample complexity of the proposed algorithm and compare the results with baseline algorithms. We conduct numerical experiments in Section \ref{sec:experiment}  to demonstrate the empirical superiority of our method. Finally, we conclude the paper with Section \ref{sec:conclusion}.

\noindent\textbf{Notation} We use lower case letters to denote scalars, and lower and upper case bold letters to denote vectors and matrices. We use $\| \cdot \|$ to indicate the Euclidean norm. We also use the standard $O$ and $\Omega$ notations. We say $a_n = O(b_n)$ if and only if $\exists C > 0, N > 0, \forall n > N, a_n \le C b_n$; $a_n = \Omega(b_n)$ if $a_n \ge C b_n$; $a_n = \Theta(b_n)$ if and only if $a_n = O(b_n)$ and  $a_n = \Omega(b_n)$. The notations like $\tilde{O}$ are used to hide logarithmic factors. For a positive integer $N$, $[N] := \{1,2,\dots,N\}$.

\section{Related Work}\label{sec:related}
In this section, we discuss two closely related topics to our work, which cover the two aspect of heterogeneous rank aggregation: active ranking to infer rank and best arm identification to select an accurate subset of information sources. 

\textbf{Active ranking. }
For non-active ranking problems, usually a static dataset is given at prior, or the algorithms only start to infer the model parameters when all the comparison is collected. Some common models are the BTL model \citep{bradley1952} and the Thurstone model \citep{thurstone1927}. However, with the help of assumptions derived from specific models, it is possible to infer certain relationship among items with high probability of correctness without explicitly querying the others.
In most active ranking from noisy comparisons algorithms, there are components for online estimation of the rank and identification of the most informative pairs to be queried. 
For instance, in \cite{maystre2017just}, with the assumption that the true scores for $n$ items are generated by a Poisson process, with a fixed number of $O(n \text{poly}(\log(n))$ comparisons, an \emph{approximate} ranking of $n$ items can be found. 
Let $\Delta_i$ be the instance specific parameters characterized by the distances between items and $\Delta_{\min} = \min_{i \in [n]} \Delta_i$. An instance-aware sample complexity bound of $O(n\log(n)\Delta_{\min}^{-2}\log(n/(\delta\Delta_{\min}))$ is provided along with a QuickSort based algorithm by \cite{szorenyi2015online}. 
In \cite{ren2019sample}, an analysis for a distribution agnostic active ranking scheme is provided. To achieve a $\delta$-correct \emph{exact} ranking, $O(\sum_{i \in [n]}\Delta_i^{-2} (\log\log(\Delta_i^{-1}) + \log(n/\delta)))$ comparisons are required. The exact inference requirement results in repetitive queries of the same pair consecutively which costs a constant overhead compared to \emph{approximate} inference.

\textbf{Best arm identification} is a pure exploration method in multi-armed bandits   \citep{audibert2010best,chen2017adaptive}.
In the crowdsourcing setting, every user can be queried with the same question. Suppose some users can provide more accurate answers compared to others. Then it comes to the question of how to identify the best user. We can regard the choice of which user to ask as an action. And the correctness of the user's response as the reward (cost) of the taken action. A long line of research has explored the identification of the best action facing stochastic feedback. Recently, \cite{resler2019noise} studied cases when the observed binary action costs can be inverted with a probability which is less than half. With a careful construct of the estimated cost despite the noise, the regret of the online algorithm suffers a constant order compared to the noiseless setting even without the knowledge of the inversion probability. 


		
\section{Preliminaries and Problem Setup}\label{sec:preliminary}
\subsection{Ranking From Noisy Pairwise Comparisons}
Suppose there are $N$ items in total that we want to rank and $M$ users to be queried. 
An item is indexed by an integer $ I \in [N]$.  We assume there is a unique \emph{true ranking} among the $N$ items. A user is also indexed by an integer $u \in [M]$. For a subset of users, we use $\cU \subseteq [M]$ to denote the index set. Every time step, we can pick a pair of items $i$ and $j$ and ask a user whether item $i$ is better than item $j$. The comparison returned by the user may be noisy. In many applications, we have more than one users that we can query. We assume that for any pair of items $(i,j)$ with true ranking $i \succ j$, the probability that user $u$ answers the query correctly is $p_u(i,j) = \Delta_u+\frac{1}{2}$, where $\Delta_u \in (0, \frac{1}{2}]$ is referred to as the accuracy of user $u$. When some of the $\Delta_u$ are different from the others, we have a heterogeneous set of users. 
Formally, this paper aims to solve the following problem:
\begin{definition}[Exact Ranking with Multiple Users]
Given $N$ items, $M$ users and $\delta \in (0,1)$, we want to identify the true ranking among the $N$ items with probability at least $1-\delta$. An algorithm $\cA$ is $\delta$-correct if, for any instance of the input, it will return the correct result in finite time with a probability at least $1-\delta$.
\end{definition}

\begin{definition}
Let $\cU\subseteq [M]$ be an arbitrary subset of users. If a user in $\cU$, denoted as $x$, satisfies $\Delta_x+\alpha\ge \max_{u \in \cU}\Delta_u$, then it is called an $\alpha$-optimal user in $\cU$. If a user is $\alpha$-optimal among all $M$ users, then it is called an (global) $\alpha$-optimal user.
\end{definition}


\subsection{Iterative Insertion Ranking With a Single User}
When there is only one user $u$ to be queried ($M=1$), the above problem reduces to the exact rank problem for which \citet{ren2019sample} proposed the Iterative-Insertion-Ranking (IIR) algorithm. The sample complexity (i.e., the total number of queries) to achieve exact ranking with probability $1-\delta$ is characterized by the following proposition:
\begin{proposition}[Adapted from Theorems 2 and 12, \citet{ren2019sample}]\label{prop:comIIR}
Given $\delta \in (0, 1/12)$ and an instance of $N$ items, the number of comparisons used by any $\delta$-correct algorithm $\cA$ 
on this instance is at least
\begin{align}\label{eq:lower_bd_single_user}
    \Theta
    \big(
    N \Delta_{u}^{-2}
    \big(
    \log \log \Delta_{u}^{-1}
    +
    \log(N/\delta)
    \big)
    \big),
\end{align}
and the Iterative-Insertion-Ranking algorithm (presented in \citet{ren2019sample}) can indeed output the exact ranking using number of comparisons, with probability $1-\delta$. 
\end{proposition}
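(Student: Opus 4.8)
The plan is to establish the two halves separately: the information-theoretic lower bound \eqref{eq:lower_bd_single_user} and the matching upper bound achieved by the IIR algorithm. Since the single-user model considered here is the special case of the general pairwise-comparison model of \citet{ren2019sample} in which every consecutive gap equals $\Delta_u$, the cleanest route is to invoke their Theorems 2 and 12 and specialize. I would first verify that the instance-dependent quantities collapse correctly: with the per-pair gap $\Delta_i \equiv \Delta_u$ for all $i \in [N]$, the general bound $\sum_{i}\Delta_i^{-2}(\log\log\Delta_i^{-1}+\log(N/\delta))$ becomes exactly $N\Delta_u^{-2}(\log\log\Delta_u^{-1}+\log(N/\delta))$. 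The substance of the argument, however, is to re-derive why each of the two additive terms is both necessary and sufficient.

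For the upper bound, I would analyze IIR as an iterative insertion procedure that maintains a balanced search structure and inserts the $N$ items one at a time. The core primitive is a sequential pairwise-comparison test that repeatedly queries user $u$ on a fixed pair $(i,j)$ and stops once an anytime (law-of-iterated-logarithm) confidence sequence separates the empirical preference from $1/2$. The key lemma to prove is that such a test correctly orders a pair whose gap is $\Delta$, with failure probability at most $\delta'$, using $O(\Delta^{-2}(\log\log\Delta^{-1}+\log(1/\delta')))$ comparisons; the $\log\log\Delta^{-1}$ factor is precisely the overhead of not knowing $\Delta$ in advance, and is what forces an iterated-logarithm boundary rather than a fixed-budget Hoeffding bound. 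I would then union-bound over all comparisons performed across all insertions, setting each per-comparison confidence to $\delta' = \Theta(\delta/\mathrm{poly}(N))$ so that $\log(1/\delta') = O(\log(N/\delta))$, and sum the per-pair costs using the insertion-cost analysis of \citet{ren2019sample}, which in the equal-gap case gives the claimed $N\Delta_u^{-2}(\log\log\Delta_u^{-1}+\log(N/\delta))$.

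For the lower bound, I would use a change-of-measure (Le Cam two-point) argument. Fix the true ranking and, for each adjacent pair $(k,k+1)$, consider the alternative instance obtained by transposing these two items; any $\delta$-correct algorithm must distinguish the two, since they induce different exact rankings. Because only queries involving the transposed pair separate the two instances, a standard Bretagnolle--Huber and Wald-identity bound shows that the expected number of such queries must be $\Omega(\Delta_u^{-2}\log(1/\delta))$; aggregating the $N-1$ nearly independent testing tasks and accounting for the requirement that all of them succeed simultaneously upgrades $\log(1/\delta)$ to $\log(N/\delta)$ and produces the leading $N\Delta_u^{-2}\log(N/\delta)$ term. The additive $N\Delta_u^{-2}\log\log\Delta_u^{-1}$ term is obtained separately from the sequential-testing lower bound governed by the law of the iterated logarithm, which certifies that no adaptive stopping rule ignorant of $\Delta_u$ can avoid this overhead.

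The main obstacle I anticipate is the $\log\log\Delta_u^{-1}$ term on both sides. On the upper-bound side it requires a genuinely sequential confidence analysis (an anytime-valid boundary), not a one-shot concentration inequality, in order to control the adaptively chosen stopping time while keeping the total failure probability below $\delta$. On the lower-bound side it requires the more delicate iterated-logarithm argument rather than the elementary two-point method, which only yields the $\log(N/\delta)$ term. Everything else---the reduction to adjacent transpositions, the union bounds, and the collapse of the instance-dependent sum to the equal-gap form---is comparatively routine bookkeeping.
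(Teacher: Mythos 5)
The paper does not actually prove this proposition: it is imported wholesale from \citet{ren2019sample} (their Theorems 2 and 12), and the only ``proof content'' in the paper is the specialization you correctly identify at the outset, namely that when every pair $(i,j)$ with $i \succ j$ has comparison probability $\tfrac{1}{2}+\Delta_u$, the instance-dependent sum $\sum_{i}\Delta_i^{-2}(\log\log\Delta_i^{-1}+\log(N/\delta))$ collapses to $N\Delta_u^{-2}(\log\log\Delta_u^{-1}+\log(N/\delta))$. Your reconstruction of the underlying arguments is faithful to how the cited work proceeds, with one implementation-level difference worth noting on the upper bound: in the actual IIR/IAI/ATC machinery (Algorithms \ref{alg:iai}--\ref{alg:atc}), the $\log\log\Delta_u^{-1}$ overhead does not come from a single anytime LIL confidence sequence but from a guess-and-halve schedule $\epsilon_\tau = 2^{-(\tau+1)}$ with per-scale confidences $\delta_\tau = 6\delta/(\pi^2\tau^2)$; each ATC call runs a fixed-budget Hoeffding-style test at scale $\epsilon_\tau$, and the $\log\log$ factor emerges from $\log(1/\delta_\tau)$ at the terminal scale $\tau = \Theta(\log\Delta_u^{-1})$. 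The two mechanisms are morally equivalent and yield the same bound, so this is a stylistic rather than substantive divergence. On the lower bound, your two-point argument plus the Farrell-type sequential lower bound for the $\log\log$ term matches the structure of the cited Theorem 12; the one step you gloss over --- upgrading $\log(1/\delta)$ to $\log(N/\delta)$ across the $N-1$ adjacent transpositions --- does require a direct-sum-style argument rather than a naive union bound, but you flag it and it is handled in the reference. In short, nothing in your sketch would fail; it simply reproves what the paper takes as a black box.
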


The complexity above can be decomposed into the complexity of inserting each item into a constructed sorting tree.

In this paper, we consider a more challenging ranking problem, where multiple users with heterogeneous levels of accuracy can be queried each time. In the multi-user setting, the optimal sample complexity in \eqref{eq:lower_bd_single_user} can be achieved only if we know which user is the best user $u^* = \arg \max_{u \in [M]} \Delta_u$. The optimal sample complexity can then be written as 
\begin{align} \label{eqn:comp-best-user}
    \cC_{u^*}(N)
    & = 
    \Theta \big(
    N \Delta_{u^*}^{-2}
    \big(
    \log \log \Delta_{u^*}^{-1}
    +
    \log(N/\delta)
    \big)
    \big).
\end{align}

However, with no prior information on the users' comparison accuracy, it is unclear whether we can achieve a sample complexity close to~\eqref{eqn:comp-best-user}. In this scenario, the most primitive route is to perform no inference on the users' accuracy, randomly choose a user and make the query. 
This leads to an equivalent accuracy margin $\Bar{\Delta}_0 = \frac{1}{M}\sum_{u \in [M]} \Delta_{u}$:
\begin{align} \label{eqn:comp-aver-user}
    \cC_{\mathrm{ave}}(N)
    & = 
    \Theta \big(
    N \Bar{\Delta}_0^{-2}
    \big(
    \log \log \Bar{\Delta}_0^{-1}
    +
    \log(N/\delta)
    \big)
    \big),
\end{align}
which clearly has a gap linear (ignoring logarithmic factors) in the number of items when compared with the optimal complexity. 
This is certainly undesirable, especially when there are vastly many items to be compared.
Therefore, an immediate question is:
\begin{quote}
    Can we design an algorithm that has a sublinear gap in sample complexity compared with the optimal sample complexity?
\end{quote}
What we will propose in the following section is an algorithm that can achieve a sublinear regret, where the regret is defined as the difference between the sample complexity of the proposed algorithm and  the optimal sample complexity.


\section{Adaptive Sampling and User Elimination}\label{sec:alg_elim}
The main framework of our procedure is derived based on the \textsc{Iterative-Insertion-Ranking} algorithm proposed in \citet{ren2019sample}, which, to the best of our knowledge, is the first algorithm that has matching instance-dependent upper and lower sample complexity bounds for active ranking problems in the single-user setting. Note that the strong stochastic transitivity (SST) assumption defined in \citep{falahatgar2017maximum,falahatgar2018limits} holds in our 
setting. The ranking algorithm comprises the following four hierarchical parts and operates on a Preference Interval Tree (PIT) \citep{Feige1994ComputingWN,ren2019sample}, which stores the currently inserted and sorted items (for its specific definition, please refer to Appendix \ref{sec:omitted_subrountine}):
\begin{enumerate}[leftmargin=*]
    \item \textsc{Iterative-Insertion-Ranking} (IIR): the main procedure which calls IAI to insert each item into a PIT with a high probability of correctness. It is displayed in Algorithm \ref{alg:iir}.
    \item \textsc{Iterative-Attempting-Insertion} (IAI): the subroutine which calls ATI to insert the current item $z\in[N]$ into the ranked list with a deviation error $\epsilon$, and iteratively calls ATI by decreasing the error until the probability that item $z$ is inserted to the correct position is high enough. It is displayed in Algorithm \ref{alg:iai}. 
    \item \textsc{Attempting-Insertion} (ATI): the subroutine that traverses the Preference Interval Tree using binary search \citep{feige1994computing} to find the node where the item should be inserted with error $\epsilon$. While it compares the current item and any node in the tree, it calls ATC to obtain the comparison result.  It is displayed in Algorithm \ref{alg:ati}.
    \item \textsc{Attempting-Comparison} (ATC): the subroutine that adaptively samples queries from a subset of users for a pair of items $(z,j)$, where $z$ is the item currently being inserted and $j$ is any other item. ATC records the number of queries each user provides and the results of the comparison. It is displayed in Algorithm \ref{alg:atc}. 
\end{enumerate}

In the heterogeneous rank aggregation problem, each user may have a different accuracy from the others. Therefore, we adaptively sample the comparison data from a subset of users. In particular, we maintain an active set $\cU\subseteq[M]$ of users, which contains the potentially most accurate users from the entire group. We add a user elimination phase to the main procedure (Algorithm \ref{alg:iir}) based on the elimination idea in multi-armed bandits \citep{slivkins2019introduction,lattimore2020bandit} to update this active set.
In particular, we view each user as an arm in a multi-arm bandit, where the reward is $1$ if the answer from a certain user is correct and $0$ if wrong. After an item is successfully inserted by IAI, we call Algorithm \ref{alg:elim-user} (\textsc{EliminateUser}) to eliminate users with low accuracy levels before we proceed to the next item. 

To estimate the accuracy of users, a vector $\sbb_z\in\RR^{M}$ recording the counts of responses from each user for item $z$ is maintained during the whole period of inserting item $z$.
We further keep track of two matrices $A_z,B_z\in\RR^{N\times M}$. When a pair $(z,j)$ (where $z$ refers to the item  currently being inserted and $j$ to an arbitrary item) is compared by user $u\in\RR^M$ in Algorithm \ref{alg:atc}, we increase $A[j,u]$ by $1$ if user $u$ thinks $z$ is better than $j$ and increase $B[j,u]$ by $1$ otherwise. 
We use $w$ to record the total number of times that item $z$ is deemed better by any users and use the average $\hat p=w/t$ to provide an estimation of the average accuracy $|\cU|^{-1}\sum_{u \in \cU} p_{ij}^u$. The variables $A_z, B_z$, and $\sbb_z$ are global variables, shared by different subroutines throughout the process. After an item $z$ is successfully inserted, $A_z, B_z$ can be reset to zero and the space allocated can be used for $A_{z+1}, B_{z+1}$ (See Line \ref{algline:iir_reset_global} of Algorithm \ref{alg:iir}). 

We use the $0/1$ reward for each user to indicate whether the provided pairwise comparison is correct.
Nevertheless, this reward is not known immediately after each arm-pull since the correctness depends on the ranking of items which is also unknown. But when IAI returns \textit{inserted}, the item recently inserted has a high probability to be in the right place. Our method takes advantage of this fact by constructing a ground truth pairwise comparison of this item with all other already inserted items in the PIT. Then an estimate of the reward $\bn_z$ can be obtained with the help of recorded responses $A_z$ and $B_z$, which are updated in ATC as described in the above paragraph. At last, in Algorithm \ref{alg:elim-user} a UCB-style condition is imposed on estimated accuracy levels  $\bmu = \bn_z / \bs_z$.
Due to the space limit, we omit the IAI and ATI routines since they are the same as that in \citet{ren2019sample}. We include them for the completeness of our paper in Appendix \ref{sec:omitted_subrountine}.

\begin{algorithm}
\caption{Main Procedure: \textsc{Iterative-Insertion-Ranking} (IIR)\label{alg:iir}}
\textbf{Global Variables:} \\
$z\in\NN$: the index of the item being inserted into the ranked list.\\
$A_z\in\RR^{N\times M}$: $A_z[j,u]$ is the number of times that user $u$ thinks item $z$ is better than item $j$.\\
$B_z\in\RR^{N\times M}$: $B_z[j,u]$ is the number of times that user $u$ thinks item $z$ is worse than item $j$.\\
$\sbb_z\in\RR^{M}$: total number of responses by each user so far.\\
    \textbf{Input parameters}: Items to rank $S = [N]$ and confidence $\delta$ \\
    \textbf{Initialize}: $\bn_1 = \bs_1 = \mathbf{0}$
	\begin{algorithmic}[1]
		\STATE $Ans\gets $ the list containing only $S[1]$
		\FOR{$z \gets$ $2$ to $|S|$}
		    \STATE $\bn_{z} = \bn_{z-1}, \bs_{z} = \bs_{z-1}, A_{z} = \mathbf{0}, B_z = \mathbf{0}$\label{algline:iir_reset_global}
		    \STATE IAI$(S[z],Ans,\delta/(n-1))$  \hfill $\triangleright$Algorithm \ref{alg:iai} \textcolor{gray}{(global variables $A_z, B_z, \bs_z$ are updated here)} \label{line:IIR-4}
		    \FOR {$j \in [z-1]$}
			    \IF {$S[z] > S[j]$ in PIT}
	    		    \STATE $\bn_z = \bn_{z} + A_z[j,*]$
			    \ELSE
    			    \STATE $\bn_z = \bn_{z} + B_z[j,*]$
			    \ENDIF
			\ENDFOR
		    \STATE $\cU_z \leftarrow \textsc{EliminateUser}(\cU_{z-1}, \nbb_z, \sbb_z, \delta/(n-1))$ \hfill $\triangleright$Algorithm \ref{alg:elim-user}
		\ENDFOR
		\RETURN{$Ans$}; 
	\end{algorithmic}
\end{algorithm}

\begin{algorithm}
\caption{Subroutine: \textsc{Attempt-To-Compare (ATC)} $(z, j, \cU, \epsilon, \delta)$}\label{alg:atc}
\textbf{Input:} items $(z, j)$ to be compared, set of users $\cU$, confidence parameter $\epsilon, \delta$. $M$ is the number of users originally. 
\begin{algorithmic}[1]
\STATE $m = |\cU|, \hat{p} = 0, w = 0, \hat{y} = 1$. Number of rounds $r = 1$. $r_{\max} = \lceil \frac{1}{2}\epsilon^{-2}\log\frac{2}{\delta} \rceil$.
        \WHILE{$r \leq r_{\max}$}
            \STATE Choose $u$ uniformly at random from $\cU$
            \STATE Obtain comparison result from user $u$ as $y_{ij}^u$
            \STATE Increment the counter of responses collected from this user $\bs_z[u] \leftarrow \bs_z[u] + 1$
            \IF {$y_{ij}^u > 0$}
                \STATE $A_z[j, u] \leftarrow A_z[j, u] + 1$, $w \leftarrow w + 1$
            \ELSE 
                \STATE $B_z[j, u] \leftarrow B_z[j, u] + 1$
            \ENDIF
        \STATE $\hat{p} \leftarrow w / r$, $r \leftarrow r + 1$, 
        $c_r \leftarrow \sqrt{\frac{1}{2t}\log(\frac{\pi^2r^2}{3\delta})}$ \label{code:average-response}
        \IF {$|\hat{p} - \frac{1}{2}| \geq c_r$}
            \STATE \textbf{break}
        \ENDIF
        \ENDWHILE
        \IF{$\hat{p} \leq \frac{1}{2}$}
            \STATE $\hat{y} = 0$
        \ENDIF
        \STATE \textbf{return:} $\hat{y}$
    \end{algorithmic}
\end{algorithm}

\begin{algorithm}
\caption{Subroutine: \textsc{EliminateUser}}\label{alg:elim-user}
    \textbf{Input parameters}: $(\cU, \nbb, \sbb, \delta)$ 
    \begin{algorithmic}[1]
        \STATE Set $S = \sum_{u \in [M]} \sbb_u$, $\sbb_{\min} = \min_{u \in \cU} \sbb_u$, $\bmu_u = \nbb_u / \sbb_u$, $r =  \sqrt{\frac{\log(2|\cU|/\delta)}{2 \sbb_{\min}}}$ 
        \STATE Set
        $\textbf{LCB} = \bmu - r \mathbf{1} $ and  $\textbf{UCB} = \bmu + r \mathbf{1} $.
        \IF{$S \ge 2M^2 \log(NM/\delta)$}
            \FOR{$u \in \cU$}
                \STATE Remove user $u$ from $\cU$ if  $ \exists u' \in \cU, \textbf{UCB}_u < \textbf{LCB}_{u'}$.
            \ENDFOR
        \ENDIF
        \RETURN $\cU$
    \end{algorithmic}
\end{algorithm}

\section{Theoretical Analysis}\label{sec:main_theory}

In this section, we analyze the sample complexity of the proposed algorithm and compare it with other baselines mentioned in Section \ref{sec:preliminary}.

\subsection{Sample Complexity of Algorithm \ref{alg:iir}}
We first present a guarantee on upper bound of the sample complexity of the proposed algorithm. Define $\bar{\Delta}_z = \frac{1}{\cU_z} \sum_{u \in \cU_z} \Delta_u$ to be the average accuracy of all users in the current active set. Denote 
\begin{align}\label{def:F_function}
    F(x) = x^{-2}(\log \log x^{-1} + \log(N/\delta)).
\end{align}
Although $F(x)$ depends on $N$ and $\delta^{-1}$, the dependence is only logarithmic, and it does not affect the validity of reasoning via big-$O$ notations. 
\begin{theorem} \label{theorem:q-complexity} 
For any $\delta>0$, with probability at least $1-\delta$,  Algorithm~\ref{alg:iir} returns the exact ranking of the $N$ items, and it makes at most $\cC_{\mathrm{Alg}}(N)$  queries, where
\begin{align*}
    \cC_{\mathrm{Alg}}(N)
    & = 
    O\Bigg(
    \sum_{z=2}^{N}
    \bar{\Delta}_z^{-2}  (\log\log\bar{\Delta}_z^{-1} + \log(N/\delta)) 
    \Bigg)
    =
    O\Bigg(
    \sum_{z=2}^{N}
    F(\bar{\Delta}_z)
    \Bigg).
\end{align*}
\end{theorem}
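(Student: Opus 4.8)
The plan is to reduce the multi-user insertion of each item to the single-user IIR analysis of Proposition~\ref{prop:comIIR} by introducing a ``virtual user.'' The key observation is that while item $z$ is being inserted, the active set $\cU_z$ is held fixed (it is updated by \textsc{EliminateUser} only \emph{after} the insertion completes, in Algorithm~\ref{alg:iir}). In Algorithm~\ref{alg:atc}, each round draws a user $u$ uniformly from $\cU_z$ and returns a single comparison for the queried pair $(i,j)$ with $i\succ j$. Hence each response is an i.i.d.\ Bernoulli variable that equals the correct answer with probability
\begin{align*}
\frac{1}{|\cU_z|}\sum_{u\in\cU_z} p_u(i,j) = \frac{1}{|\cU_z|}\sum_{u\in\cU_z}\Big(\Delta_u+\tfrac12\Big) = \bar{\Delta}_z+\tfrac12,
\end{align*}
independently of the particular pair. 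Thus, conditioned on $\cU_z$, the whole sequence of comparisons produced during the insertion of item $z$ is distributed exactly as if it came from a single user of accuracy $\bar\Delta_z$; since this accuracy is constant across all pairs, the SST condition required by the IIR analysis holds with equality, and Algorithm~\ref{alg:atc} run on $\cU_z$ coincides in distribution with the single-user \textsc{ATC} of \citet{ren2019sample} at accuracy $\bar\Delta_z$.

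Given this reduction, correctness follows item by item. First, \textsc{EliminateUser} never empties the active set: the user attaining $\max_{u\in\cU}\textbf{UCB}_u$ cannot satisfy $\textbf{UCB}_u<\textbf{LCB}_{u'}$ for any $u'$, so $\bar\Delta_z$ is always well defined and positive. Conditioning on $\cU_z$, the correctness part of Proposition~\ref{prop:comIIR} ensures that \textsc{IAI}$(S[z],Ans,\delta/(n-1))$ places item $z$ in its correct position with probability at least $1-\delta/(n-1)$. Taking a union bound over the $N-1$ insertions and removing the conditioning on the random sets $\cU_z$ via the tower property shows that Algorithm~\ref{alg:iir} returns the exact ranking with probability at least $1-\delta$.

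For the sample-complexity bound I would invoke the sample-count part of Proposition~\ref{prop:comIIR}, which holds on the same high-probability event as correctness and which the paper notes decomposes across items: conditioned on $\cU_z$, inserting item $z$ with the virtual user of accuracy $\bar\Delta_z$ uses at most $O\big(\bar\Delta_z^{-2}(\log\log\bar\Delta_z^{-1}+\log(N/\delta))\big)=O(F(\bar\Delta_z))$ comparisons. Summing the per-item bounds over $z=2,\dots,N$ gives
\begin{align*}
\cC_{\mathrm{Alg}}(N)=O\Bigg(\sum_{z=2}^{N}F(\bar\Delta_z)\Bigg)
\end{align*}
on the good event, exactly the claimed bound. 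Notice the bound is stated in terms of the \emph{realized} averages $\bar\Delta_z$, so no analysis of \emph{which} users survive elimination is needed here; that question only enters the downstream comparison against $\cC_{u^*}(N)$.

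The main obstacle is making the virtual-user reduction rigorous under adaptivity: $\cU_z$ is a random set depending on all past comparisons and eliminations, and the \textsc{ATC} stopping time depends on the running empirical mean $\hat p$. One must argue that conditioning on the history up to the start of inserting item $z$ fixes $\cU_z$ and renders the within-insertion comparisons i.i.d.\ Bernoulli$(\bar\Delta_z+\tfrac12)$, so the single-user guarantee applies conditionally and the per-item success and complexity events can be combined by a union bound and the tower property. A secondary point is confidence-budget bookkeeping: \textsc{IAI} and \textsc{EliminateUser} are each called with parameter $\delta/(n-1)$, and one should confirm that the union bound over all invocations still yields overall confidence $1-\delta$ (adjusting constants if necessary), and that the global counters $A_z,B_z,\sbb_z$ are accumulated faithfully by \textsc{ATC} so that the reward-estimation overhead does not disturb the reduction.
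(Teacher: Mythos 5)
Your proposal is correct and follows essentially the same route as the paper: the paper likewise observes that uniformly sampling a user from the fixed active set $\cU_z$ during the insertion of item $z$ is equivalent to querying a single virtual user of accuracy $\tfrac12+\bar\Delta_z$, invokes the per-item insertion guarantee inherited from \citet{ren2019sample} (Lemma~\ref{lemma:iai}), and sums $O(F(\bar\Delta_z))$ over $z=2,\dots,N$. Your additional care about conditioning on the history to handle the adaptivity of $\cU_z$ is a reasonable elaboration of a step the paper leaves implicit, not a different argument.
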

\begin{proof}
The analysis on the sample complexity  follows a similar route as \citet{ren2019sample} due to the similarity in algorithm design. In fact, since we randomly choose a user from $\cU_t$ and query it for a feedback, it is equivalent to query a single user with the averaged accuracy $\frac{1}{2} + \bar{\Delta}_z$, where $\bar{\Delta}_z := \frac{1}{|\cU_z|} \sum_{u \in \cU_z} \Delta_u$. This means most of the theoretical results from \citet{ren2019sample} can also apply to our algorithm. In Appendix~\ref{subsec:proof-complexity}, we presented more detailed reasoning. 
\end{proof}

\subsection{Comparison of Sample Complexities Among Different Algorithms}
While Theorem~\ref{theorem:q-complexity} characterizes the sample complexity of Algorithm \ref{alg:iir} explicitly, the result therein is not directly comparable with the sample complexity of the oracle algorithm that only queries the best user $\cC_{u^*}(N)$ or the complexity of the naive random-query algorithm $\cC_{\mathrm{ave}}(N)$. Based on Theorem~\ref{theorem:q-complexity}, we can derive the following more elaborate sample complexity  of Algorithm \ref{alg:iir}.
\begin{theorem} \label{theorem:complexity-gap}
Suppose there are $N$ items and $M$ users initially. 
Denote $S_z = \sum_{u \in [M]} (\sbb_z)_u$ to be the number of all queries made before inserting item $z$ (Line~\ref{line:IIR-4} in Algorithm~\ref{alg:iir}).
The proposed algorithm has the following sample complexity upper bound:
\begin{align}
    \cC_{\mathrm{Alg}}(N,M)
    & = 
    O( N F(\Delta_{u^*})
    ) +
    O
    \Bigg(
    \sum_{z=2}^{N}
    \ind 
    \big\{
    S_z < 
    2M^2 \log(NM/\delta)
    \big\}
    \big (F(\bar{\Delta}_0)
    -
    F(\Delta_{u^*}) \big)
    \Bigg)
    \notag \\ 
    & \qquad
    +
    {O}
    \Bigg(
    L(\cU_0)\sqrt{\log(2MN/\delta)}
    \sum_{z=2}^{N}
    \ind 
    \{
    S_z \ge 
    2M^2 \log(NM/\delta)
    \}
    \sqrt{\frac{M}{S_z}}
    \Bigg)
    ,\label{eqn:comp-gap}
\end{align}
where $ L(\cU_0) = \frac{F(c\Delta_{u^*}^3)
    -
    F(\Delta_{u^*})}
    { \Delta_{u^*}
    - c\Delta_{u^*}^3 }$ is an instance-dependent factor, with only logarithmic dependence on $N$ and $\delta^{-1}$(through $F$), and where $c = 1/25$ is a global constant. 
\end{theorem}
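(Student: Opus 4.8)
The plan is to begin from the bound $\cC_{\mathrm{Alg}}(N) = O\big(\sum_{z=2}^{N} F(\bar\Delta_z)\big)$ established in Theorem~\ref{theorem:q-complexity} and to refine each summand by comparing the active-set margin $\bar\Delta_z$ against the optimal margin $\Delta_{u^*}$. Writing $F(\bar\Delta_z) = F(\Delta_{u^*}) + \big(F(\bar\Delta_z) - F(\Delta_{u^*})\big)$ and summing the first piece over the at most $N$ inserted items immediately produces the leading term $O(N F(\Delta_{u^*}))$. What remains is to control $\sum_{z=2}^{N}\big(F(\bar\Delta_z) - F(\Delta_{u^*})\big)$, and I would split this sum according to the threshold indicator $\ind\{S_z < 2M^2\log(NM/\delta)\}$ that governs \textsc{EliminateUser}. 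For items inserted before $S_z$ crosses the threshold no elimination can have occurred, so $\cU_z = \cU_0 = [M]$ and therefore $\bar\Delta_z = \bar\Delta_0$; each such summand is then exactly $F(\bar\Delta_0) - F(\Delta_{u^*})$, which reproduces the second term verbatim.

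For the elimination phase ($S_z \ge 2M^2\log(NM/\delta)$) I would work on a good event $\cE$ on which: (i) every IAI insertion is correct, so the ground-truth comparisons reconstructed in Algorithm~\ref{alg:iir} to update $\bn_z$ are valid; (ii) all confidence intervals used in \textsc{EliminateUser} hold simultaneously, $|\bmu_u - p_u| \le r$ for every active user, via a union bound over the $\le N$ elimination calls and $\le M$ users at confidence $\delta/(n-1)$ (the common radius $r$ built from $\sbb_{\min}$ is conservative since $\sbb_{\min}\le\sbb_u$); and (iii) the uniform-at-random querying spreads the $S_z$ samples evenly, giving $\sbb_{\min} \ge S_z/(2M)$. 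Claim~(iii) is where the threshold is used: each active user has expected count at least $S_z/M$, so a multiplicative Chernoff bound makes the deviation negligible precisely when $S_z \ge 2M^2\log(NM/\delta)$. On $\cE$ the optimal user $u^*$ is never eliminated, and any surviving $u \in \cU_z$ obeys $\textbf{UCB}_u \ge \textbf{LCB}_{u^*}$; combined with (ii) this yields $\Delta_u \ge \Delta_{u^*} - 4r$, hence $\Delta_{u^*} - \bar\Delta_z \le 4r$. Substituting (iii) into $r = \sqrt{\log(2|\cU_z|/\delta')/(2\sbb_{\min})}$ with $\delta' = \delta/(n-1)$ gives $r = O\big(\sqrt{\log(2MN/\delta)}\,\sqrt{M/S_z}\big)$, the per-item factor appearing in the third term.

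To pass from the margin gap $\Delta_{u^*} - \bar\Delta_z$ to the function gap $F(\bar\Delta_z) - F(\Delta_{u^*})$, I would invoke that $F$ is decreasing and convex on the relevant range, so that its secant slope is monotone: whenever $\bar\Delta_z \ge c\Delta_{u^*}^3$,
\[
\frac{F(\bar\Delta_z) - F(\Delta_{u^*})}{\Delta_{u^*} - \bar\Delta_z}
\;\le\;
\frac{F(c\Delta_{u^*}^3) - F(\Delta_{u^*})}{\Delta_{u^*} - c\Delta_{u^*}^3}
\;=\; L(\cU_0).
\]
This gives $F(\bar\Delta_z) - F(\Delta_{u^*}) \le L(\cU_0)\,(\Delta_{u^*} - \bar\Delta_z) \le 4\,L(\cU_0)\,r$; summing over the elimination-phase items and absorbing constants yields the third term.

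The step I expect to be the main obstacle is establishing the uniform floor $\bar\Delta_z \ge c\Delta_{u^*}^3$ throughout the elimination phase, since without it the slope of $F$ near the origin is unbounded and the secant comparison with $L(\cU_0)$ breaks down. The idea is that once $S_z$ exceeds the threshold, (iii) already forces $r$ to be small enough that the surviving interval $[\Delta_{u^*}-4r,\Delta_{u^*}]$ keeps the active-set average above $c\Delta_{u^*}^3$, with the explicit constant $c = 1/25$ emerging from chaining the Chernoff estimate on $\sbb_{\min}$, the width $4r$, and the value of $r$ at the threshold. A secondary but necessary check is the convexity of $F(x) = x^{-2}(\log\log x^{-1} + \log(N/\delta))$ on $(0,1/2]$, which underlies the secant-slope inequality and follows from a direct, if slightly tedious, differentiation.
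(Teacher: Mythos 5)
Your decomposition is exactly the paper's: the same split of $\sum_z F(\bar\Delta_z)$ into $N F(\Delta_{u^*})$ plus a gap term, the same partition of the gap by the indicator $\ind\{S_z < 2M^2\log(NM/\delta)\}$ (with $\cU_z=\cU_0$ and hence $\bar\Delta_z=\bar\Delta_0$ before the threshold), and the same three ingredients for the elimination phase: Hoeffding confidence intervals giving $\Delta_{u^*}-\Delta_u\le 4r$ for every surviving user, a concentration bound giving $(\sbb_z)_{\min}\ge S_z/(2M)$ once the threshold is crossed, and the convexity/secant-slope bound on $F$. This matches Lemmas~\ref{lemma:conf-int}, \ref{lemma:query-per-user}, \ref{lemma:accuracy-gap}, \ref{lemma:function-F} and the argument in Appendix~\ref{subsec:proof-comp-gap} essentially line for line.

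The one place your sketch is incomplete is the step you yourself flag as the main obstacle: the uniform floor $\bar\Delta_z\ge c\Delta_{u^*}^3$. Your proposed resolution --- that past the threshold $r$ is already small enough that the surviving interval $[\Delta_{u^*}-4r,\Delta_{u^*}]$ keeps the active-set average above $c\Delta_{u^*}^3$ --- only works when $M$ is large. At the threshold one has $4r\approx 4/\sqrt{M}$, and for moderate $M$ (any $M<64$ when $\Delta_{u^*}\le 1/2$) the quantity $\Delta_{u^*}-4/\sqrt{M}$ is negative, so this bound alone is vacuous. The paper closes the gap with a second, unconditional lower bound $\bar\Delta_z\ge \Delta_{u^*}/M$ (the best user is never eliminated and all accuracies are positive), and then shows $\max\{\Delta_{u^*}/M,\ \Delta_{u^*}-4/\sqrt{M}\}\ge c\Delta_{u^*}^3$ by a case split: if $\Delta_{u^*}/M< c\Delta_{u^*}^3$ then $M> c^{-1}\Delta_{u^*}^{-2}$, whence $4/\sqrt{M}<4\sqrt{c}\,\Delta_{u^*}$ and $\Delta_{u^*}-4/\sqrt{M}\ge(1-4\sqrt{c})\Delta_{u^*}\ge c\Delta_{u^*}^3$ for $c=1/25$ and $\Delta_{u^*}\le 1/2$. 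You need both branches of this maximum; with that added, your argument is complete and coincides with the paper's.
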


\begin{proof}
The detailed proof can be found in Appendix~\ref{subsec:proof-comp-gap}.
\end{proof}

A few discussions are necessary to show the meaning of the result. First, if the number of users $M \gg N$, then no user is eliminated because each user will be queried so few times that no meaningful inference can be made. Since the goal is to achieve the accuracy of the best user, more inaccurate users only make the task more difficult. Therefore, it is necessary to impose assumptions on $M$ with respect to $N$.


This intuition can be made more precise. 
Suppose we loosely bound $S_t$ as $S_t \ge t \log(t/\delta)$, which is reasonable since for a very accurate user the algorithm will spend roughly no more than $O(\log(t/\delta))$ comparisons to insert one item.
This means the complexity can be bounded as (ignoring log factors)
\begin{align}
    \cC_{\mathrm{Alg}}(N,M)
    & = 
    O\big( N F(\Delta_{u^*})
    \big)
    \notag 
    +
    \tilde{O}
    \big(
    M^2
    \big (F(\bar{\Delta}_0)
    -
    F(\Delta_{u^*}) \big)
    \big)
    \notag \\ 
    & \qquad
    +
    \Tilde{O}
    \big(
    L(\cU_0)
    \big(
    \sqrt{M}
    (\sqrt{N} - M)
    \big)
    \big)
    . \label{eqn:comp-gap-simp}
\end{align}
If $M=\Omega(\sqrt{N})$, then this is not ideal because our algorithm won't eliminate any user until $\Omega(N)$  items are inserted with accuracy $\bar{\Delta}_0$, which already leads to a gap linear in $N$ compared with the best complexity $\cC_{u^*}$. In this case, our algorithm roughly makes the same amount of queries as $\cC_{\mathrm{ave}}$. 

In order to avoid the bad case,  it is necessary to assume $M = o(\sqrt{N})$ so that the last two terms become negligible (notice that $L(\cU_0)$ is an instance-dependent constant). 
Now we restate Theorem~\ref{theorem:complexity-gap} with the additional assumption, and compare it with the baselines. 
\begin{proposition}\label{prop:complexity-gap}
Suppose we have $M$ users and $N$ items to rank exactly, with $M = o(\sqrt{N})$. We have the following complexity along with~\eqref{eqn:comp-best-user} and~\eqref{eqn:comp-aver-user}:
\begin{align*}
    \cC_{u^*}(N,M)
    & = 
    O(
    N F(\Delta_{u^*})
    )
    \\
    \cC_{\mathrm{ave}}(N,M)
    & = 
    O(
    N F(\bar{\Delta}_0)
    )
    \\
    \cC_{\mathrm{Alg}}(N,M)
    & = 
    O(
    N F(\Delta_{u^*})
    )
    +
    o
    (N
    \big (F(\bar{\Delta}_0)
    -
    F(\Delta_{u^*}) \big)
    )
    +
    o
    \big(
    N
    \big).
\end{align*}
The last two terms of $\cC_{\mathrm{Alg}}(N,M)$ are negligible when compared with the first term. Therefore, our algorithm can perform comparably efficiently as if the best user is known while enjoying an advantage over the naive algorithm with sample complexity $\cC_{\mathrm{ave}}(N,M)$.
\end{proposition}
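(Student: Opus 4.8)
The plan is to obtain Proposition~\ref{prop:complexity-gap} directly from Theorem~\ref{theorem:complexity-gap} as an asymptotic simplification under the extra hypothesis $M = o(\sqrt{N})$; there is no new probabilistic content, so the work is entirely in the order-of-growth bookkeeping. The two baseline expressions are immediate: rewriting~\eqref{eqn:comp-best-user} and~\eqref{eqn:comp-aver-user} with the shorthand $F$ from~\eqref{def:F_function} gives $\cC_{u^*}(N,M) = O(N F(\Delta_{u^*}))$ and $\cC_{\mathrm{ave}}(N,M) = O(N F(\bar{\Delta}_0))$, since $F(\Delta_{u^*})$ and $F(\bar{\Delta}_0)$ absorb precisely the $\Delta^{-2}(\log\log\Delta^{-1} + \log(N/\delta))$ factors. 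Thus the whole task reduces to controlling the three terms of~\eqref{eqn:comp-gap}.

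First I would convert the two indicator sums in~\eqref{eqn:comp-gap} into the clean form~\eqref{eqn:comp-gap-simp} via the loose lower bound $S_z \ge z\log(z/\delta)$, which is reasonable because inserting $z$ items requires at least $\Omega(\log(z/\delta))$ queries per item to meet the per-item confidence $\delta/(n-1)$. For the second term, this lower bound forces $\ind\{S_z < 2M^2\log(NM/\delta)\} = 0$ once $z\log(z/\delta) \ge 2M^2\log(NM/\delta)$, i.e.\ for all but $\tilde{O}(M^2)$ indices $z$; since each surviving summand contributes $F(\bar{\Delta}_0) - F(\Delta_{u^*})$, the second term is $\tilde{O}\big(M^2(F(\bar{\Delta}_0) - F(\Delta_{u^*}))\big)$. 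For the third term, the same bound gives $\sqrt{M/S_z} \le \sqrt{M/(z\log(z/\delta))}$, while the indicator restricts the sum to $z \gtrsim M^2$; then $\sum_{z \gtrsim M^2}^{N} \sqrt{M/z} = \tilde{O}\big(\sqrt{M}(\sqrt{N} - M)\big)$ by comparison with $\int z^{-1/2}\,\ud z$, so the third term is $\tilde{O}\big(L(\cU_0)\sqrt{M}(\sqrt{N}-M)\big)$. This is exactly~\eqref{eqn:comp-gap-simp}.

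Finally I would feed in $M = o(\sqrt{N})$. Then $M^2 = o(N)$, so the second term becomes $o\big(N(F(\bar{\Delta}_0) - F(\Delta_{u^*}))\big)$; and $\sqrt{M}\sqrt{N} = o(N^{3/4}) = o(N)$ together with $M^{3/2} = o(N^{3/4}) = o(N)$ gives $\sqrt{M}(\sqrt{N}-M) = o(N)$, so the third term is $o(N)$ once we note that $L(\cU_0)$ is an instance-dependent constant with only logarithmic $N$-dependence (through $F$). Combining these yields the claimed $\cC_{\mathrm{Alg}}(N,M) = O(N F(\Delta_{u^*})) + o\big(N(F(\bar{\Delta}_0) - F(\Delta_{u^*}))\big) + o(N)$; and since $F$ is decreasing on $(0,1/2]$ and $\bar{\Delta}_0 \le \Delta_{u^*}$, the middle gap is nonnegative, so the last two terms are genuinely lower-order than $N F(\Delta_{u^*})$.

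The main obstacle is not any single deep step but rather making the two loose estimates airtight: justifying the lower bound $S_z \ge z\log(z/\delta)$ cleanly (this is where the per-insertion query cost must be pinned down), and verifying that $L(\cU_0)$ truly carries no polynomial dependence on $N$, since otherwise the ``$o(N)$'' claim on the third term would collapse. Once those two points are secured, the remaining inequalities are routine summation and $o$/$O$ manipulations.
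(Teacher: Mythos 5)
Your proposal is correct and follows essentially the same route as the paper's own argument: both reduce the claim to Theorem~\ref{theorem:complexity-gap}, use the crude lower bound $S_z \ge z\log(z/\delta) \ge z$ to show that only $\tilde{O}(M^2) = o(N)$ indices survive the indicator in the second term, and bound the third term via $\sum_{z}\sqrt{M/S_z} = O(\sqrt{MN}) = o(N)$ under $M = o(\sqrt{N})$. Even the two caveats you flag (justifying the $S_z$ lower bound and checking that $L(\cU_0)$ carries no polynomial dependence on $N$) are exactly the points the paper itself discusses at the end of its proof, so there is nothing substantively different to reconcile.
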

\begin{remark}
Note that if we set $\cU_0 = \{u^* \}$ for our algorithm, our algorithm will achieve exactly the same complexity as~\eqref{eqn:comp-best-user} indicates. Similarly, if we construct a new user $\bar{u}$ where $\Delta_{\bar{u}} = \bar{\Delta}_0$ and set $\cU_0 = \{ \bar{u}\}$, our algorithm will recover exactly~\eqref{eqn:comp-aver-user}.
By this argument and the fact that Big-$O$ notations hide no $M$, the first term in each equation actually has the same absolute constant factor. 
Therefore, our algorithm is indeed comparable with the best user.
\end{remark}

\begin{remark}
Notice that $F(x) \rightarrow +\infty$ when $x \rightarrow 0$. This means $\cC_{\mathrm{ave}}$ is very sensitive to the initial average accuracy margin $\bar{\Delta}_0$. In the case where there is only one best user $u^*$ and all other users have a near-zero margin  $\Delta_u \rightarrow 0$, $\cC_{\mathrm{ave}}$ can be very large compared with $\cC_{u^*}$.
\end{remark}
\begin{remark}
In the experiments, we notice that even with $N=10$ and $M=9$, after inserting the first item, each user has already been queried for enough times so that $S_2 \ge 2M^2 \log(NM/ \delta)$, which makes the second term in~\eqref{eqn:comp-gap} vanish.
\end{remark}

\section{Experiments}\label{sec:experiment}

In this section, we study the empirical performance of the proposed algorithm through a synthetic experiment. The following four algorithms are compared.
\begin{itemize}[leftmargin=*]
    \item \textbf{Non-adaptive user sampling} \citep{ren2019sample}: The original algorithm does not distinguish between users. For each comparison, we query a user selected uniformly at random.
    \item \textbf{Adaptive user sampling}: The proposed method.
    \item \textbf{Two stage ranking}: First, an arbitrary pair $(i,j)$ is chosen, and each user is queried for enough times so that the order can be determined with high probability. Second, based on the predictions of each user, a near-best user is identified. The ranking task is then performed by only querying the selected user. More details are presented in Section \ref{sec:two-stage} of the supplementary material.
    \item \textbf{Oracle}: Query only the best user as if it is known.
\end{itemize}

In our experiment, we use a similar setup as that of \citet{jin2020rank}. In particular, we consider a set of users $[M]$, whose accuracies are set by $p_u(i, j) = (1 + \exp(\gamma_u (s_j - s_i)))^{-1}$, for $u\in[M]$ and any items $i,j\in[N]$, where parameter $\gamma_u$ is used as an scaling factor of the user accuracy and $s_i, s_j$ are the utility scores of the corresponding items in the BTL model. The larger the scaling factor $\gamma_u$ is, the more accurate the user $u$ is. We set $s_i-s_j = 3$ if $i \prec j$ and $s_i - s_j = -3$ otherwise. Note that here we assume that the accuracy of user $u$ is the same for all pair of items $(i,j)$ as long as $i\prec j$. We assume that there are two distinct groups of users: the high-accuracy group in which the users have the same accuracy $\gamma_u = \gamma_B \in [0.5, 1.0, 2.5]$ in three different settings respectively; and the low-accuracy group in which the users have the same accuracy $\gamma_u = \gamma_A = 0.5$ in all settings. 
This set of $\gamma_u, s_i, s_j$ is chosen so that $p_u(i,j)$ for accurate users ranges from $0.55$ to $0.99$ and inaccurate users have a value close to $0.55$.

The number of items to be ranked ranges from 10 to 100. Each setting is repeated 100 times with randomly generated data. To showcase the effectiveness of active user selection, we tested a relatively adverse situation where only $3$ out of $M = 9$ users are highly accurate. The $p_u(i,j)$ is generated in a parameterized way controlled by $\gamma_u, s_i, s_j$. In each run, given the $\gamma_u$ for each user,

The average sample complexity over 100 runs and standard deviation are plotted in Figure \ref{fig:exp1}. Note that the standard deviation is visually indistinguishable compared to the average. Actual values can be found in the supplementary material. 
In most cases, the proposed method achieve nearly identical performance as the oracle one does, with only a small overhead. The gap is indistinguishable in the graph and the actual values are listed in the supplementary material. For two-stage algorithm, we observe a constant overhead regardless the accuracy of users. It may out perform the naive one if there exist enough highly accurate users such as in Figure \ref{fig:g25}. However, the situation is less favorable for the two-stage algorithm when the cost of finding the best user overwhelms the savings of queries due to increased accuracy as shown in Figure \ref{fig:g10}. It may even have an adverse effect in face of a user base with no expert users as shown in Figure \ref{fig:g05}. Nonetheless, our proposed method can adapt to each case and deliver near optimal performance. In our experiments every algorithm is able to recover the exact rank with respect to the ground truth, which is reasonable since the IIR algorithm is designed to output an exact rank. And due to the union bounds used to guarantee a high probability correct output, the algorithm tends to request more than enough queries so that we did not see the case when it output a non-exact rank.


\begin{figure}[H]
\centering
\subfigure[$\gamma_A=0.5$, $\gamma_B=2.5$]{\includegraphics[width=0.32\textwidth]{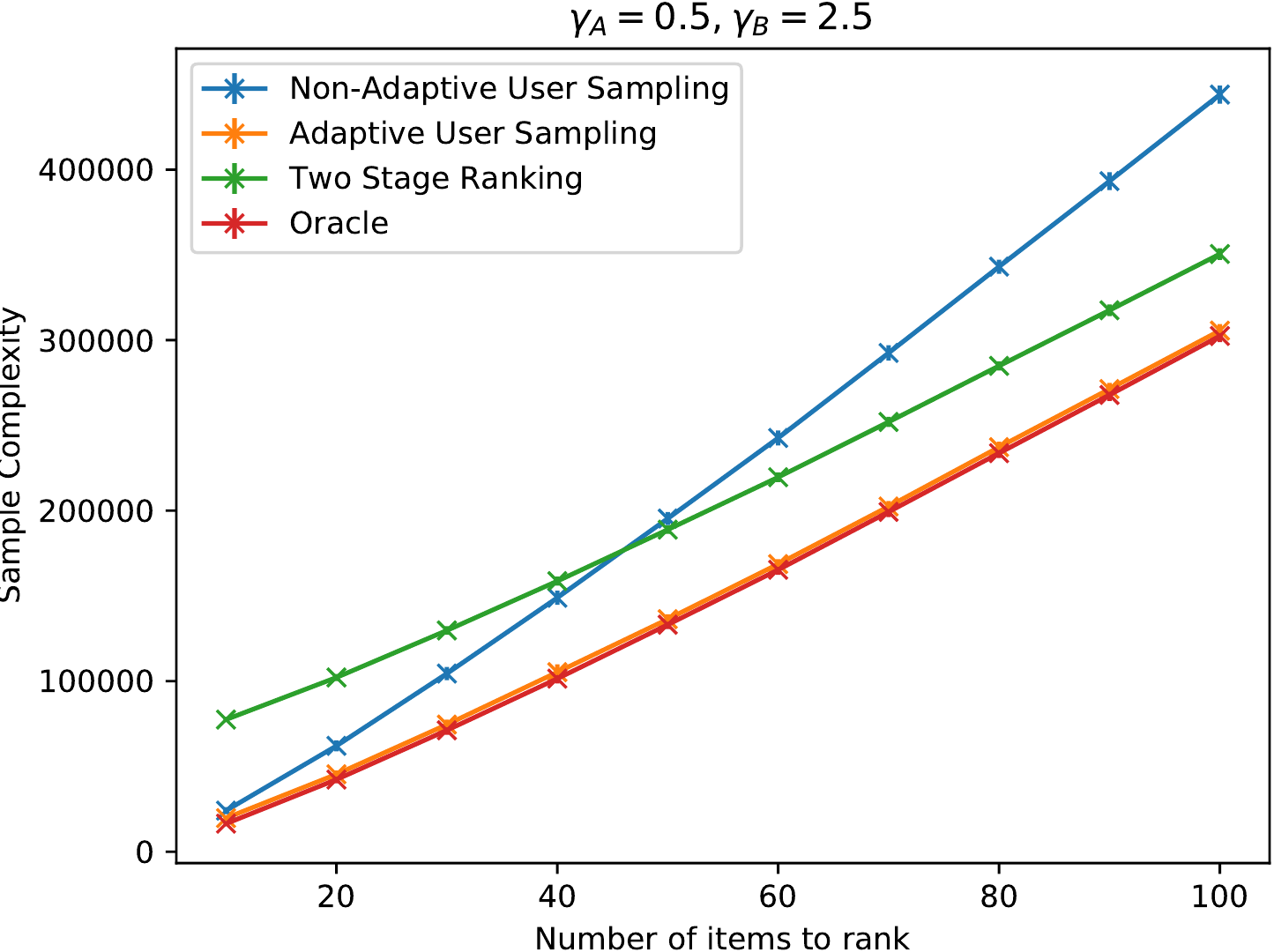}\label{fig:g25}}
\subfigure[$\gamma_A=0.5$, $\gamma_B=1.0$]{\includegraphics[width=0.32\textwidth]{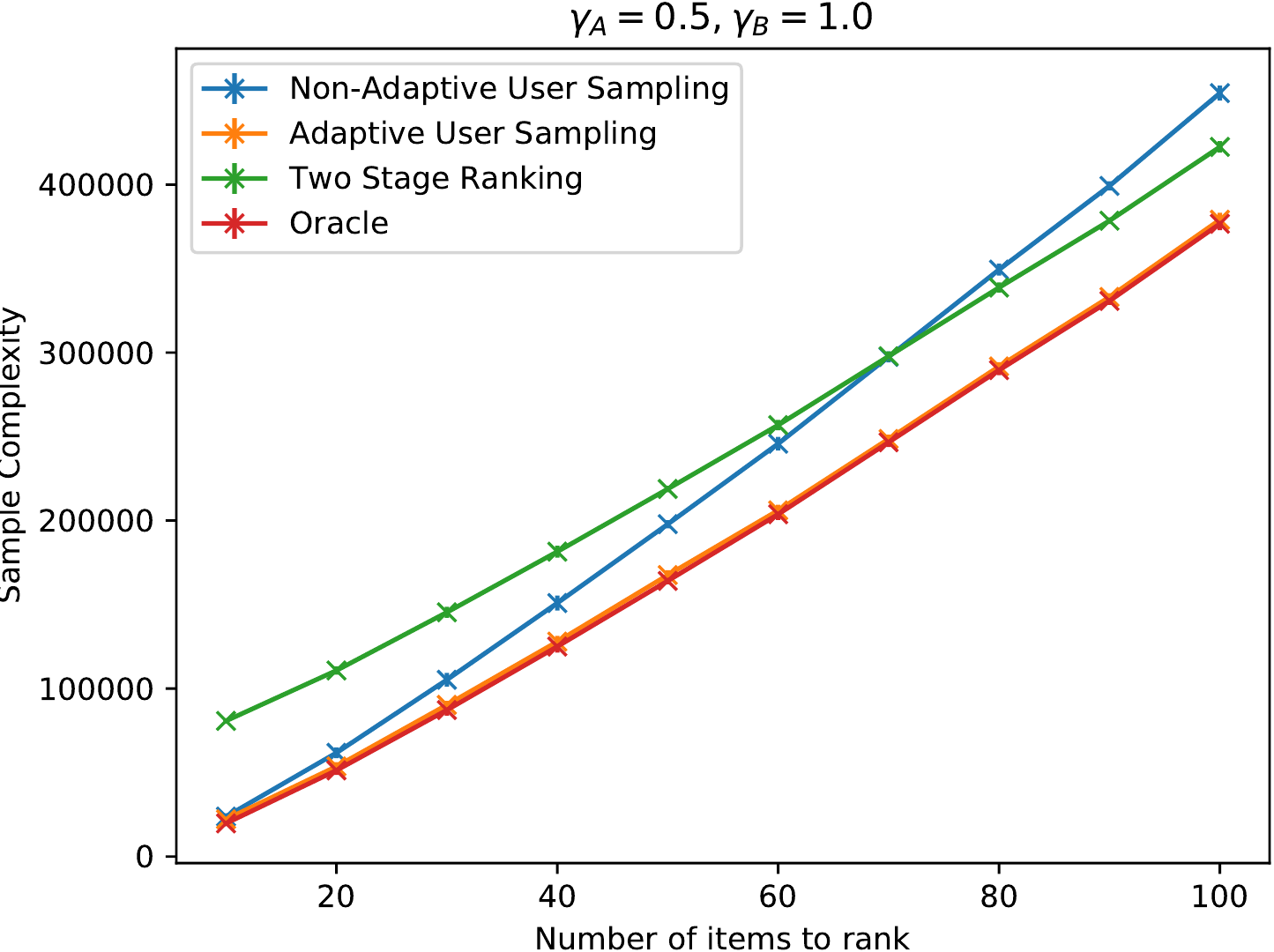} \label{fig:g10}}
\subfigure[$\gamma_A=0.5$, $\gamma_B=0.5$]{\includegraphics[width=0.32\textwidth]{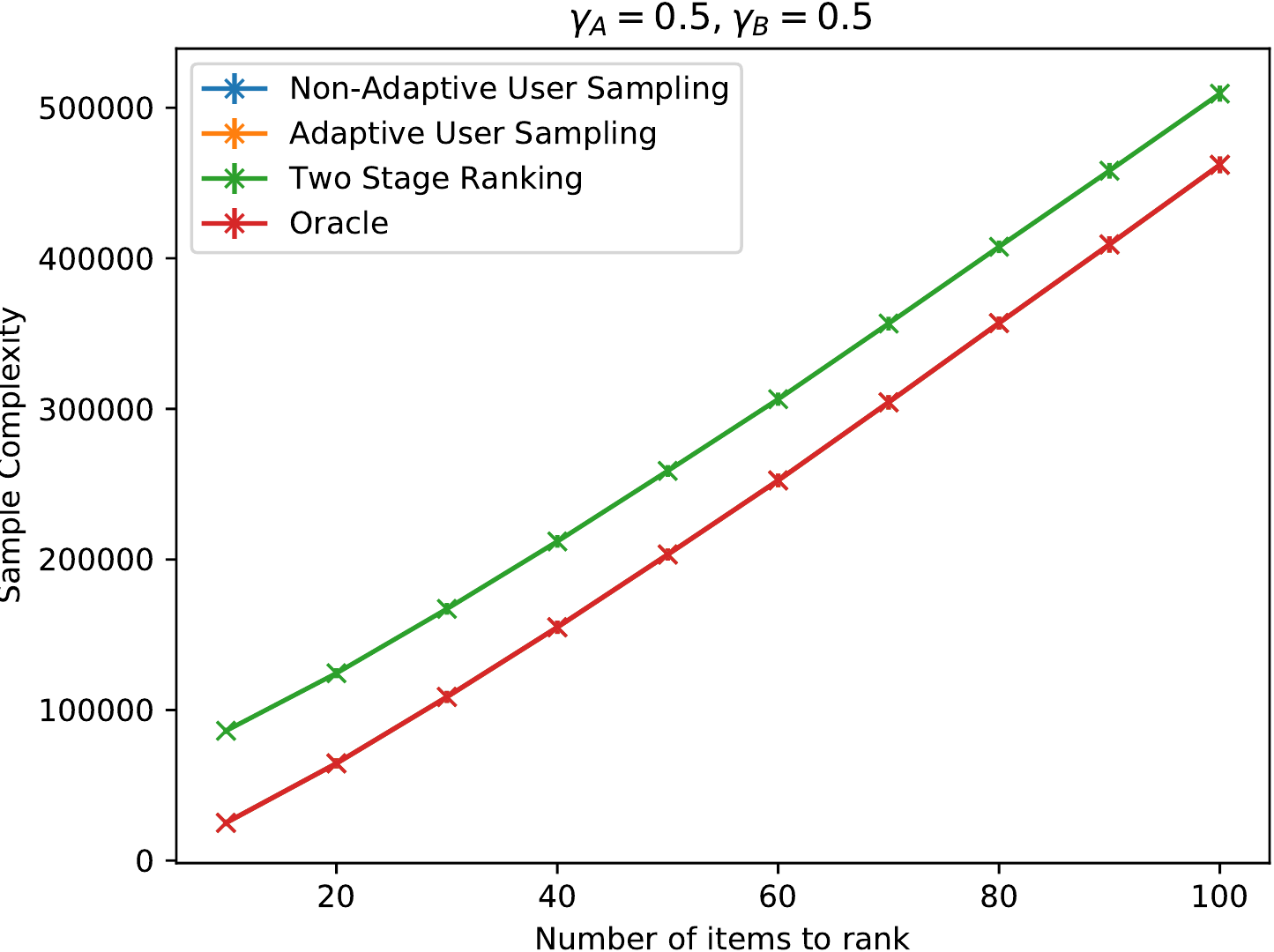} \label{fig:g05}}

\caption{Sample complexities v.s. number of items for all algorithms. (a) (b) and (c) are different heterogeneous user settings where the accuracy of two group of users differs.
\label{fig:exp1}}
\end{figure}


\section{Conclusions}\label{sec:conclusion}
In this paper, we study the heterogeneous rank aggregation problem, where noisy pairwise comparisons are provided by a group of users with different accuracy levels. We propose a new ranking algorithm based on the idea of arm elimination from multi-armed bandits. The algorithm can identify the best user and utilize this information to efficiently perform the ranking. 
Under the Bernoulli setting, we provide theoretical guarantees that our algorithm is comparable with the oracle algorithm that uses the best user to perform single-user ranking, and the gap between sample complexities of these two methods is only sublinear in the number of items.
We conduct thorough experiments and show that the proposed algorithm can perform as good as the oracle algorithm and is significantly more sample efficient than all baseline algorithms. One immediate and interesting future direction may be to extend our adaptive sampling algorithm to more complicated models such as the heterogeneous Bradley-Terry-Luce model and the heterogeneous  Thurstone
Case V model   \citep{jin2020rank}.

\newpage

\appendix
\section{More Details About the Proposed Algorithm}\label{sec:omitted_subrountine}
We borrow the definition of PIT  from  \citet{feige1994computing,ren2019sample}, based on which we can insert items to  a ranked list. Specifically, given a list of ranked items $S$ the PIT can be constructed using the following Algorithm \ref{alg:pit}. 

\begin{algorithm}
\caption{Build PIT}\label{alg:pit}
\textbf{Input parameters:} $S$ \\ 
\textbf{Data structure:} \textsc{Node} = $\{left, mind, right, lchild, rchild, parent \}$, $left, mid, right$ holds index values, $lchild, rchild, parent$ points to any other \textsc{Node}. \\
\textbf{Initialize:} $N = |S|$
\begin{algorithmic}[1]
    \STATE $X = \textsc{CreateEmptyNode}$ {\color{gray}{returns an empty Node with above mentioned data structure}}
    \STATE $X.\text{left} = -1$
    \STATE $X.\text{right} = |S|$
    \STATE $X.\text{mid} = \lfloor{(X.left + X.right)/2 \rfloor}$
    \STATE queue = [X]
    \WHILE {queue.\textsc{NotEmpty}}
        \STATE X = queue.\textsc{PopFront}
        \STATE $X.\text{mid} = \lfloor{(X.left + X.right)/2 \rfloor}$ 
        \IF {X.right - X.left $>$ 1}
            \STATE lnode = \textsc{CreateEmptyNode}
            \STATE lnode.left = X.left
            \STATE lnode.right = mid
            \STATE X.lchild = lnode
            \STATE rnode = \textsc{CreateEmptyNode}
            \STATE queue.append(lnode)
            \STATE rnode.left = X.mid
            \STATE rnode.right = X.right
            \STATE X.rchild = rnode
            \STATE queue.append(rnode)
        \ENDIF
    \ENDWHILE
    \STATE replace $-1$ with $-\infty$, $|S|$ with $\infty$ in each \textsc{Node}.left and \textsc{Node}.right.
\end{algorithmic}
\end{algorithm}

For the completeness of our paper, we also present the subroutines \textsc{Iterative-Attempting-Insertion} (IAI) and \textsc{Attempting-Insertion} (ATI) in this section, which are omitted in Section \ref{sec:alg_elim} due to space limit. In particular, IAI is displayed in Algorithm \ref{alg:iai} and ATI is displayed in Algorithm \ref{alg:ati}. Both algorithms are proposed by \citet{ren2019sample} for adaptive sampling in the single user setting. 

\begin{algorithm}[ht!]
\caption{Subroutine: \textsc{Iterative Attempt To Insert}(IAI)}\label{alg:iai}
\textbf{Input parameters:} $(i,S,\delta)$ \\ 
		\textbf{Initialize:} For all $\tau\in\mathbb{Z}^+$, set $\epsilon_\tau={2^{-(\tau+1)}}$ and $\delta_\tau=\frac{6\delta}{\pi^2 \tau^2}$; $t\gets 0$; $Flag\gets$ \textit{unsure};
		\begin{algorithmic}[1]
			\REPEAT 
			    \STATE $t\gets t+1$;
			    \STATE $Flag\gets$ATI$(i,S,\epsilon_z,\delta_z)$;
			\UNTIL{$Flag=$}\textit{ inserted} 
		\end{algorithmic}
\end{algorithm}

\begin{algorithm}[ht!]
\caption{Subroutine: \textsc{Attempt To Insert}(ATI).}\label{alg:ati}
    \textbf{Input parameters:} $(i,S,\epsilon, \delta)$ \\
	\textbf{Initialize:}
	Let $z$ be a PIT constructed from $S$, $h\gets\lceil 1 + \log_2(1+|S|)\rceil$, the depth of $z$ \\
	For all leaf nodes $u$ of $z$, initialize $c_u\gets 0$; Set $t^{\max}\gets \lceil\max\{4h, \frac{512}{25}\log\frac{2}{\delta}\}\rceil$ and $q\gets \frac{15}{16}$
	\begin{algorithmic}[1]
		\STATE $X\gets$ the root node of $z$;
		\FOR{$t \gets$ $1$ to $t^{\max}$}
		\IF{$X$ is the root node}
		    \IF{ATC$(i,X\mbox{.mid},\epsilon,1-q)$ = $i$} 
		        \STATE $X\gets X\mbox{.rchild}$ 
		    \ELSE
		        \STATE $X\gets X\mbox{.lchild}$
		    \ENDIF
		\ELSIF{$X$ is a leaf node}{
			\IF{ATC$(i,X\mbox{.left},\epsilon,1-\sqrt{q})=i$ $\land$ ATC$(i,X\mbox{.right},\epsilon,1-\sqrt{q})= X\mbox{.right}$}
			    \STATE $c_X\gets c_X+1$
			    \IF{$c_X> b^t := \frac{1}{2}t + \sqrt{\frac{t}{2}\log\frac{\pi^2 t^2}{3\delta}} + 1$}
			        \STATE Insert $i$ into the corresponding interval of $X$ and \RETURN {\textit{inserted}}
			    \ENDIF
    		\ELSIF{$c_X>0$} 
    		    \STATE $c_X\gets c_X-1$
    		\ELSE
    			\STATE $X\gets X.\mbox{parent}$
    		\ENDIF
		}
		\ELSE
		    \IF{ATC$(i,X\mbox{.left},\epsilon,1-\sqrt[3]{q})=X\mbox{.left}$ $\lor$ ATC$(i,X\mbox{.right},\epsilon,1-\sqrt[3]{q})=i$}
		        \STATE $X\gets X.\mbox{parent}$
		    \ELSIF{ATC$(i,X\mbox{.mid},\epsilon,1-\sqrt[3]{q})=i$}
		        \STATE $X\gets X\mbox{.rchild}$
		    \ELSE
		        \STATE $X\gets X\mbox{.lchild}$
		    \ENDIF
		\ENDIF
		\ENDFOR
		\IF{there is a leaf node $u$ with $c_u\geq 1+\frac{5}{16}t^{\max}$}
		    \STATE Insert $i$ into the corresponding interval of $u$ and \RETURN {\textit{inserted}}
		\ELSE
		    \RETURN \textit{unsure}
		\ENDIF
	\end{algorithmic}
\end{algorithm}

\section{A Two Stage Algorithm}\label{sec:two-stage}

In this section, we present a simple algorithm that performs user selection and item ranking in two separate stages. We then compare it with the baseline algorithms as well as the active sampling algorithm we proposed in Section \ref{sec:alg_elim}. 

\subsection{Algorithm Outline}
The two-stage algorithm first performs user-selection and then item-ranking. In the user-selection stage, we search for an $\alpha$-optimal user for some small $\alpha$. Next, we discard all other users and rank items by only taking queries from the selected user. Ranking with a single user is done by the Iterative-Insertion-Ranking (IIR) algorithm. Recall from Proposition~\ref{prop:comIIR} that IIR takes a set $\cN$ of $N$ items, a real number $\delta$ and a user $u$ (accuracy $\Delta_u$) as inputs, and outputs an exact ranking of $\cN$ with probability at least $1-\delta$ by taking 
\begin{align}
\Theta
    \big(
    N \Delta_{u}^{-2}
    \big(
    \log \log \Delta_{u}^{-1}
    +
    \log(N/\delta)
    \big)
    \big),    \label{eq:iircomplexity}
\end{align}
queries from user $u$. 

The user-selection stage, Naive-User-Selection (NUS), is presented in Algorithm~\ref{alg:naiveuserselection}. In NUS, we first take an arbitrary pair of items and run the IIR algorithm on them to determine the order. Note that at this point, users have not been distinguished yet. So we take each query from a randomly chosen user, and this is equivalent of querying the user $\bar u$ with accuracy $\bar \Delta$. After determining the order of the chosen item pair, the problem of finding an $\alpha$-optimal user is reduced to pure exploration of an $\alpha$-optimal arm in the multi-armed bandit problem, for which we adopt the Median-Elimination (ME) algorithm from~\cite{even2002pac}. ME takes a set $\cU$ of users, real numbers $\alpha,\delta$, and two ranked items as inputs, and outputs an $\alpha$-optimal user in $\cU$ with probability at least $1-\delta$ using 
\begin{align}
    \Theta\parenv{\frac{|\cU|}{\alpha^2}\log\frac{1}{\delta}} \label{eq:MEcomplexity}
\end{align} 
comparisons~\citep{even2002pac,mannor2004sample}.

\begin{algorithm}
\caption{Subroutine: \textsc{Naive-User-Selection}$(\cU,\alpha,\delta_i,\delta_m, i,j)$}\label{alg:naiveuserselection}
\begin{algorithmic}
    \STATE \textbf{input:} set of users $\cU$, desired near-optimal level $\alpha$, confidence level $\delta_i$ of initial ranking, confidence level $\delta_m$ of user selection, two items $i,j\in\cN$.
    \STATE $[i',j'] \leftarrow$ Iterative-Insertion-Ranking$(\{i,j\},\delta_i,\bar u)$. 
    \STATE \textbf{output:} Median-Elimination$(\cU,\alpha,\delta_m, [i',j'])$
\end{algorithmic}
\end{algorithm}
\begin{theorem}\label{thm:NUS}
For any $\delta_i,\delta_m\in(0,\frac{1}{2}),\alpha\in (0,\Delta_{u^*})$, with probability at least $1-\delta_i-\delta_m$, subroutine Naive-User-Selection$(\cU,\alpha,\delta_i,\delta_m,i,j)$ outputs a global $\alpha$-optimal user after
\begin{align}
    \Theta\parenv{{\bar\Delta}^{-2}\parenv{\log\log{\bar\Delta}^{-1}+\log\frac{1}{\delta_i}} + \frac{M}{\alpha^2}\log\frac{1}{\delta_m}} \label{eq:NUScomplexity}
\end{align}
comparisons.
\end{theorem}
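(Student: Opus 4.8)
The plan is to handle the two stages of Naive-User-Selection separately, establishing correctness by combining two high-probability events and establishing the sample complexity by adding the costs of the two stages, matching the additive form of~\eqref{eq:NUScomplexity}. Let $E_i$ denote the event that the call to IIR on $\{i,j\}$ returns the correct order $[i',j']$, and let $E_m$ denote the event that Median-Elimination returns a global $\alpha$-optimal user. Since $\cU=[M]$, the target bound $1-\delta_i-\delta_m$ will come from showing $\Pr(E_i)\ge 1-\delta_i$ and $\Pr(E_m\mid E_i)\ge 1-\delta_m$ and combining them.

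First I would bound the cost of the initial ranking stage. Because users have not yet been distinguished, each comparison is drawn from a uniformly random user, which is exactly equivalent to querying a single fictitious user $\bar u$ whose accuracy is $\bar\Delta=\frac{1}{M}\sum_{u\in[M]}\Delta_u$ (a random query is correct with probability $\frac{1}{M}\sum_u(\Delta_u+\frac12)=\bar\Delta+\frac12$). Invoking Proposition~\ref{prop:comIIR} with $N=2$ items, user $\bar u$, and confidence $\delta_i$ gives cost $\Theta\big(2\bar\Delta^{-2}(\log\log\bar\Delta^{-1}+\log(2/\delta_i))\big)=\Theta\big(\bar\Delta^{-2}(\log\log\bar\Delta^{-1}+\log(1/\delta_i))\big)$ and simultaneously guarantees $\Pr(E_i)\ge 1-\delta_i$. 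For the second stage I would apply the Median-Elimination guarantee~\eqref{eq:MEcomplexity} with $|\cU|=M$ and confidence $\delta_m$, yielding $\Theta\big((M/\alpha^2)\log(1/\delta_m)\big)$ comparisons. Summing the two costs reproduces~\eqref{eq:NUScomplexity}.

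For correctness I would reduce user selection to best-arm identification: treat each user $u$ as an arm whose pull returns reward $1$ if $u$'s comparison agrees with the order $[i',j']$ and $0$ otherwise. Conditioned on $E_i$, the order supplied to Median-Elimination is the \emph{true} order, so the mean reward of arm $u$ equals $p_u(i',j')=\Delta_u+\frac12$, which is monotone in $\Delta_u$; hence an $\alpha$-optimal arm (within $\alpha$ of the largest mean) is precisely an $\alpha$-optimal user in the sense of the definition, and since $\cU=[M]$ it is a global $\alpha$-optimal user. The Median-Elimination guarantee then yields $\Pr(E_m\mid E_i)\ge 1-\delta_m$, so that
$$\Pr(E_i\cap E_m)=\Pr(E_i)\,\Pr(E_m\mid E_i)\ge(1-\delta_i)(1-\delta_m)\ge 1-\delta_i-\delta_m,$$
which is the claimed probability.

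The main obstacle is the coupling between the two stages: the correctness of Median-Elimination presumes that the ground-truth ordering is known, but here that ordering is itself correct only on $E_i$. The care needed is to invoke the bandit guarantee \emph{conditionally} on $E_i$ rather than unconditionally --- were IIR to err, the rewards would be inverted and Median-Elimination would reliably return a near-\emph{worst} user --- which is exactly what forces the conditional-probability step above instead of a naive union bound over two unrelated events. Beyond this, the monotone translation $p_u=\Delta_u+\frac12$ makes ``$\alpha$-optimal arm'' and ``$\alpha$-optimal user'' literally the same condition, and the hypothesis $\alpha<\Delta_{u^*}$ guarantees that the returned user has accuracy $\Delta_u>\Delta_{u^*}-\alpha>0$ so that it remains useful downstream; everything else is a direct substitution into Proposition~\ref{prop:comIIR} and~\eqref{eq:MEcomplexity}.
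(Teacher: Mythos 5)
Your proposal is correct and follows essentially the same route as the paper: the paper's proof is exactly "union bound over the failures of IIR and Median-Elimination, then sum the two complexities from \eqref{eq:iircomplexity} and \eqref{eq:MEcomplexity}," which is what you do, just with the conditioning on the first stage's success and the reduction to best-arm identification spelled out explicitly. Your added care about invoking the Median-Elimination guarantee conditionally on the correct ordering is a faithful elaboration of the paper's terse union-bound argument rather than a different approach.
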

The confidence bound in Theorem~\ref{thm:NUS} follows from applying the union bound on the called subroutines IIR and ME. The comparison complexity guarantee is obtained by summing up the complexities of IIR and ME given by \eqref{eq:iircomplexity} and \eqref{eq:MEcomplexity}.

After finding an $\alpha$-optimal user, the total ranking can be obtained by directly applying the IIR algorithm, presented in Algorithm~\ref{alg:TSR}.
\begin{algorithm}
\caption{\textsc{Two-Stage-Ranking}$(\cN,\cU,\alpha,\delta_i,\delta_m,\delta_r)$}
\begin{algorithmic}\label{alg:TSR}
    \STATE \textbf{input:} set of items $\cN$, set of users $\cU$, desired near-optimal level $\alpha$, confidence level $\delta_i$ of the initial ranking, confidence level $\delta_m$ of user selection, confidence level $\delta_r$ of the final ranking.
    \STATE Let $i,j$ be two arbitrary items.
    \STATE $u^\alpha\leftarrow$ Naive-User-Selection$(\cU,\alpha,\delta_i,\delta_m,i,j)$.
    \STATE \textbf{output:} Iterative-Insertion-Ranking$(\cN,\delta_r,u^\alpha)$
\end{algorithmic}
\end{algorithm}

\begin{theorem}\label{thm:TSR}
For any $\delta_i,\delta_m,\delta_r\in(0,\frac{1}{2}),\alpha\in(0,\Delta_{u^*})$, with probability at least $1-\delta_i-\delta_r-\delta_m$, Two-Stage-Ranking$(\cN,\cU,\alpha,\delta_i,\delta_m,\delta_r)$ outputs the exact ranking of $\cN$, and consumes
\begin{align*}
 \Theta\parenv{\!{\bar\Delta}^{-2}\!\parenv{\log\log{\bar\Delta}^{-1}\!+\!\log \frac{1}{\delta_i}} \!+\! \frac{M}{\alpha^2}\log \frac{1}{\delta_m}\!+\!N\parenv{\Delta_{u^*}\!-\!\alpha}^{-2}\!\parenv{\log\log \parenv{\Delta_{u^*}\!-\!\alpha}^{-1} \!+\! \log\frac{N}{\delta_r}}}
\end{align*}
comparisons.
\end{theorem}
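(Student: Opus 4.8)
The plan is to treat Two-Stage-Ranking as the sequential composition of its two black-box subroutines and to combine their guarantees in the obvious way: correctness follows from a union bound over the failure events of the two stages, while the total comparison count is obtained by adding the two per-stage complexities after controlling the accuracy of the selected user. No new probabilistic machinery is needed beyond Theorem~\ref{thm:NUS} and the single-user IIR guarantee of Proposition~\ref{prop:comIIR}.

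First I would settle correctness. Let $E_1$ be the event that Naive-User-Selection returns a global $\alpha$-optimal user $u^\alpha$; Theorem~\ref{thm:NUS} gives $\Pr(E_1^c)\le \delta_i+\delta_m$. Let $E_2$ be the event that the final call Iterative-Insertion-Ranking$(\cN,\delta_r,u^\alpha)$ outputs the exact ranking. The point to handle carefully is that $u^\alpha$ is itself random, but the IIR guarantee holds for an \emph{arbitrary fixed} single user; hence, conditioning on any realized output $u$ of the first stage, $\Pr(E_2^c\mid u^\alpha=u)\le\delta_r$, and averaging over the choice of $u^\alpha$ gives $\Pr(E_2^c)\le\delta_r$. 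A union bound then yields $\Pr\big((E_1\cap E_2)^c\big)\le \Pr(E_1^c)+\Pr(E_2^c)\le \delta_i+\delta_m+\delta_r$, which is exactly the claimed confidence.

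Next I would total the comparison complexity. Since the two stages draw disjoint query budgets, their counts simply add. The user-selection stage contributes the first two terms, ${\bar\Delta}^{-2}(\log\log{\bar\Delta}^{-1}+\log(1/\delta_i))$ and $(M/\alpha^2)\log(1/\delta_m)$, directly from \eqref{eq:NUScomplexity}. For the ranking stage I would use $\alpha$-optimality: on $E_1$ the definition gives $\Delta_{u^\alpha}\ge\Delta_{u^*}-\alpha$, which is strictly positive because $\alpha\in(0,\Delta_{u^*})$. Substituting $\Delta_{u^\alpha}$ into the IIR bound \eqref{eq:iircomplexity} and invoking the fact that $x\mapsto x^{-2}(\log\log x^{-1}+\log(N/\delta_r))$ is decreasing in $x$ (it blows up as $x\to 0$), the lower bound $\Delta_{u^\alpha}\ge\Delta_{u^*}-\alpha$ lets me replace the unknown accuracy by the worst-case surrogate, producing the third term $N(\Delta_{u^*}-\alpha)^{-2}(\log\log(\Delta_{u^*}-\alpha)^{-1}+\log(N/\delta_r))$. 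Summing the three contributions gives the stated rate.

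Because this is a composition argument, there is no isolated hard technical step; the only genuine subtlety is the treatment of the \emph{random} user $u^\alpha$. Its true accuracy $\Delta_{u^\alpha}$ is unknown and controlled only from below, so the third term is in fact an upper bound attained in the worst case $\Delta_{u^\alpha}=\Delta_{u^*}-\alpha$; the monotonicity of $F$ is precisely what justifies passing from $\Delta_{u^\alpha}$ to this surrogate while preserving the $\Theta(\cdot)$ form. I would also flag, for completeness, that the confidence parameters must be allocated with $\delta_i+\delta_m+\delta_r<1$, and that the hypothesis $\alpha<\Delta_{u^*}$ is exactly the condition keeping $\Delta_{u^*}-\alpha>0$ so that the final rate remains finite.
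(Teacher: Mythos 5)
Your proposal is correct and follows the same route as the paper: a union bound over the failure events of Naive-User-Selection and the final IIR call gives the $1-\delta_i-\delta_m-\delta_r$ confidence, and the total complexity is the sum of \eqref{eq:NUScomplexity} and \eqref{eq:iircomplexity} with the selected user's accuracy lower-bounded by $\Delta_{u^*}-\alpha$. Your added care about conditioning on the random output $u^\alpha$ and the monotonicity of $F$ only makes explicit what the paper's one-line proof leaves implicit.
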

The confidence bound in Theorem~\ref{thm:TSR} again follows directly from applying the union bound on the called subroutines. The comparison complexity guarantee is obtained by summing up the complexities of NUS and IIR given in \eqref{eq:NUScomplexity} and \eqref{eq:iircomplexity}.

\subsection{Complexity Analysis}
In this subsection, we provide a more detailed discussion on the complexity of the two-stage algorithm described in Algorithm \ref{alg:TSR}. 
Recall that we define 
\begin{align*}
    F(x) = x^{-2}\parenv{\log\log x^{-1}+\log\parenv{N/\delta}}.
\end{align*}
Let $\cC_{\mathrm{tsr}}(\alpha)$ be the complexity of the two-stage algorithm given by Theorem~\ref{thm:TSR}. Since the complexities are only given in the form of their order of magnitudes, in the following analysis we are safe to assume $\delta_i=\delta_m=\delta_r=\delta/3$. It follows that with probability at least $1-\delta$, TSR outputs the exact ranking and consumes number of comparisons
\begin{align*} 
\cC_{\mathrm{tsr}}(\alpha)=
\Theta\parenv{{\bar\Delta}^{-2}\!\parenv{\log\log{\bar\Delta}^{-1}\!+\!\log \frac{1}{\delta}} + \frac{M}{\alpha^2}\log \frac{1}{\delta}+N F\parenv{\Delta_{u^*}-\alpha}}.
\end{align*}
Note that the factor of 3 on $\delta$ is absorbed into the $\Theta$ notation. 

With $\delta$ being a constant, the following propositions can be made.
\begin{proposition}\label{prop:tsrlargeM}
When $M=\omega(N\log N)$ or $\alpha=o\parenv{\sqrt{\frac{M}{N\log N}}}$,
\begin{align*}
    \cC_{\mathrm{tsr}}(\alpha)=\omega(N\log N) + \Theta\parenv{NF(\Delta_{u^*}-\alpha)}. 
\end{align*}
\end{proposition}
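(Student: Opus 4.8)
The plan is to begin from the closed-form complexity derived immediately before the proposition,
\begin{align*}
\cC_{\mathrm{tsr}}(\alpha)=\Theta\parenv{{\bar\Delta}^{-2}\parenv{\log\log{\bar\Delta}^{-1}+\log \tfrac{1}{\delta}} + \tfrac{M}{\alpha^2}\log \tfrac{1}{\delta}+N F\parenv{\Delta_{u^*}-\alpha}},
\end{align*}
and to show that under either hypothesis its three summands collapse into the claimed form. I would treat the three terms separately.

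First I would observe that, with $\delta$ held constant and $\bar\Delta\in(0,1/2]$ an instance parameter that does not grow with $N$ or $M$, the first summand ${\bar\Delta}^{-2}(\log\log{\bar\Delta}^{-1}+\log(1/\delta))$ is $\Theta(1)$; crucially it carries only $\log(1/\delta)$ and not $\log(N/\delta)$, so it has no dependence on $N$. Being bounded, it is dominated by any unboundedly growing term and will be absorbed in the end.

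The main step is to show that the middle summand $\frac{M}{\alpha^2}\log\frac{1}{\delta}=\Theta\parenv{M/\alpha^2}$ is $\omega(N\log N)$ under either hypothesis. In the first case $M=\omega(N\log N)$; since $\alpha\in(0,\Delta_{u^*})$ with $\Delta_{u^*}\le 1/2$, we have $\alpha^{-2}\ge 4$, and hence $M/\alpha^2\ge 4M=\omega(N\log N)$. In the second case $\alpha=o(\sqrt{M/(N\log N)})$ gives, upon squaring, $\alpha^2=o(M/(N\log N))$, which is equivalent to $M/\alpha^2=\omega(N\log N)$ directly. Either way the middle summand is $\omega(N\log N)$.

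Finally I would assemble the pieces: the first summand is $O(1)$, the middle summand is $\omega(N\log N)$ and thus subsumes that constant, and the last summand is exactly $\Theta(NF(\Delta_{u^*}-\alpha))$. This yields $\cC_{\mathrm{tsr}}(\alpha)=\omega(N\log N)+\Theta(NF(\Delta_{u^*}-\alpha))$, as claimed. The argument is essentially asymptotic bookkeeping; the only point needing care is translating the two hypotheses into the single lower bound $M/\alpha^2=\omega(N\log N)$, where I rely on the instance bound $\alpha<\Delta_{u^*}\le 1/2$ to let the first hypothesis alone force the conclusion without any control on $\alpha$ itself.
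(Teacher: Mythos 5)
Your proof is correct and follows the same route the paper intends: the proposition is an immediate consequence of the displayed formula for $\cC_{\mathrm{tsr}}(\alpha)$, and the only substantive step is showing the user-selection term $M/\alpha^2$ is $\omega(N\log N)$ under either hypothesis, which you do cleanly (including the small but necessary observation that $\alpha<\Delta_{u^*}\le 1/2$ handles the first case without any assumption on $\alpha$). Nothing is missing.
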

When the number of users $M$ is too large or we require $\alpha$ to be too small, the number of comparisons used in user selection becomes even larger than doing ranking naively, and is undesirable.
\begin{proposition}\label{prop:tsrgood}
If $M=O(N)$ and $\alpha=\omega\parenv{\sqrt{\frac{M}{N\log N}}}\cap o(1)$, then
\begin{align*}
    \cC_{\mathrm{tsr}}(\alpha) = \Theta\parenv{NF(\Delta_{u^*})} + o(N\log N) +O(1).
\end{align*}
\end{proposition}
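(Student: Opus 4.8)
The plan is to start from the explicit complexity expression for $\cC_{\mathrm{tsr}}(\alpha)$ derived immediately after Theorem~\ref{thm:TSR}, namely
\begin{align*}
\cC_{\mathrm{tsr}}(\alpha)=
\Theta\parenv{{\bar\Delta}^{-2}\parenv{\log\log{\bar\Delta}^{-1}+\log \tfrac{1}{\delta}} + \tfrac{M}{\alpha^2}\log \tfrac{1}{\delta}+N F\parenv{\Delta_{u^*}-\alpha}},
\end{align*}
and to bound each of the three additive terms separately under the hypotheses $M=O(N)$ and $\alpha=\omega(\sqrt{M/(N\log N)})\cap o(1)$, treating $\delta$ as a constant. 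Before anything else, I would record that the admissible window for $\alpha$ is nonempty \emph{because} $M=O(N)$: this gives $\sqrt{M/(N\log N)}=O(1/\sqrt{\log N})=o(1)$, so a value of $\alpha$ satisfying both $\alpha=\omega(\sqrt{M/(N\log N)})$ and $\alpha=o(1)$ indeed exists.

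First I would dispose of the initial-ranking term. Since $\bar\Delta$ is an instance-dependent constant independent of $N$ and $\delta$ is constant, the quantity ${\bar\Delta}^{-2}(\log\log{\bar\Delta}^{-1}+\log(1/\delta))$ is $\Theta(1)$, contributing the $O(1)$ summand. Next I would treat the user-selection (Median-Elimination) term $\tfrac{M}{\alpha^2}\log(1/\delta)$. The hypothesis $\alpha=\omega(\sqrt{M/(N\log N)})$ is equivalent to $\alpha^2(N\log N)/M\to\infty$, hence $M/\alpha^2=o(N\log N)$; with $\delta$ constant this term is $o(N\log N)$, the second summand.

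The remaining and most delicate term is the final-ranking cost $N F(\Delta_{u^*}-\alpha)$, for which the goal is to show it equals $\Theta(N F(\Delta_{u^*}))$. The key observation is that $\alpha=o(1)$, so as the instance size grows the argument $\Delta_{u^*}-\alpha$ converges to the fixed positive constant $\Delta_{u^*}$ from below. I would then argue $F(\Delta_{u^*}-\alpha)/F(\Delta_{u^*})\to 1$: writing $F(x)=x^{-2}(\log\log x^{-1}+\log(N/\delta))$, the prefactor $(\Delta_{u^*}-\alpha)^{-2}\to\Delta_{u^*}^{-2}$ and $\log\log(\Delta_{u^*}-\alpha)^{-1}\to\log\log\Delta_{u^*}^{-1}$ are both bounded and convergent, while the $\log(N/\delta)$ factor is common to numerator and denominator and dominates as $N\to\infty$. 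Thus the ratio tends to $1$, giving $N F(\Delta_{u^*}-\alpha)=\Theta(N F(\Delta_{u^*}))$. Summing the three bounds yields the claim.

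The main obstacle is this last step: one must be careful that the implicit convergence in $\alpha=o(1)$ is enough to collapse $F(\Delta_{u^*}-\alpha)$ onto $F(\Delta_{u^*})$ up to absolute constants, and that the hidden $\Theta(\cdot)$ constant stays finite and independent of the instance size. The essential point to verify is that $\Delta_{u^*}-\alpha$ remains bounded away from $0$ for all large instances, which holds since $\alpha\to 0$ while $\Delta_{u^*}>0$ is fixed; this prevents the prefactor $(\Delta_{u^*}-\alpha)^{-2}$ from blowing up and is precisely what keeps the leading term at $\Theta(NF(\Delta_{u^*}))$ rather than merely $\Theta(NF(\Delta_{u^*}-\alpha))$.
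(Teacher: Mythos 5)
Your proposal is correct and follows exactly the term-by-term analysis that the paper implicitly relies on (the paper states Proposition~\ref{prop:tsrgood} without an explicit proof, immediately after displaying the three-term expression for $\cC_{\mathrm{tsr}}(\alpha)$): the initial-ranking term is $O(1)$, the Median-Elimination term is $o(N\log N)$ by the lower bound on $\alpha$, and $NF(\Delta_{u^*}-\alpha)=\Theta(NF(\Delta_{u^*}))$ because $\alpha=o(1)$ keeps $\Delta_{u^*}-\alpha$ bounded away from $0$ while the common $\log(N/\delta)$ factor dominates. Your added checks that the admissible window for $\alpha$ is nonempty and that the hidden constants stay uniform are sound and slightly more careful than the paper's presentation.
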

By the preceding proposition, when $M=O(N)$ and $\alpha=\omega\parenv{\sqrt{\frac{M}{N\log N}}}\cap o(1)$,
the two-stage algorithm has complexity $\Theta\parenv{\cC_{\Delta_{u^*}}}$ plus lower order terms. Therefore, we do not need to spend too many comparisons in user selection while still achieving a ranking performance close to optimal.
\begin{proposition}
If $M=O(N)$ and $\alpha$ is a constant,
\begin{align*}
 \cC_{\mathrm{tsr}}(\alpha) = \Theta\parenv{NF(\Delta_{u^*}-\alpha)} + O(M) .
\end{align*}
\end{proposition}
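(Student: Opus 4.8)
The plan is to invoke the closed-form bound for $\cC_{\mathrm{tsr}}(\alpha)$ recorded immediately before Proposition~\ref{prop:tsrlargeM}, which (after setting $\delta_i=\delta_m=\delta_r=\delta/3$ and treating $\delta$ as a constant, as stipulated in that paragraph) reads
\[
\cC_{\mathrm{tsr}}(\alpha)=\Theta\!\left({\bar\Delta}^{-2}\big(\log\log{\bar\Delta}^{-1}+\log(1/\delta)\big)+\frac{M}{\alpha^2}\log(1/\delta)+NF(\Delta_{u^*}-\alpha)\right),
\]
and then to simplify the three additive contributions under the hypotheses $M=O(N)$, $\alpha$ a fixed constant, and $\delta$ a fixed constant. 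No new probabilistic argument is needed: the correctness guarantee and the $\Theta$-bound are already furnished by Theorem~\ref{thm:TSR}, so the work is purely asymptotic bookkeeping.

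First I would treat the instance parameters $\bar\Delta$ and $\Delta_{u^*}$ as fixed constants of the problem (bounded away from $0$, and with $\alpha\in(0,\Delta_{u^*})$ as required by Theorem~\ref{thm:TSR}, so that $\Delta_{u^*}-\alpha$ is a positive constant); this is the same convention under which the earlier propositions are stated. Under this convention the first term ${\bar\Delta}^{-2}(\log\log{\bar\Delta}^{-1}+\log(1/\delta))$ carries no dependence on $N$ or $M$ and is therefore $O(1)$. For the second term, since $\alpha$ and $\delta$ are constants, $\frac{M}{\alpha^2}\log(1/\delta)=\Theta(M)$, which I would keep as an additive $O(M)$ contribution. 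The third term $NF(\Delta_{u^*}-\alpha)$ is left unchanged; because $\Delta_{u^*}-\alpha$ is a positive constant, $F(\Delta_{u^*}-\alpha)=\Theta(\log(N/\delta))$ is finite and well defined, so this term equals $\Theta(NF(\Delta_{u^*}-\alpha))$.

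Finally I would recombine by writing the complexity as a main term plus a remainder: the main term is $NF(\Delta_{u^*}-\alpha)=\Theta(NF(\Delta_{u^*}-\alpha))$, while the remainder collects the first and second terms, $O(1)+\Theta(M)=O(M)$ (absorbing $O(1)$ into $O(M)$ since $M\ge 1$). This yields $\cC_{\mathrm{tsr}}(\alpha)=\Theta(NF(\Delta_{u^*}-\alpha))+O(M)$, as claimed; the hypothesis $M=O(N)$ places us in the favorable regime complementary to Proposition~\ref{prop:tsrlargeM}, where the user-selection cost does not dominate. The only point requiring care — the closest thing to an obstacle — is justifying that $\bar\Delta$ may be treated as a constant independent of the asymptotics, i.e.\ that the initial two-item ranking cost does not scale with $N$; once the instance-dependent accuracies are fixed this is immediate, but it must be stated explicitly in order to legitimately discard the first term as $O(1)$.
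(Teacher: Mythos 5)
Your proposal is correct and follows essentially the same route as the paper, which states this proposition as an immediate consequence of the displayed formula for $\cC_{\mathrm{tsr}}(\alpha)$: the first term is an instance-dependent $O(1)$, the second is $\Theta(M)$ once $\alpha$ and $\delta$ are constants, and the third is the dominant $\Theta\parenv{NF(\Delta_{u^*}-\alpha)}$ term. Your explicit remark that $\bar\Delta$ and $\Delta_{u^*}$ must be treated as fixed instance constants is a reasonable clarification of a convention the paper leaves implicit.
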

In this case, the selected user has a constant accuracy gap $\alpha$ from the best user. However, by choosing $\alpha$ small enough, the complexity can get close to optimal (still with a linear gap).

\subsection{User Selection in a Subset}
As shown in Proposition~\ref{prop:tsrlargeM}, when $M$ is much larger than $N\log N$, even querying each user once costs time linear in $M$ which could be higher than the ranking complexity. Therefore, instead of selecting a global $\alpha$-optimal user, we devise a subroutine Subset-User-Selection (SUS) that randomly picks without replacement $L$ ($L\le M$) users and only search for an $\alpha$-optimal user among them (see Algorithm~\ref{alg:subsetuserselection}). We use $\cL$ to denote this $L$-subset of users. 
\begin{algorithm}[H]
\caption{Subroutine: \textsc{Subset-User-Selection}$(\cU,L,\alpha,\delta_i,\delta_m, i,j)$}\label{alg:subsetuserselection}
\begin{algorithmic}
    \STATE \textbf{input:} set of users $\cU$, user subset size $L$, desired near-optimal level $\alpha$, confidence level $\delta_i$ of initial ranking, confidence level $\delta_m$ of user selection, two items $i,j\in\cN$.
    \STATE $[i',j'] \leftarrow$ Iterative-Insertion-Ranking$(\{i,j\},\delta_i,\bar u)$. 
    \STATE Randomly choose a subset $\cL$ of $L$ users from $\cU$.
    \STATE \textbf{output:} Median-Elimination$(\cL,\alpha,\delta_m, [i',j'])$
\end{algorithmic}
\end{algorithm}
The main procedure of the two-stage algorithm is also modified, shown in Algorithm~\ref{alg:MTSR}. 
\begin{algorithm}[H]
\caption{\textsc{Modified-Two-Stage-Ranking}$(\cN,\cU,L,\alpha,\delta_i,\delta_m,\delta_r)$}
\begin{algorithmic}\label{alg:MTSR}
    \STATE \textbf{input:} set of items $\cN$, set of users $\cU$, user subset size $L$, desired near-optimal level $\alpha$, confidence level $\delta_i$ of initial ranking, confidence level $\delta_m$ of user selection, confidence level $\delta_r$ of final ranking.
    \STATE Let $i,j$ be two arbitrary items.
    \STATE $u^\alpha\leftarrow$ Subset-User-Selection$(\cU,L,\alpha,\delta_i,\delta_m,i,j)$.
    \STATE \textbf{output:} Iterative-Insertion-Ranking$(\cN,\delta_r,u^\alpha)$
\end{algorithmic}
\end{algorithm}

Generally, no guarantee can be made on how close is a subset $\alpha$-optimal user to the global optimal user. So analysis on the two-stage algorithm will be done under the assumption that the $M$ user accuracies are iid samples drawn from a probability distribution $F(x)$ over the interval $(0,\frac{1}{2}]$ ($F(x)$ is independent of any other quantities). Let $b=\inf_{x}\{x:F(x)=1\}$. In the following, we use the cdf $F(x)$ to represent this distribution.

Since $\Delta_1,\Delta_2,\ldots,\Delta_M$ are iid samples from $F(x)$ and $\cL$ is drawn randomly, we assume WOLOG that $\cL$ contains the first $L$ users, i.e., $\cL=\{1,2,\ldots,L\}$. Let $\Delta^\circ = \max_{u\in\cL} \Delta_u$. Recall that $\Delta_{u^*} = \max_{u\in\cU}\Delta_u$. We first show in the following lemma that $\Delta_{u^*}-\Delta^\circ$ is independent of $M$. 
\begin{lemma}\label{lem:alph-opt}
For any $\delta'\in(0,\frac{1}{2}),\alpha\in(0,b)$, if $L\ge \log(\delta')/\log\parenv{F(b-\alpha)}$, then with probability at least $1-\delta'$, 
\begin{align*}
    \Delta^\circ \ge \Delta_{u^*} -\alpha.
\end{align*}
\end{lemma}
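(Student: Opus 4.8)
The plan is to reduce the target comparison $\Delta^\circ \ge \Delta_{u^*} - \alpha$, which involves the random global maximum $\Delta_{u^*}$, to a comparison against the \emph{deterministic} threshold $b - \alpha$, and then to bound the resulting failure probability by a simple product over the independent subset accuracies. The point is that $\Delta_{u^*}$ is random and hard to control directly, whereas $b$ is a fixed number determined by $F$.

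First I would record the key structural fact about $b = \inf_x\{x : F(x) = 1\}$. Since $F$ is a nondecreasing right-continuous cdf supported on $(0,\tfrac12]$, the set $\{x : F(x)=1\}$ is of the form $[b,\infty)$, so right-continuity gives $F(b) = 1$; hence $\Pr[\Delta_u > b] = 0$ and every accuracy satisfies $\Delta_u \le b$ almost surely. In particular $\Delta_{u^*} = \max_{u \in \cU}\Delta_u \le b$ almost surely. Consequently the event $\{\Delta^\circ \ge b - \alpha\}$ is contained in the target event $\{\Delta^\circ \ge \Delta_{u^*} - \alpha\}$, so it suffices to show $\Pr[\Delta^\circ < b - \alpha] \le \delta'$.

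Next I would exploit independence. Because $\Delta^\circ = \max_{u \in \cL}\Delta_u$ with $\cL = \{1,\dots,L\}$ and the $\Delta_u$ are iid with cdf $F$, the event $\{\Delta^\circ < b-\alpha\}$ factorizes:
\[
\Pr[\Delta^\circ < b - \alpha] = \prod_{u=1}^{L}\Pr[\Delta_u < b - \alpha] \le F(b-\alpha)^L ,
\]
where the last step uses $\Pr[\Delta_u < b-\alpha] \le \Pr[\Delta_u \le b-\alpha] = F(b-\alpha)$. Since $\alpha \in (0,b)$ forces $b - \alpha < b$, the definition of $b$ gives $F(b-\alpha) < 1$, so $\log F(b-\alpha) < 0$. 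Substituting the hypothesis $L \ge \log(\delta')/\log(F(b-\alpha))$ and noting that dividing by the negative quantity $\log F(b-\alpha)$ reverses the inequality yields $F(b-\alpha)^L \le \delta'$, which closes the argument.

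The calculations are elementary, so the main delicacy is not a hard estimate but getting the endpoint behavior and the inequality directions exactly right: verifying $F(b)=1$ from right-continuity (so that $\Delta_{u^*}\le b$ and the reduction to a deterministic threshold is legitimate), tracking the strict-versus-weak inequality in passing from $\Pr[\Delta_u < b-\alpha]$ to $F(b-\alpha)$, and handling the degenerate case $F(b-\alpha)=0$ (in which the bound holds trivially and the stated lower bound on $L$ is vacuous). I would also remark that $b - \alpha > 0$ and $b \le \tfrac12$ follow from the support assumption, so every quantity appearing above is well defined.
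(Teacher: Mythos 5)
Your proof is correct and follows essentially the same route as the paper's: bound the probability that all $L$ iid accuracies fall below the deterministic threshold $b-\alpha$ by $F(b-\alpha)^L\le\delta'$, then use $\Delta_{u^*}\le b$ almost surely to convert $\Delta^\circ\ge b-\alpha$ into $\Delta^\circ\ge\Delta_{u^*}-\alpha$. Your treatment is slightly more careful than the paper's on the endpoint details (right-continuity giving $F(b)=1$, the sign of $\log F(b-\alpha)$, and the degenerate case), but these are refinements of the same argument rather than a different approach.
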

\begin{proof}
Note that the claim becomes trivial when $M\le \frac{\log\delta'}{\log\parenv{F(b-\alpha)}} $. We consider the case when $\frac{\log\delta'}{\log\parenv{F(b-\alpha)}}\le L\le M$. 

Since $\Delta_1,\Delta_2,\ldots,\Delta_L$ are iid samples from $F(x)$, with probability $\parenv{F\parenv{b-\alpha}}^L$,
\begin{align*}
    \Delta_i \le b-\alpha \text{ for all } 1\le i\le L.
\end{align*}
Hence, $\parenv{F\parenv{b-\alpha}}^L\le \delta'$ gives
\begin{align*}
    \Delta^\circ=\max_{1\le i\le L}\Delta_i \ge b-\alpha \ge \Delta_{u^*}-\alpha
\end{align*}
with probability at least $1-\delta'$, where the last inequality follows from $\Delta_{u^*}\le b$ with probability 1.
\end{proof}
The preceding lemma states that when user accuracies follow a fixed distribution, at least one of the $L$ users we select randomly will be close to the global best user as long as $L$ is large enough (but still independent of $M$). Thus, even when the number of users $M$ is huge, we do not need to collect information from every one of them. A randomly chosen subset is able to accurately reflect the characteristics of the larger group.

Next, we compute the number of comparisons needed for user selection. Our goal is to show that the complexity of user selection becomes negligible compared with item ranking. In the following analysis, for simplification, we assign the confidence levels $\delta_i,\delta_m,\delta_r$ in Two-Stage-Ranking as well as the confidence level $\delta'$ for the existence of an $\alpha$-optimal user equal values. Specifically, we let $\delta'=\delta_i=\delta_m=\delta_r = \frac{\delta}{4}$ for some $\delta\in(0,1)$.
\begin{theorem}\label{thm:sus}
For any $\delta\in(0,\frac{1}{2}),\alpha\in (0,b),L =\min\parenv{\ceil{\frac{\log(\delta/4)}{\log\parenv{F(b-\alpha/2)}}}, M}$, with probability at least $1-\frac{3\delta}{4}$, subroutine Subset-User-Selection$(\cU,L,\frac{\alpha}{2},\frac{\delta}{4},\frac{\delta}{4},i,j)$ outputs a global $\alpha$-optimal user after
\begin{align*}
    \Theta\parenv{{\bar\Delta}^{-2}\parenv{\log\log{\bar\Delta}^{-1}+\log\frac{4}{\delta}} + \frac{4L}{\alpha^2}\log\frac{4}{\delta}}
\end{align*}
comparisons.
\end{theorem}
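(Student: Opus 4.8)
The plan is to treat Subset-User-Selection as a composition of three subroutines whose individual guarantees are already in hand, and to combine them via a union bound together with a triangle-inequality-style accounting of the accuracy gap. The returned user is $u^\alpha = \textsc{Median-Elimination}(\cL, \alpha/2, \delta/4, [i',j'])$, and the target claim is that $u^\alpha$ is global $\alpha$-optimal, i.e. $\Delta_{u^\alpha} \ge \Delta_{u^*} - \alpha$, with the stated probability and query cost.

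First I would introduce three events. Let $E_1$ be the event that the initial IIR call correctly orders the pair $\{i,j\}$; by \eqref{eq:iircomplexity} with two items this holds with probability at least $1-\delta/4$. Writing $\Delta^\circ = \max_{u\in\cL}\Delta_u$, let $E_2$ be the event $\Delta^\circ \ge \Delta_{u^*} - \alpha/2$. Applying Lemma~\ref{lem:alph-opt} with the substitutions $\alpha\to\alpha/2$ and $\delta'\to\delta/4$ gives $\PP(E_2)\ge 1-\delta/4$; here I would check that the prescribed $L=\min(\ceil{\log(\delta/4)/\log(F(b-\alpha/2))},M)$ meets the hypothesis $L\ge \log(\delta')/\log(F(b-\alpha/2))$, and note that when the minimum is attained at $M$ the subset is the whole population so $\Delta^\circ=\Delta_{u^*}$ deterministically. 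Finally, let $E_3$ be the event that ME returns an $\alpha/2$-optimal user within $\cL$, i.e. $\Delta_{u^\alpha}+\alpha/2 \ge \Delta^\circ$; conditioned on $E_1$ (so that the fixed reference order scoring each arm is correct) the ME guarantee yields $\PP(E_3\mid E_1)\ge 1-\delta/4$.

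Next I would combine the events. Since
\begin{align*}
    \PP(E_1^c\cup E_2^c\cup E_3^c) \le \PP(E_1^c) + \PP(E_2^c) + \PP(E_3^c\mid E_1) \le \tfrac{3\delta}{4},
\end{align*}
all three hold simultaneously with probability at least $1-\tfrac{3\delta}{4}$. On this joint event the two accuracy bounds chain,
\begin{align*}
    \Delta_{u^\alpha} \ge \Delta^\circ - \tfrac{\alpha}{2} \ge \Big(\Delta_{u^*} - \tfrac{\alpha}{2}\Big) - \tfrac{\alpha}{2} = \Delta_{u^*} - \alpha,
\end{align*}
so $u^\alpha$ is a global $\alpha$-optimal user, establishing correctness.

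For the complexity claim I would simply add the two contributions. The initial IIR on two items at confidence $\delta/4$ costs $\Theta(\bar\Delta^{-2}(\log\log\bar\Delta^{-1}+\log(4/\delta)))$ by \eqref{eq:iircomplexity} (the factor $N=2$ absorbed into $\Theta$), and the ME call on the $L$-user subset with parameters $(\alpha/2,\delta/4)$ costs $\Theta((\alpha/2)^{-2}L\log(4/\delta)) = \Theta(4L\alpha^{-2}\log(4/\delta))$ by \eqref{eq:MEcomplexity}; summing gives the stated bound. The only genuinely delicate point is the conditioning in $E_3$: ME's correctness presumes a correct reference ordering of the comparison pair, so its confidence must be charged conditional on $E_1$ rather than unconditionally, and the remaining steps are routine bookkeeping of the $\alpha/2+\alpha/2$ split and of the $\delta/4$ confidence budget across the three sources of failure.
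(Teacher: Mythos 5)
Your proposal is correct and follows essentially the same route as the paper's proof: a union bound over the three failure sources (the initial IIR call on the pair, the event from Lemma~\ref{lem:alph-opt} that the random $L$-subset contains a user within $\alpha/2$ of the best, and the Median-Elimination guarantee), chaining the two $\alpha/2$ losses to get global $\alpha$-optimality, and summing the complexities in \eqref{eq:iircomplexity} and \eqref{eq:MEcomplexity}. Your treatment is in fact slightly more explicit than the paper's (naming the events, conditioning the ME guarantee on the correct reference ordering, and handling the case where the minimum defining $L$ is attained at $M$), but the substance is identical.
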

\begin{proof}
By Lemma~\ref{lem:alph-opt}, letting $L = \min\parenv{\ceil{\frac{\log(\delta/4)}{\log\parenv{F(b-\alpha/2)}}}, M}$ gives $\Delta^\circ\ge\Delta_{u^*}-\frac{\alpha}{2}$ with probability at least $1-\frac{\delta}{4}$. Moreover, IIR finds the correct order of items $i,j$ with probability at least $1-\frac{\delta}{4}$ and given that Median-Elimination outputs an $\frac{\alpha}{2}$-optimal user in the $L$-subset with probability at least $1-\frac{\delta}{4}$. Therefore, by the union bound, with probability $1-\frac{3\delta}{4}$, the $\frac{\alpha}{2}$-optimal user found is a global $\alpha$-optimal user. 

The complexity is a sum of two terms: the complexity of IIR ranking two items and the complexity of Median-Elimination outputting an $\alpha/2$-optimal user among $L$ users.
\end{proof}

\begin{theorem}\label{thm:MTSR}
For any $\delta\in(0,\frac{1}{2}),\alpha\in(0,b),L=\min\parenv{\ceil{\frac{\log (\delta/4)}{\log\parenv{F(b-\alpha/2)}}},M}$, with probability at least $1-\delta$, Modified-Two-Stage-Ranking$(\cN,\cU,L,\alpha,\frac{\delta}{4},\frac{\delta}{4},\frac{\delta}{4})$ outputs the exact ranking of $\cN$, and consumes
\begin{align*}
 \cC_{\mathrm{mtsr}}(\alpha)=\Theta\parenv{\!{\bar\Delta}^{-2}\!\parenv{\log\log{\bar\Delta}^{-1}\!+\!\log \frac{4}{\delta}} \!+\! \frac{4L}{\alpha^2}\log \frac{4}{\delta}\!+\!NF\parenv{\Delta_{u^*}\!-\!\alpha}}
\end{align*}
comparisons.
\end{theorem}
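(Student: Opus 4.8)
The plan is to treat \textsc{Modified-Two-Stage-Ranking} exactly as its name suggests: as the concatenation of a user-selection stage (Subset-User-Selection), whose guarantee is already packaged in Theorem~\ref{thm:sus}, followed by a single-user ranking stage (Iterative-Insertion-Ranking), whose guarantee is Proposition~\ref{prop:comIIR}. Thus the entire argument reduces to (i) a union bound over the two stages' failure events and (ii) summing their comparison complexities. No new concentration inequality is needed; everything is bookkeeping on the confidence budget $\delta$ and on the accuracy of the selected user.

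For correctness, first I would invoke Theorem~\ref{thm:sus} with the stated $L=\min(\ceil{\log(\delta/4)/\log F(b-\alpha/2)},M)$: with probability at least $1-\tfrac{3\delta}{4}$ the returned user $u^\alpha$ is a \emph{global} $\alpha$-optimal user, i.e.\ $\Delta_{u^\alpha}\ge\Delta_{u^*}-\alpha$. (This $\tfrac{3\delta}{4}$ already absorbs the three $\tfrac{\delta}{4}$ budgets inside Subset-User-Selection---the existence event from Lemma~\ref{lem:alph-opt}, the IIR ordering of the pilot pair $i,j$, and the Median-Elimination success.) Conditioning on this event, the final call Iterative-Insertion-Ranking$(\cN,\delta/4,u^\alpha)$ returns the exact ranking with probability at least $1-\tfrac{\delta}{4}$ by Proposition~\ref{prop:comIIR}. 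A union bound over these two events yields overall success probability at least $1-\tfrac{3\delta}{4}-\tfrac{\delta}{4}=1-\delta$.

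For the complexity, I would simply add the two stage costs. The user-selection stage contributes ${\bar\Delta}^{-2}(\log\log{\bar\Delta}^{-1}+\log\tfrac{4}{\delta}) + \tfrac{4L}{\alpha^2}\log\tfrac{4}{\delta}$ directly from Theorem~\ref{thm:sus} (the factor $4$ in the second term arising because Median-Elimination is run at level $\alpha/2$). The ranking stage contributes $\Theta(N F(\Delta_{u^\alpha}))$ by Proposition~\ref{prop:comIIR} and the definition of $F$. Since $F$ is decreasing near $0$ and $\Delta_{u^\alpha}\ge\Delta_{u^*}-\alpha$, this is at most $\Theta(N F(\Delta_{u^*}-\alpha))$, with equality attained in the worst case $\Delta_{u^\alpha}=\Delta_{u^*}-\alpha$; summing the two contributions gives exactly $\cC_{\mathrm{mtsr}}(\alpha)$.

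The only points requiring care---and the place I expect the main obstacle to live---are the tracking of the near-optimality parameter and the precise reading of the $\Theta$ on the ranking term. I must be explicit that Subset-User-Selection is run at level $\alpha/2$, so that, combined with the subset gap $\Delta_{u^*}-\Delta^\circ\le\alpha/2$ from Lemma~\ref{lem:alph-opt}, the output is globally $\alpha$-optimal; this is precisely what makes the chosen $L$ (with $F(b-\alpha/2)$ in the denominator) the correct threshold. Likewise, the $\Theta(NF(\Delta_{u^*}-\alpha))$ term should be understood as the worst-case guarantee over the random realization of $u^\alpha$ rather than an exact value, since a lucky draw could make $\Delta_{u^\alpha}$ substantially larger than $\Delta_{u^*}-\alpha$. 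Apart from these two conventions, the proof is a routine union-bound-and-add argument.
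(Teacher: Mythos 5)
Your proposal is correct and follows essentially the same route as the paper's own proof: invoke Theorem~\ref{thm:sus} to get a global $\alpha$-optimal user with probability $1-\frac{3\delta}{4}$, apply Proposition~\ref{prop:comIIR} for the final IIR call at confidence $\frac{\delta}{4}$, union bound the failure events, and sum the two stage complexities. The extra care you flag about the $\alpha/2$ level inside Subset-User-Selection and reading the $\Theta(NF(\Delta_{u^*}-\alpha))$ term as a worst-case bound is consistent with (and slightly more explicit than) the paper's argument.
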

\begin{proof}
Modified-Two-Stage-Ranking being able to output the exact ranking of $\cN$ is guaranteed by the algorithm IIR. 

It remains to compute the complexity. By Theorem~\ref{thm:sus}, with probability at least $1-\frac{3}{4}\delta$, Subset-User-Selection outputs a global $\alpha$-optimal user. With a global $\alpha$-optimal user, IIR outputs the exact ranking of $\cN$ after
\begin{align*}
    \Theta\parenv{NF\parenv{\Delta_{u^*}-\alpha}},
\end{align*}
comparisons with probability at least $1-\frac{\delta}{4}$. Therefore, the desired complexity follows from applying the union bound and summing up the complexities of SUS and IIR.
\end{proof}

By noting that for $M$ sufficiently large, $\bar\Delta$ equals the mean of $F(x)$ with probability 1 and thus ${\bar\Delta}^{-2}\parenv{\log\log{\bar\Delta}^{-1}+\log \frac{4}{\delta}}=O(1)$, we have the following proposition.
\begin{proposition}
If $\alpha=\Omega(N^{-\frac{1}{2}})\cap o(1)$, then 
\begin{align*}
    \cC_{\mathrm{mtsr}}(\alpha) = \Theta\parenv{NF(\Delta_{u^*})} + O(N).
\end{align*}
Comparing the preceding proposition with Proposition~\ref{prop:tsrgood}, we can see that by Subset-User-Selection, the two-stage algorithm can perform efficiently even with a large number of users. 

\end{proposition}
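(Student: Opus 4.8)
The plan is to start from the exact complexity established in Theorem~\ref{thm:MTSR},
\begin{align*}
\cC_{\mathrm{mtsr}}(\alpha)=\Theta\parenv{\underbrace{{\bar\Delta}^{-2}\parenv{\log\log{\bar\Delta}^{-1}+\log \tfrac{4}{\delta}}}_{T_1} + \underbrace{\tfrac{4L}{\alpha^2}\log \tfrac{4}{\delta}}_{T_2}+\underbrace{NF\parenv{\Delta_{u^*}-\alpha}}_{T_3}},
\end{align*}
and to bound the three contributions separately, showing $T_1=O(1)$, $T_2=O(N)$, and $T_3=\Theta\parenv{NF(\Delta_{u^*})}$. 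The conclusion then follows because the sum is dominated by $T_3$.

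For $T_1$ I would invoke the observation stated just before the proposition: as $M\to\infty$ the empirical average accuracy $\bar\Delta$ converges almost surely to the mean of the accuracy distribution, a fixed positive constant independent of $N$, so the complexity function evaluated at $\bar\Delta$ is $O(1)$. For $T_3$ the key point is that $\alpha=o(1)$ while $\Delta_{u^*}$ is a positive constant bounded away from $0$; hence for all large $N$ we have $\Delta_{u^*}-\alpha\ge \Delta_{u^*}/2$, and since the complexity function $F$ is continuous away from the origin one checks directly that $F(\Delta_{u^*}-\alpha)/F(\Delta_{u^*})\to 1$ (the prefactor $(\Delta_{u^*}-\alpha)^{-2}/\Delta_{u^*}^{-2}\to1$, and the bracketed $\log\log$-plus-$\log(N/\delta)$ factors share the same dominant $\log(N/\delta)$ term). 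This gives $T_3=\Theta\parenv{NF(\Delta_{u^*})}$.

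The main obstacle is $T_2$, where I must control $\tfrac{L}{\alpha^2}$. The assumption $\alpha=\Omega(N^{-1/2})$ immediately yields $\alpha^{-2}=O(N)$, so it suffices to argue that the subset size $L=\min\parenv{\ceil{\log(\delta/4)/\log F(b-\alpha/2)},\,M}$ stays bounded by a constant independent of $N$ and $M$. By Lemma~\ref{lem:alph-opt}, $L$ is already independent of $M$; what remains is its dependence on $\alpha$ through the behavior of the accuracy cdf near $b$. When the accuracy distribution places positive mass at the top of its support (for instance an atom at $b$, as in the two-group setting used in our experiments), $F(b-\alpha/2)$ is bounded away from $1$ for all small $\alpha$, so $\log F(b-\alpha/2)$ is bounded away from $0$ and $L=O(1)$; hence $T_2=O(\alpha^{-2})=O(N)$. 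This is the delicate step, since for a distribution with a smooth density that vanishes at $b$ the factor $L$ can grow as $\alpha\to0$, and the bound would then have to be re-examined for the chosen decay rate of $\alpha$.

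Finally I would assemble the three estimates. Because $\Delta_{u^*}$ is constant, $NF(\Delta_{u^*})=\Theta\parenv{N\log(N/\delta)}$ dominates both $T_1=O(1)$ and $T_2=O(N)$, so the total is $\Theta\parenv{NF(\Delta_{u^*})}+O(N)$ with the $O(N)$ term of strictly lower order. Comparing with Proposition~\ref{prop:tsrgood} then shows that the subset variant attains the same leading-order complexity as selecting a global $\alpha$-optimal user, but now with a user-selection cost that is independent of $M$, which is the point of the proposition.
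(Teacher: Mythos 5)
Your decomposition into the three terms of Theorem~\ref{thm:MTSR}, with $T_1=O(1)$, $T_2=O(N)$, and $T_3=\Theta\parenv{NF(\Delta_{u^*})}$, is exactly the route the paper takes: the proposition is presented as an immediate consequence of Theorem~\ref{thm:MTSR} together with the preceding remark that $\bar\Delta$ converges to the mean of the accuracy distribution (so the first term is $O(1)$), and no further argument is given. Where you go beyond the paper is in the handling of $T_2$, and your caution there is warranted. The paper implicitly treats $L$ as an instance constant, but as you note, $L=\min\parenv{\ceil{\log(\delta/4)/\log F(b-\alpha/2)},M}$ (here $F$ is the accuracy cdf, not the complexity function) depends on $\alpha$; since $\alpha=o(1)$, the quantity $F(b-\alpha/2)$ tends to $1$ whenever the cdf is continuous at $b$, in which case $L\to\infty$ and the bound $T_2=O(L\alpha^{-2})$ need not be $O(N)$. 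Your observation that $L=O(1)$ holds when the distribution places positive mass at the top of its support (e.g., an atom at $b$, as in the two-group setting of the experiments) is correct, and it makes explicit a hypothesis the paper leaves unstated. In short, your proof is correct under that additional assumption and matches the paper's (essentially omitted) argument; the caveat you flag is a genuine gap in the proposition as written rather than in your reasoning.
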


\section{Additional Experiments}
In this section, we provide additional numerical experiments to demonstrate the advantage of our method. 
First we extend the accuracy of users to be generated in Section \ref{sec:experiment} to a wider range of parameters: $\gamma_A \in [0.25, 0.5, 1.0], \gamma_B \in [0.5, 1.0, 2.5]$. We also tested the performance of the algorithm when there are larger amount of users as $M=[9, 18, 36]$ and the result with different $\gamma_u$ configurations are shown in Fig. \ref{fig:exp-m9}, \ref{fig:exp-m18}, \ref{fig:exp-m36}. In each case, a portion of $\frac{1}{3}$ of the users have $\gamma_u = \gamma_B$, and the rest have $\gamma_u = \gamma_A$. Though the original `Medium Elimination' order optimal, its constant factor penalty is too large to be practical. We 
turn to use the successive elimination algorithm  \citep{even2002pac} as we did in Algorithm \ref{alg:elim-user} with $\epsilon = 0.15$ to identify the best user in the given result.

When comparing the same $\gamma_u$ setting with different users such as in Fig \ref{fig:m9gb0.25gg2.5}, \ref{fig:m18gb0.25gg2.5}, \ref{fig:m36gb0.25gg2.5}. The adaptive algorithm has similar performance with the two-stage one but without the overhead when there are smaller amout of items. It also shows when $M$ is increasing, the advantage of adaptive sampling is diminishing compared to the non-adaptive one due to the fact that the queries are spread over more users thus it takes longer to find the better set of more accurate users.


            \begin{figure}[ht!]
            \centering
            \subfigure[$\gamma_A=0.25$, $\gamma_B=0.5$]{\includegraphics[width=0.32\textwidth]{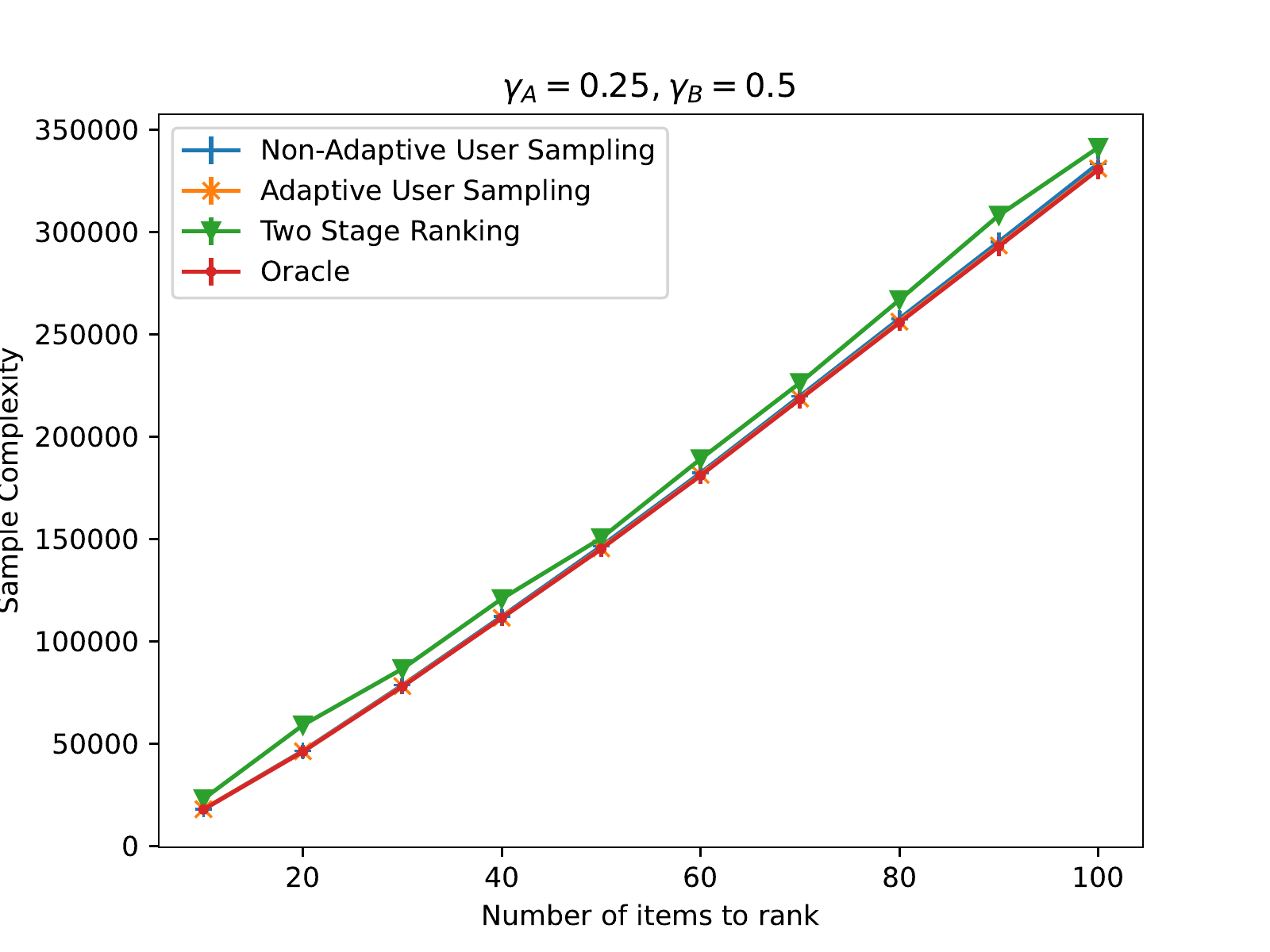} \label{fig:m9gb0.25gg0.5}}
\subfigure[$\gamma_A=0.25$, $\gamma_B=1.0$]{\includegraphics[width=0.32\textwidth]{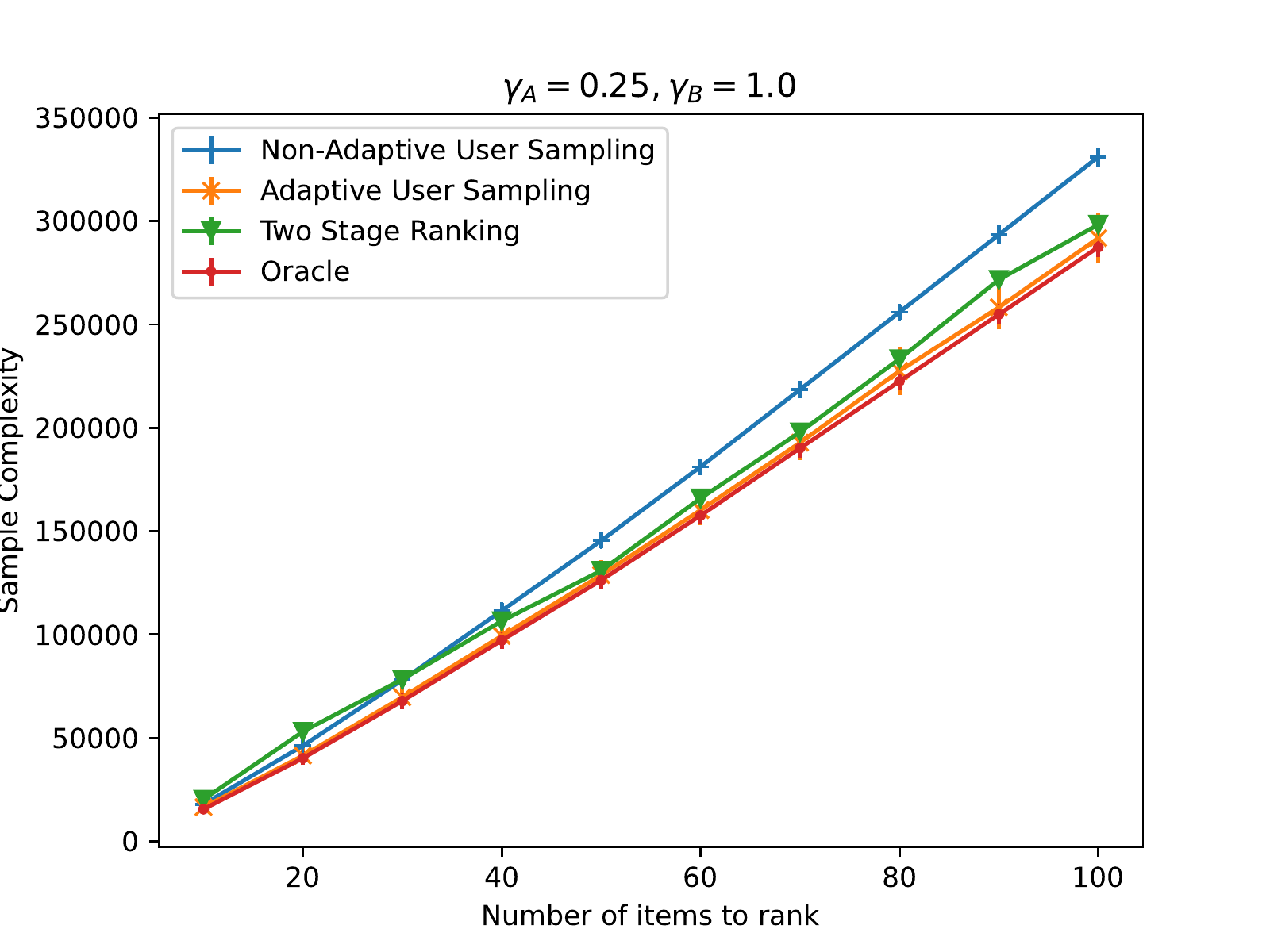} \label{fig:m9gb0.25gg1.0}}
\subfigure[$\gamma_A=0.25$, $\gamma_B=2.5$]{\includegraphics[width=0.32\textwidth]{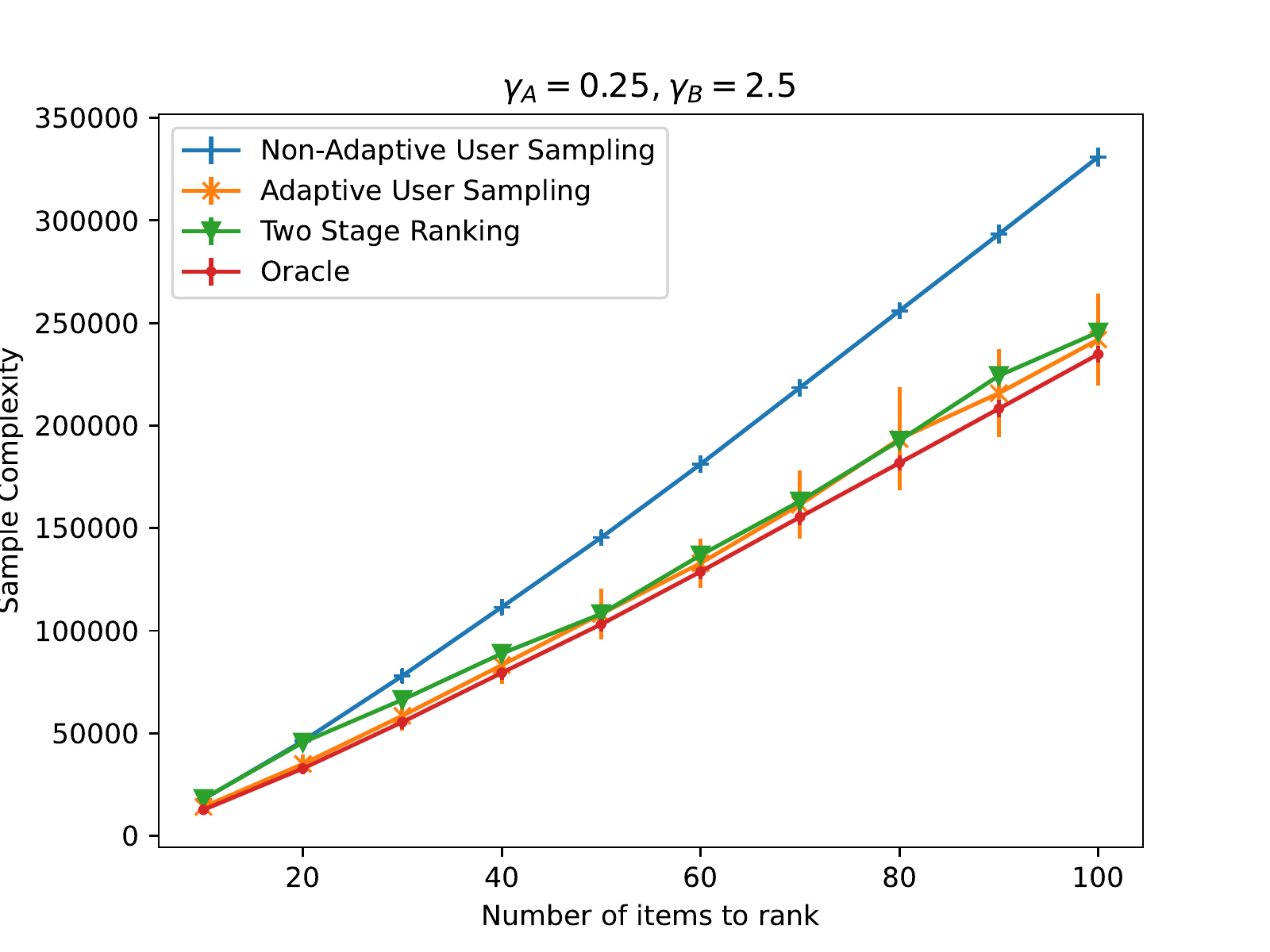} \label{fig:m9gb0.25gg2.5}}
\subfigure[$\gamma_A=0.5$, $\gamma_B=0.5$]{\includegraphics[width=0.32\textwidth]{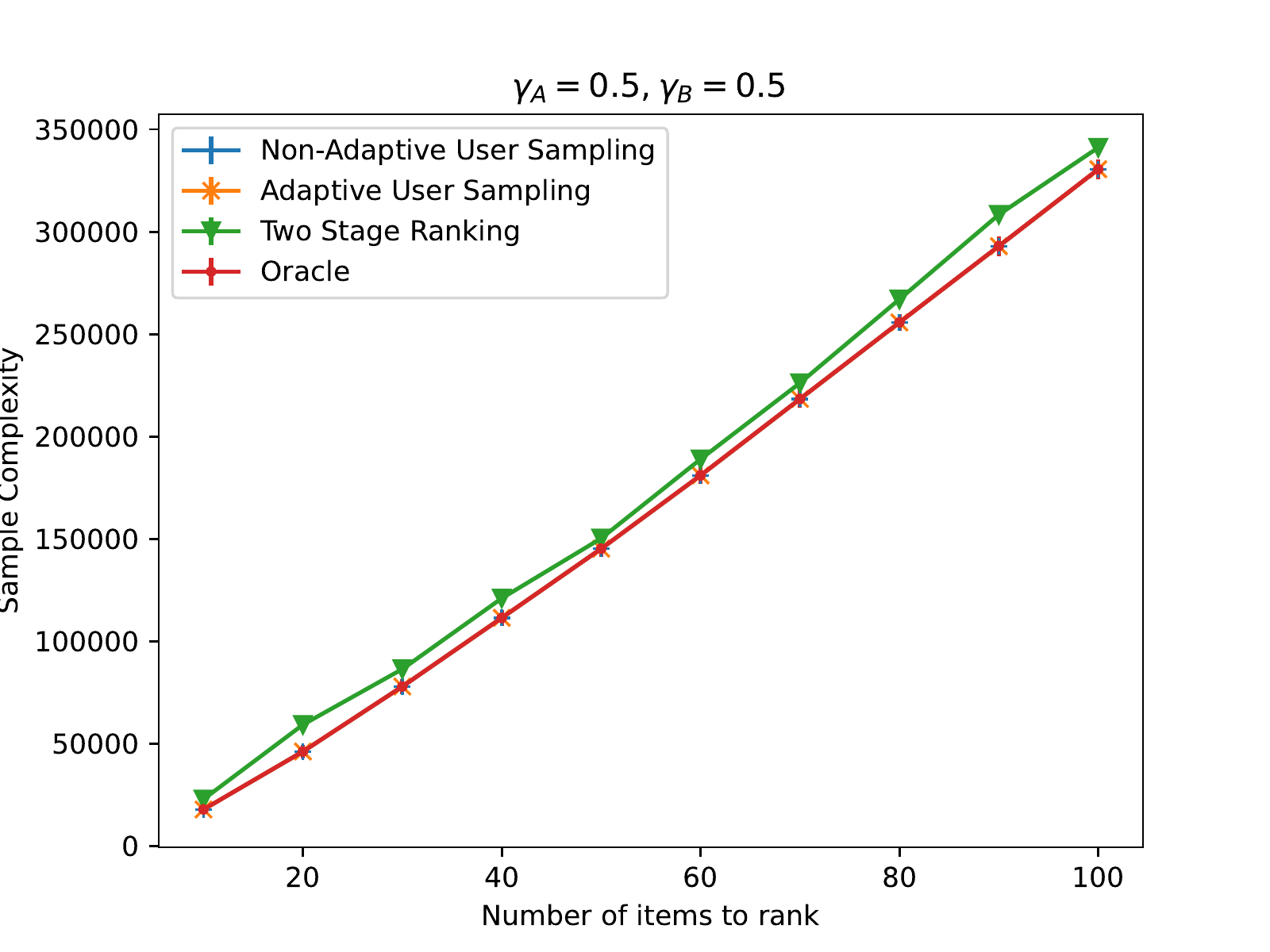} \label{fig:m9gb0.5gg0.5}}
\subfigure[$\gamma_A=0.5$, $\gamma_B=1.0$]{\includegraphics[width=0.32\textwidth]{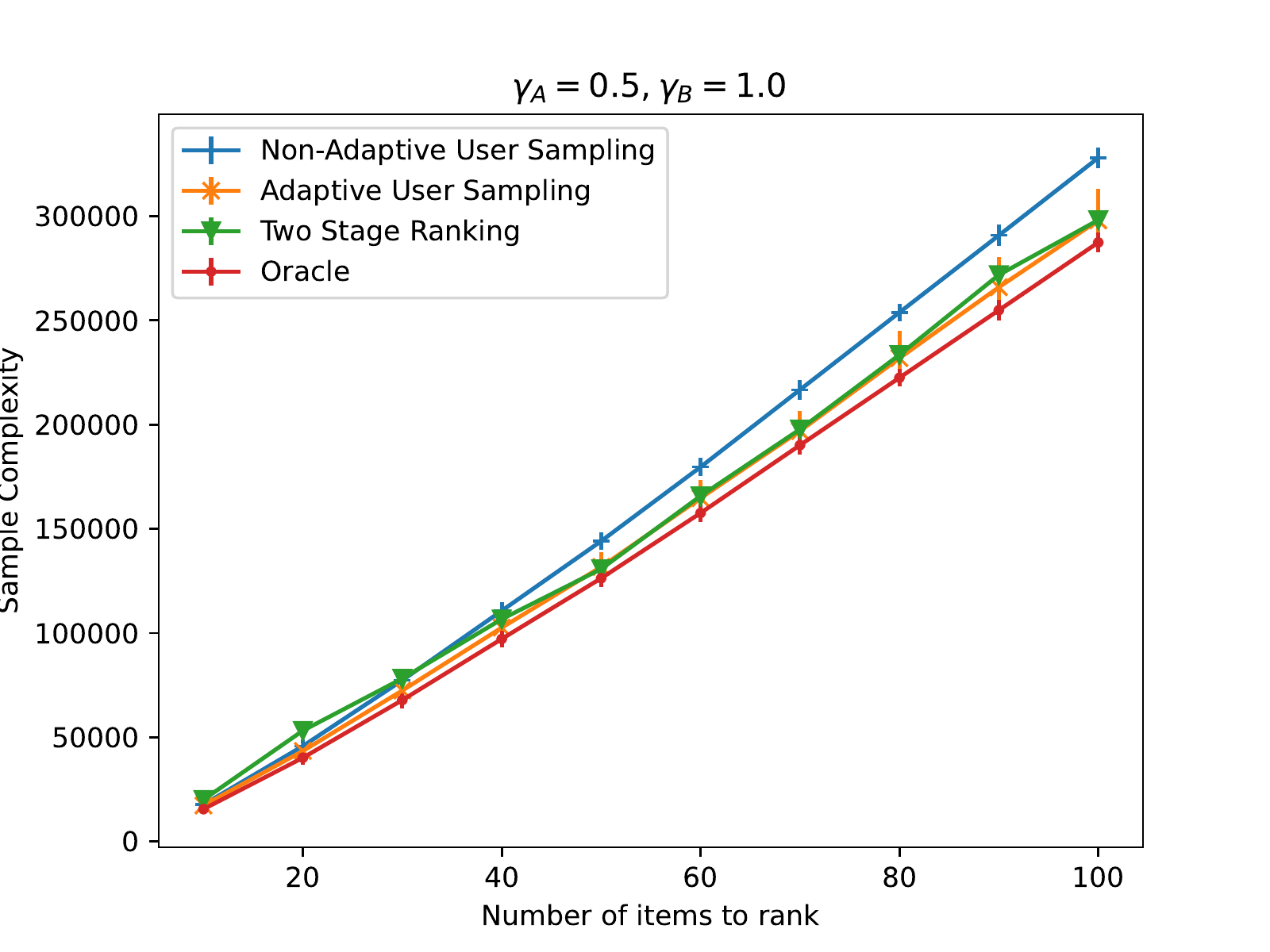} \label{fig:m9gb0.5gg1.0}}
\subfigure[$\gamma_A=0.5$, $\gamma_B=2.5$]{\includegraphics[width=0.32\textwidth]{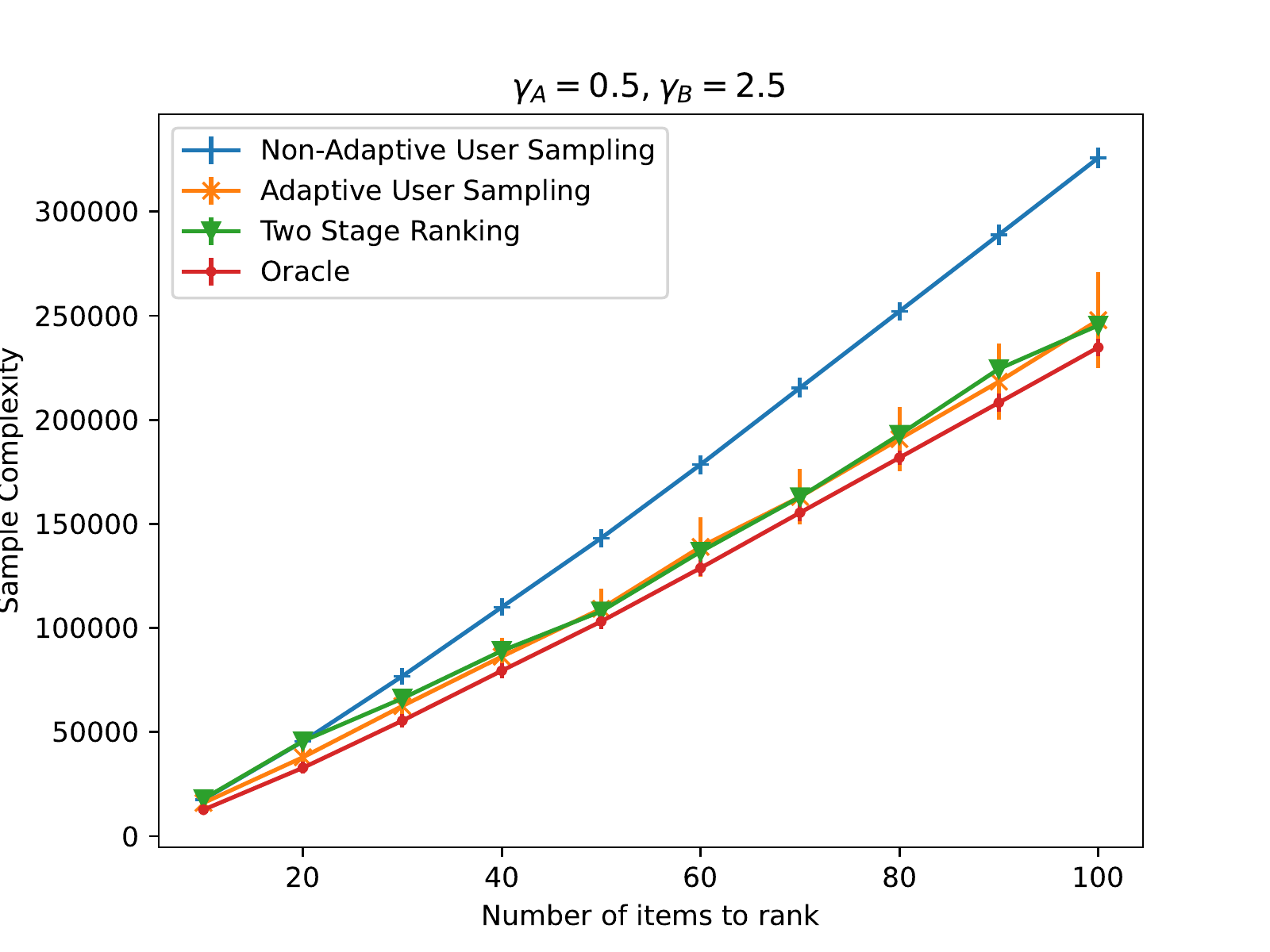} \label{fig:m9gb0.5gg2.5}}
\subfigure[$\gamma_A=1.0$, $\gamma_B=0.5$]{\includegraphics[width=0.32\textwidth]{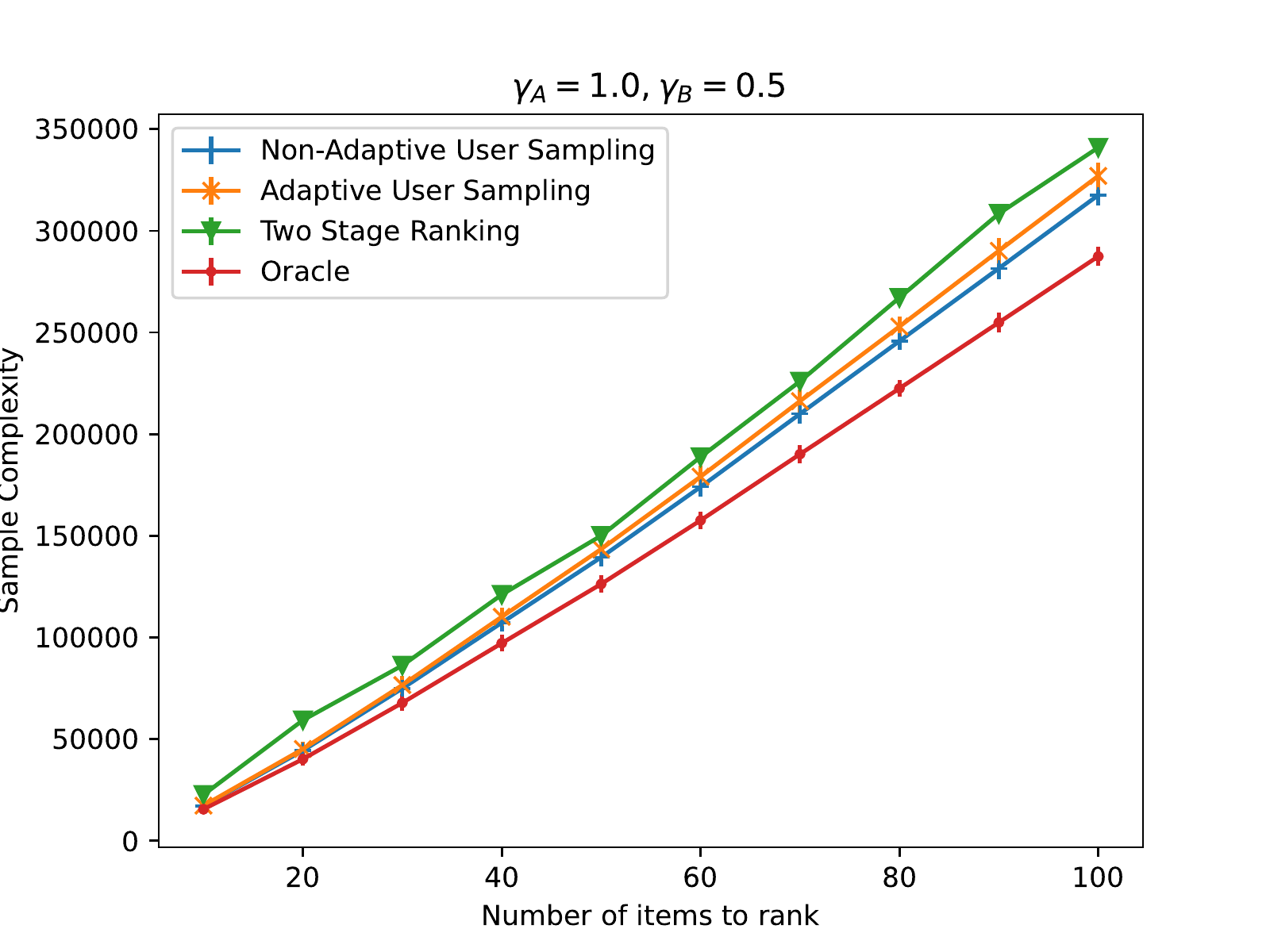} \label{fig:m9gb1.0gg0.5}}
\subfigure[$\gamma_A=1.0$, $\gamma_B=1.0$]{\includegraphics[width=0.32\textwidth]{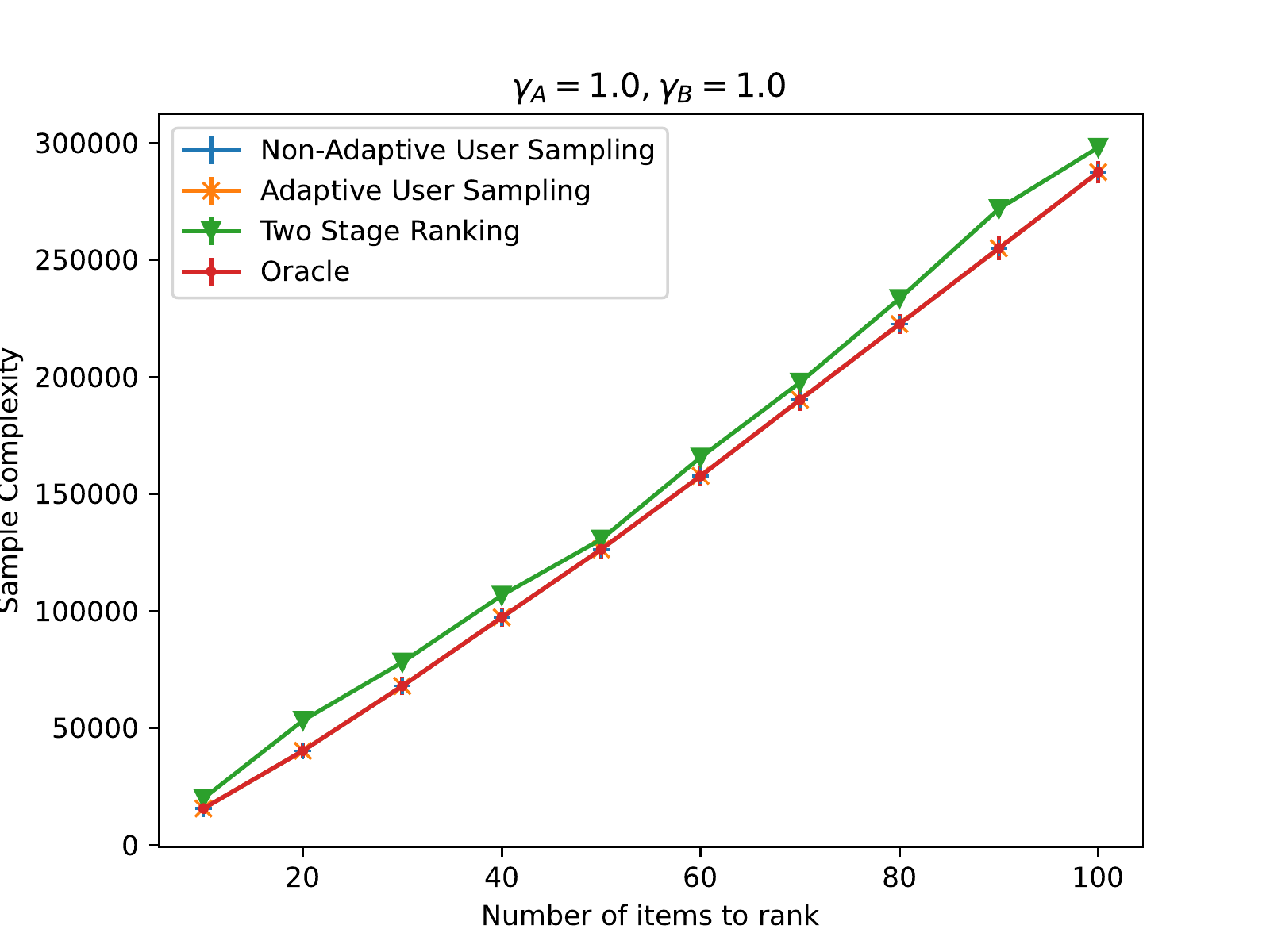} \label{fig:m9gb1.0gg1.0}}
\subfigure[$\gamma_A=1.0$, $\gamma_B=2.5$]{\includegraphics[width=0.32\textwidth]{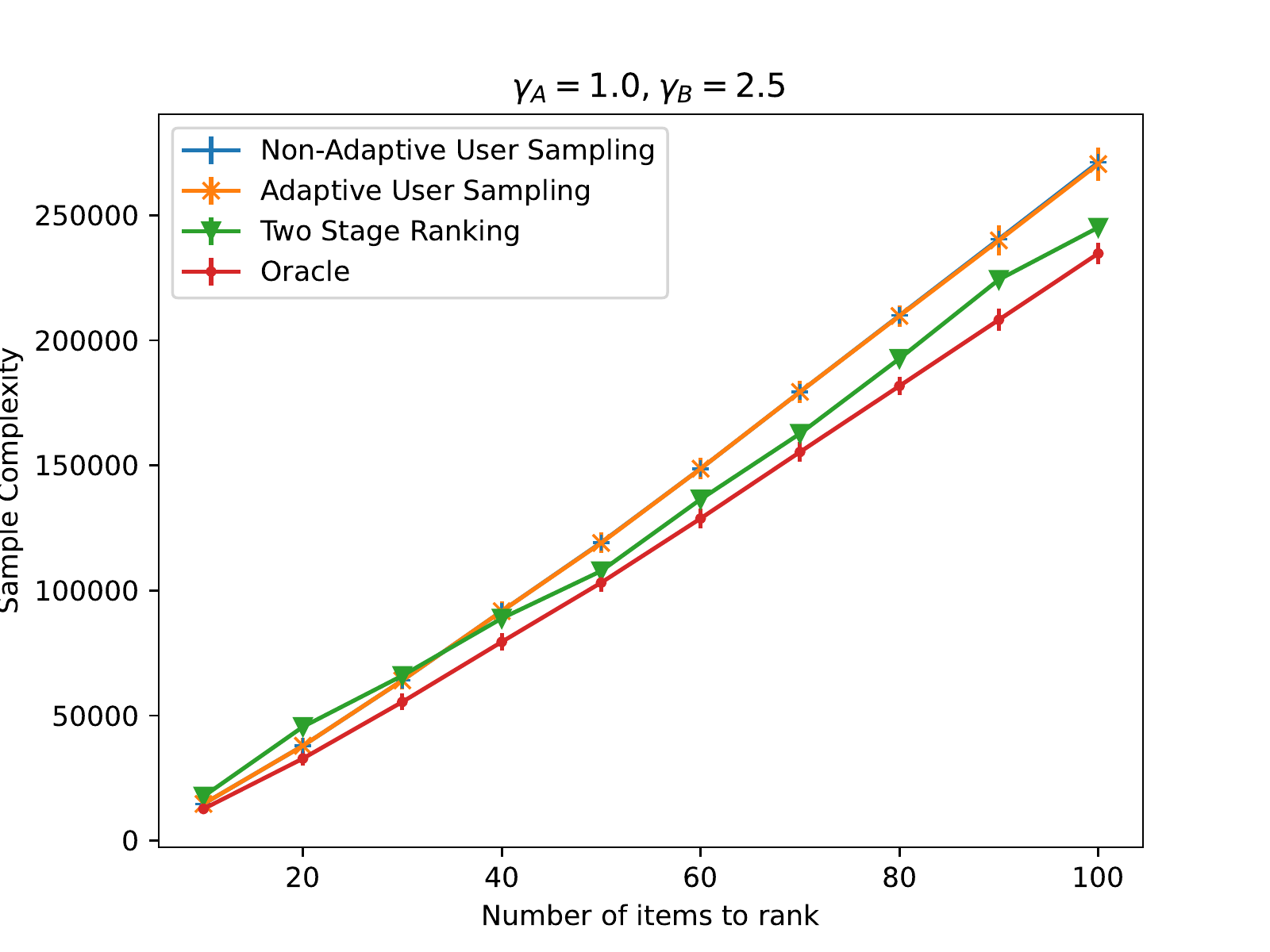} \label{fig:m9gb1.0gg2.5}}
            \caption{When $M = 9$. Sample complexities v.s. number of items for all algorithms. The 3-by-3 grid shows different heterogeneous user settings where the accuracy of two group of users differs.
            \label{fig:exp-m9}
            }
            \end{figure}
            
            \begin{figure}[ht!]
            \centering
            \subfigure[$\gamma_A=0.25$, $\gamma_B=0.5$]{\includegraphics[width=0.32\textwidth]{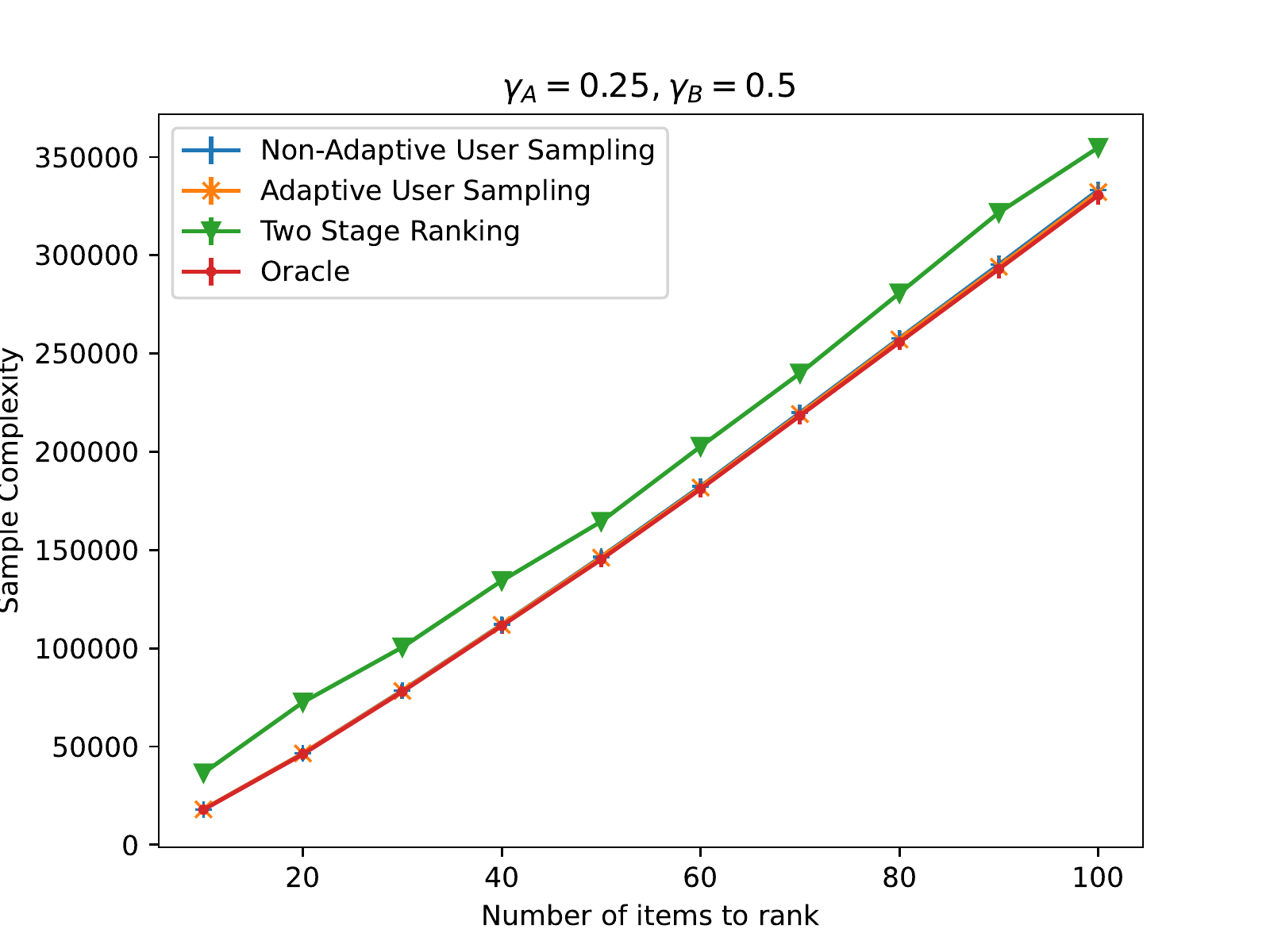} \label{fig:m18gb0.25gg0.5}}
\subfigure[$\gamma_A=0.25$, $\gamma_B=1.0$]{\includegraphics[width=0.32\textwidth]{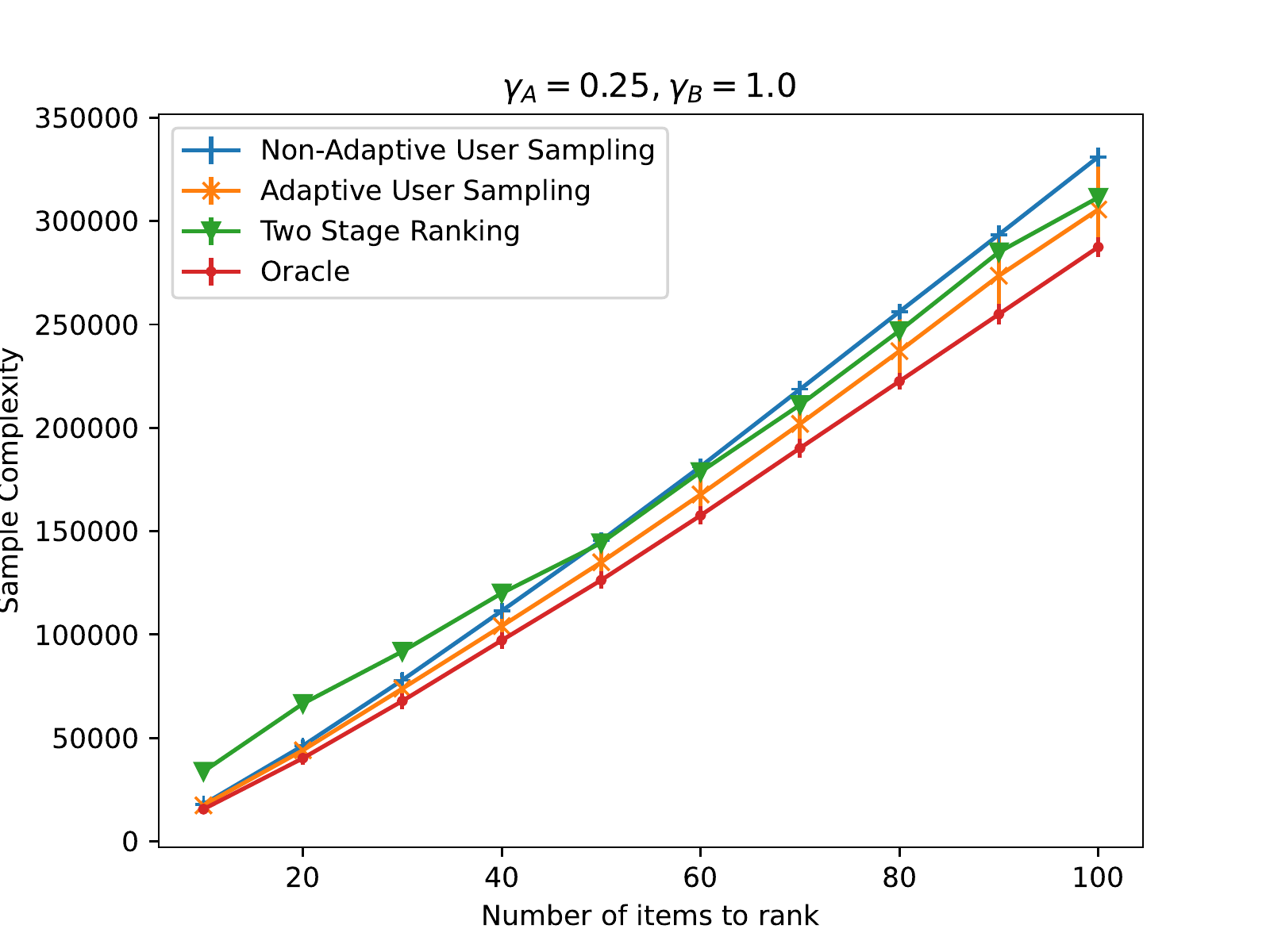} \label{fig:m18gb0.25gg1.0}}
\subfigure[$\gamma_A=0.25$, $\gamma_B=2.5$]{\includegraphics[width=0.32\textwidth]{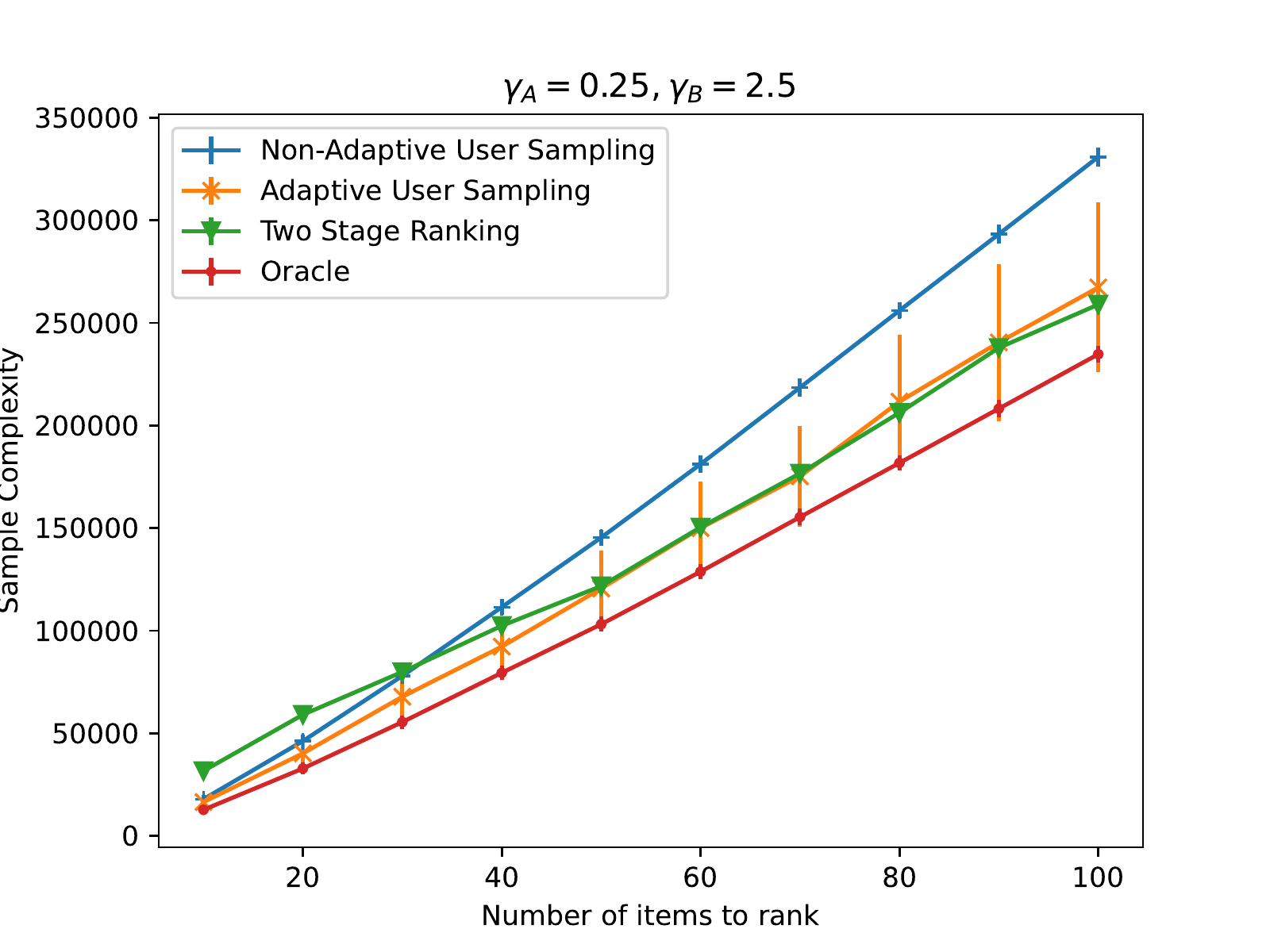} \label{fig:m18gb0.25gg2.5}}
\subfigure[$\gamma_A=0.5$, $\gamma_B=0.5$]{\includegraphics[width=0.32\textwidth]{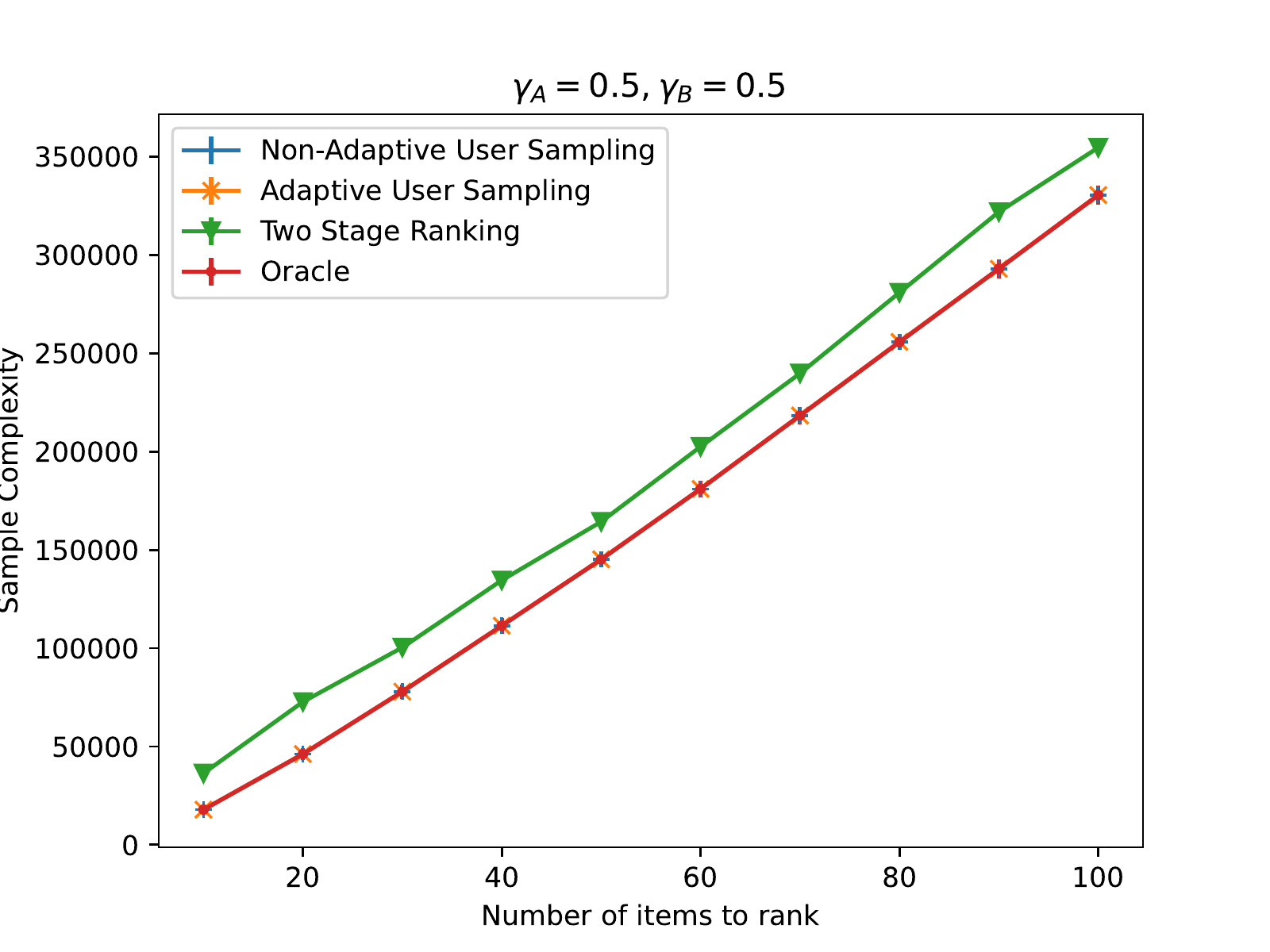} \label{fig:m18gb0.5gg0.5}}
\subfigure[$\gamma_A=0.5$, $\gamma_B=1.0$]{\includegraphics[width=0.32\textwidth]{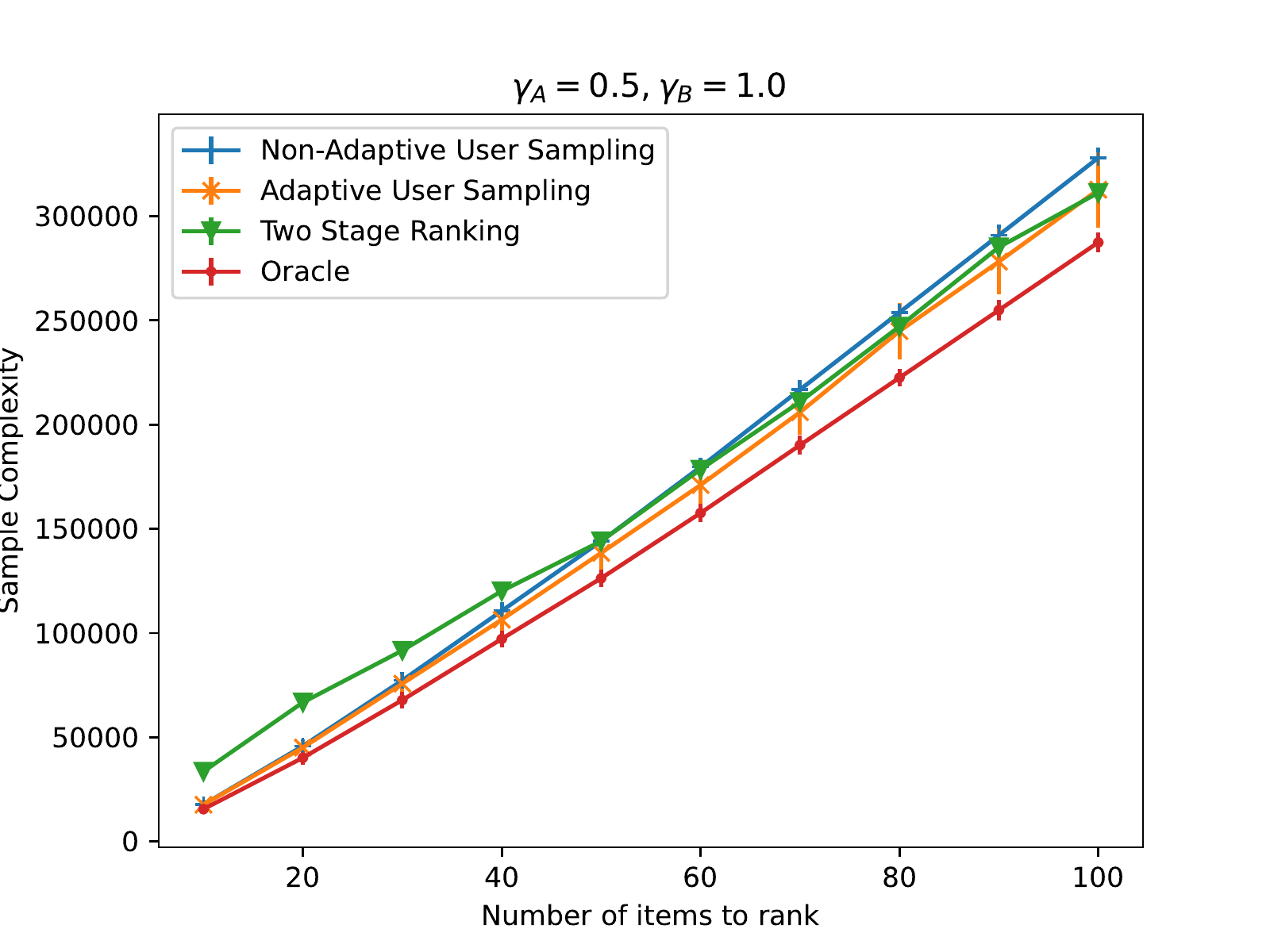} \label{fig:m18gb0.5gg1.0}}
\subfigure[$\gamma_A=0.5$, $\gamma_B=2.5$]{\includegraphics[width=0.32\textwidth]{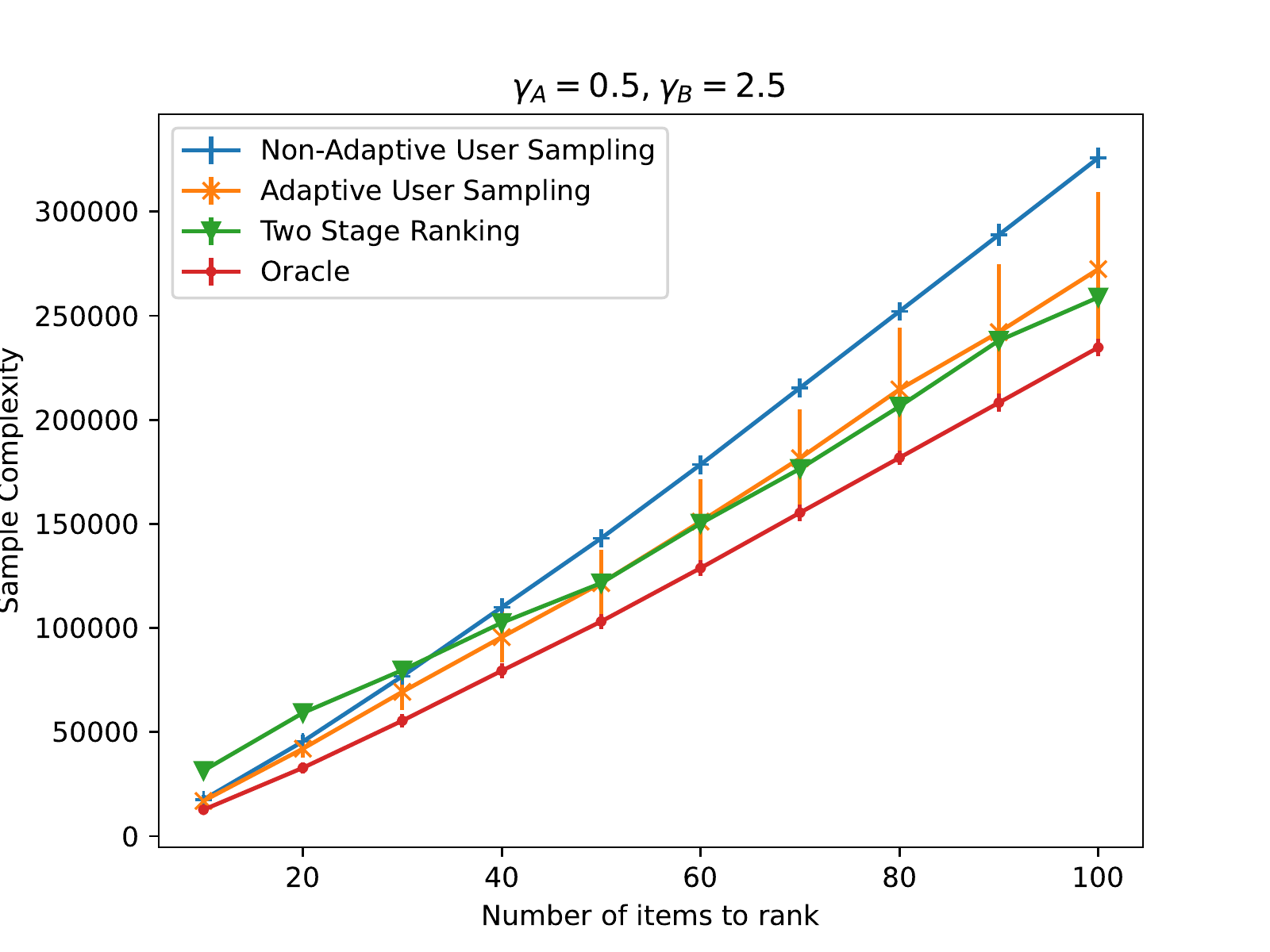} \label{fig:m18gb0.5gg2.5}}
\subfigure[$\gamma_A=1.0$, $\gamma_B=0.5$]{\includegraphics[width=0.32\textwidth]{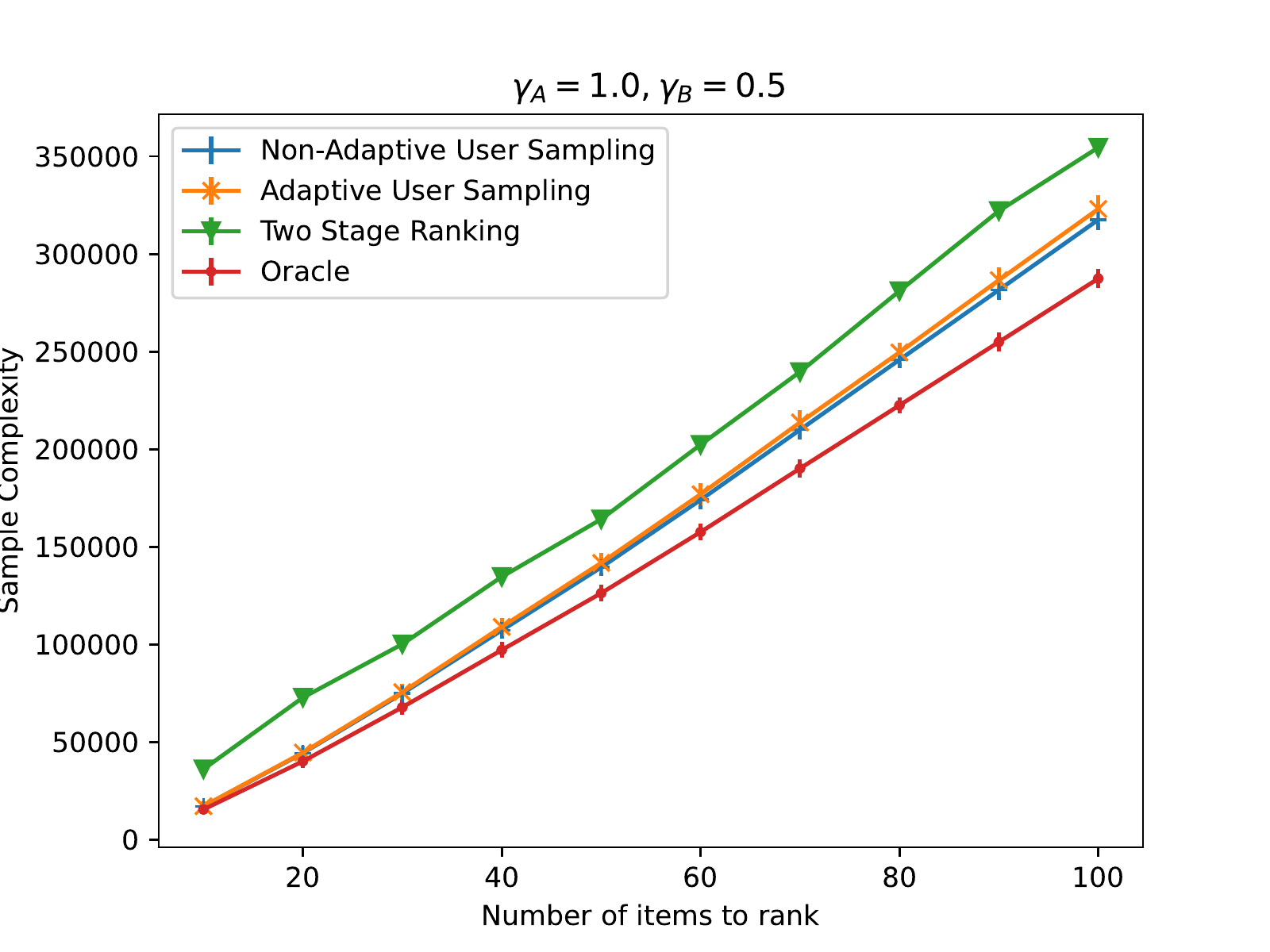} \label{fig:m18gb1.0gg0.5}}
\subfigure[$\gamma_A=1.0$, $\gamma_B=1.0$]{\includegraphics[width=0.32\textwidth]{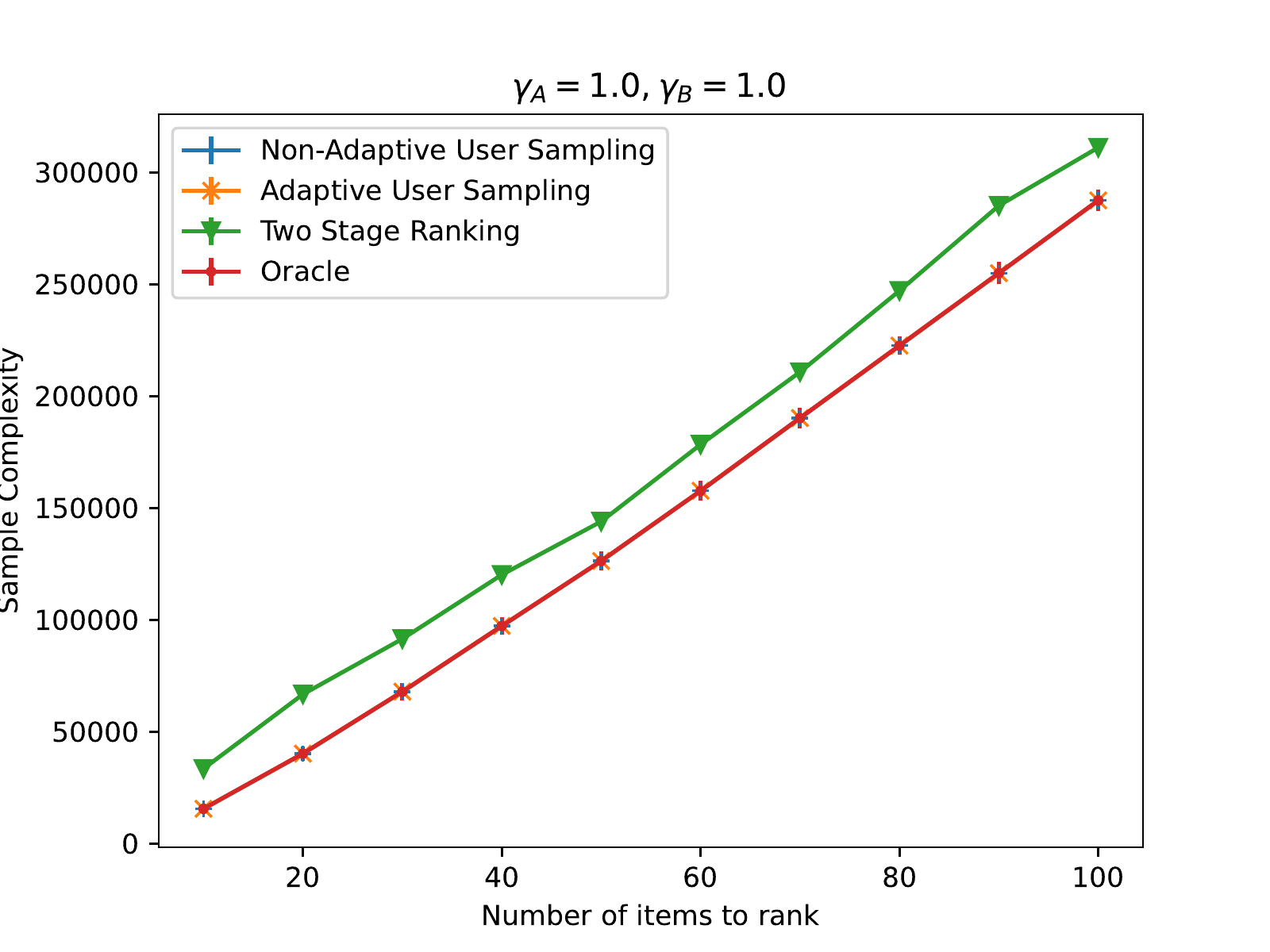} \label{fig:m18gb1.0gg1.0}}
\subfigure[$\gamma_A=1.0$, $\gamma_B=2.5$]{\includegraphics[width=0.32\textwidth]{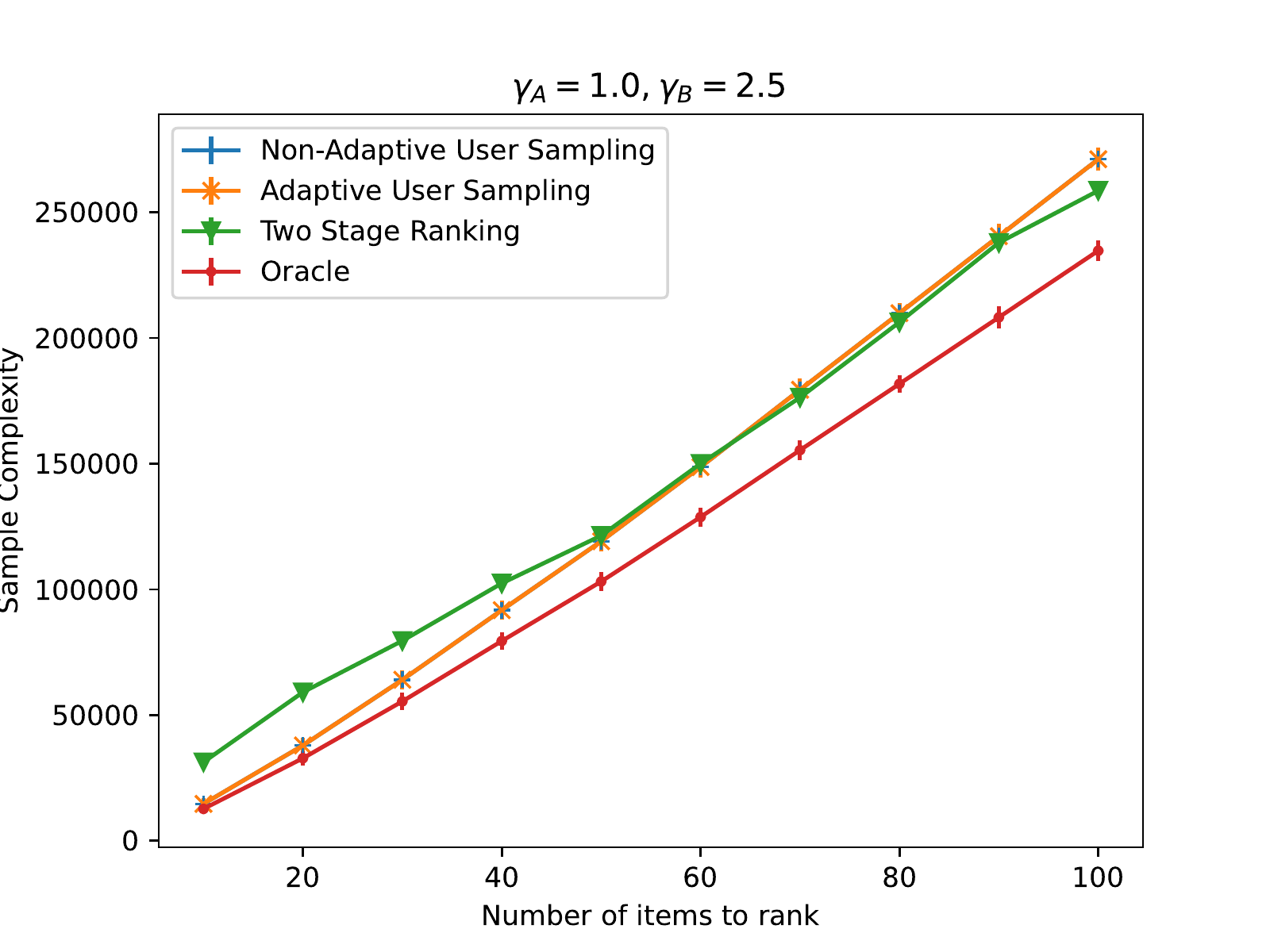} \label{fig:m18gb1.0gg2.5}}
            \caption{When $M = 18$. Sample complexities v.s. number of items for all algorithms. The 3-by-3 grid shows different heterogeneous user settings where the accuracy of two group of users differs.
            \label{fig:exp-m18}
            }
            \end{figure}
            
            \begin{figure}[ht!]
            \centering
            \subfigure[$\gamma_A=0.25$, $\gamma_B=0.5$]{\includegraphics[width=0.32\textwidth]{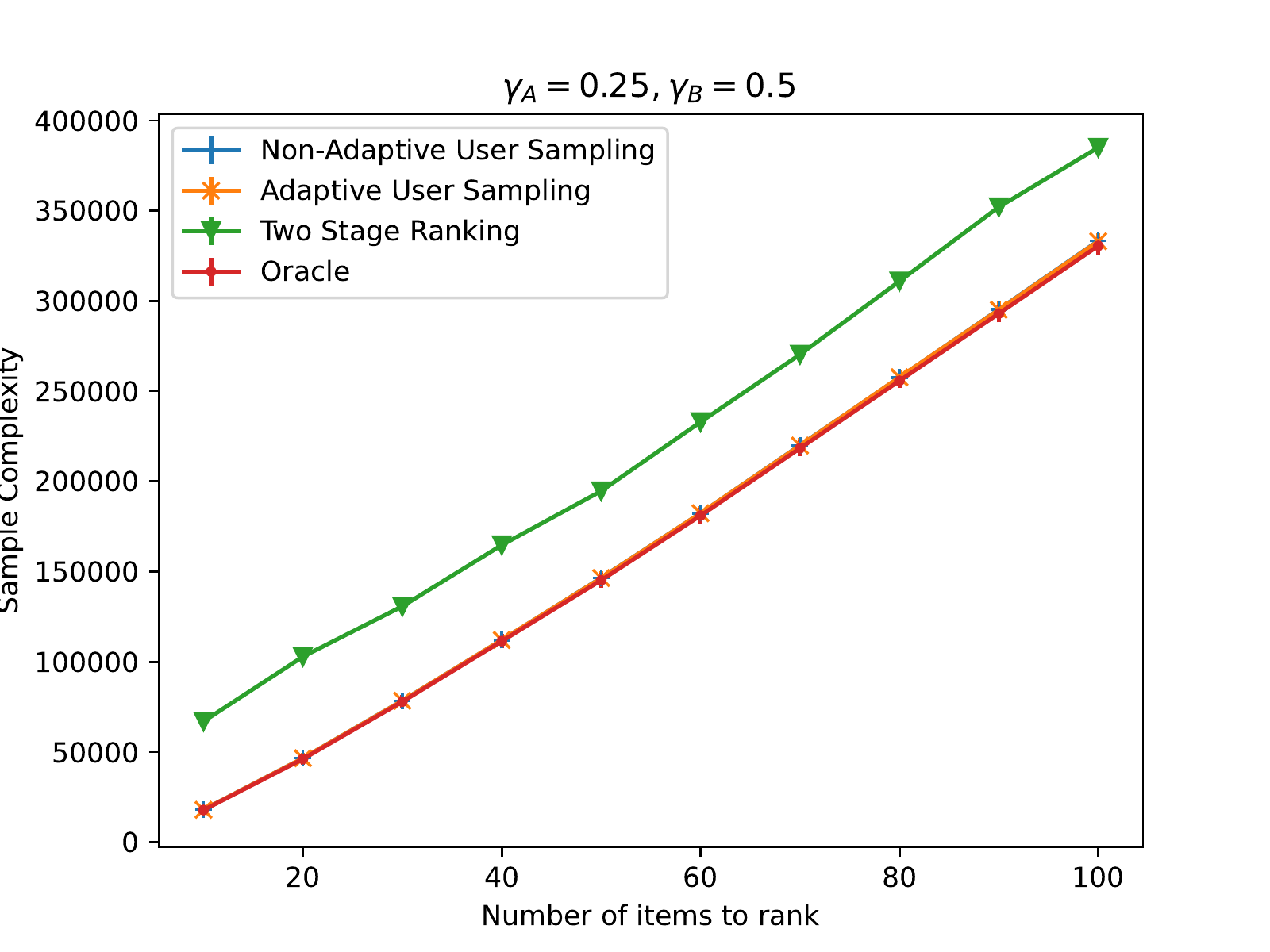} \label{fig:m36gb0.25gg0.5}}
\subfigure[$\gamma_A=0.25$, $\gamma_B=1.0$]{\includegraphics[width=0.32\textwidth]{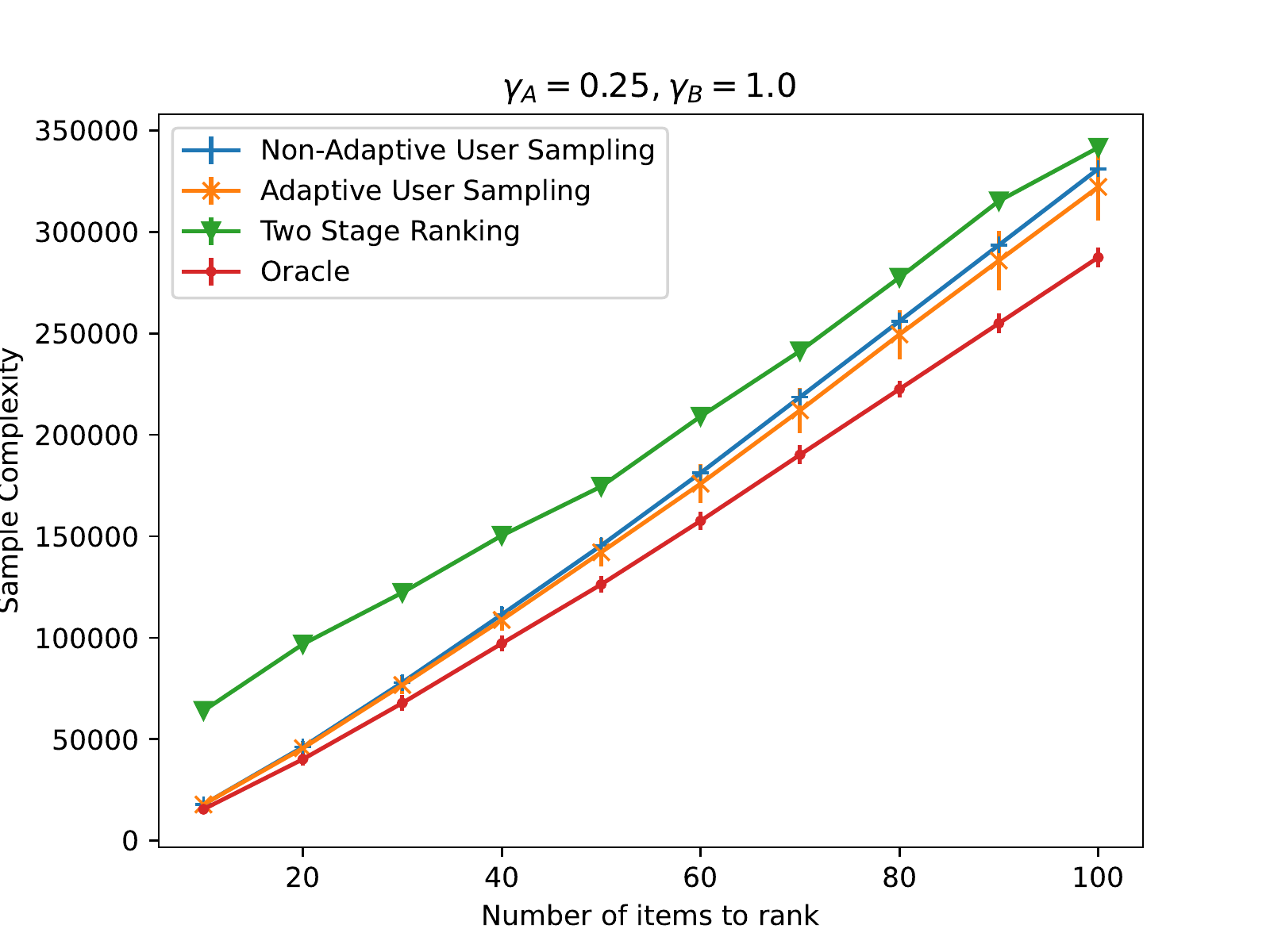} \label{fig:m36gb0.25gg1.0}}
\subfigure[$\gamma_A=0.25$, $\gamma_B=2.5$]{\includegraphics[width=0.32\textwidth]{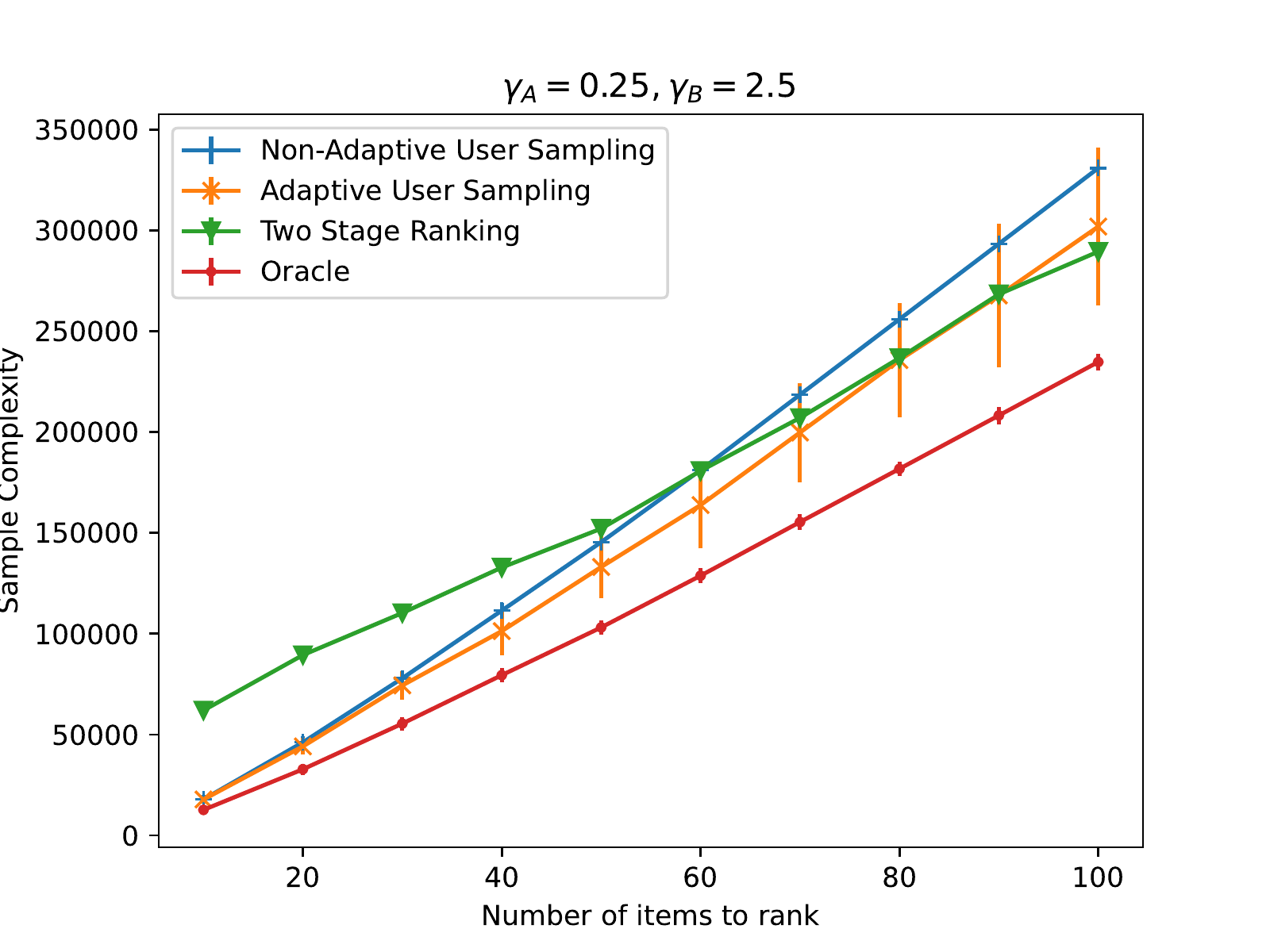} \label{fig:m36gb0.25gg2.5}}
\subfigure[$\gamma_A=0.5$, $\gamma_B=0.5$]{\includegraphics[width=0.32\textwidth]{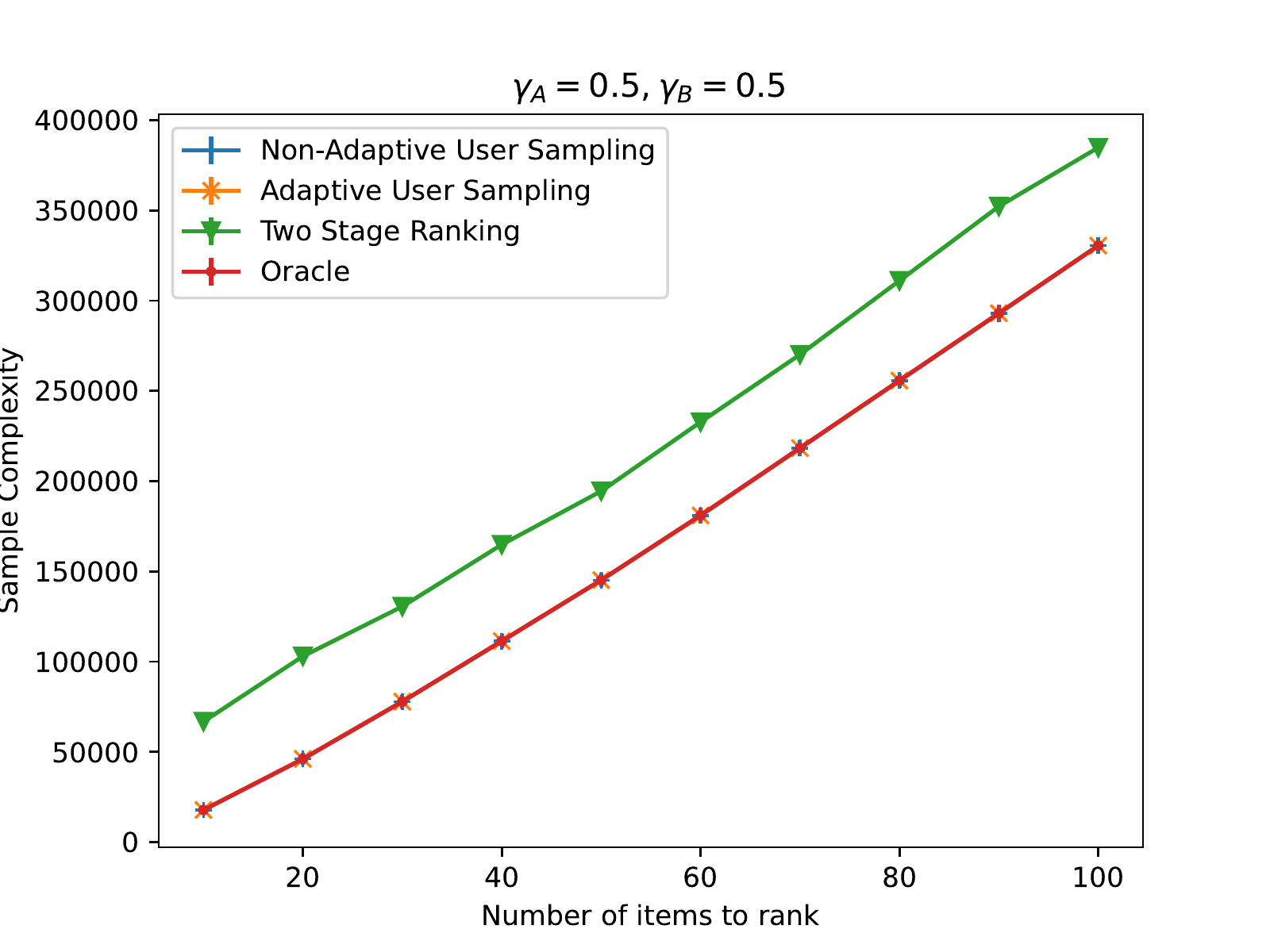} \label{fig:m36gb0.5gg0.5}}
\subfigure[$\gamma_A=0.5$, $\gamma_B=1.0$]{\includegraphics[width=0.32\textwidth]{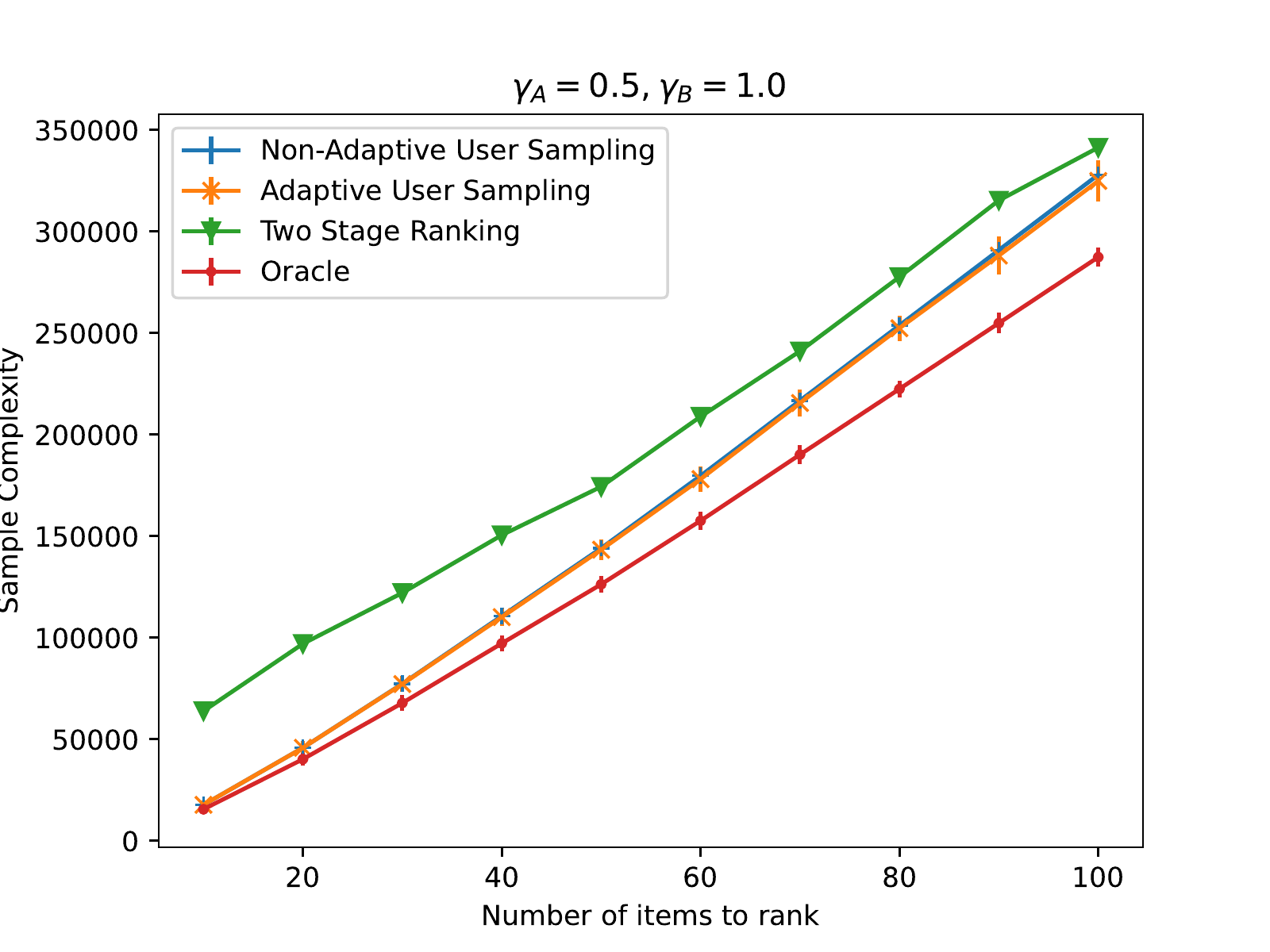} \label{fig:m36gb0.5gg1.0}}
\subfigure[$\gamma_A=0.5$, $\gamma_B=2.5$]{\includegraphics[width=0.32\textwidth]{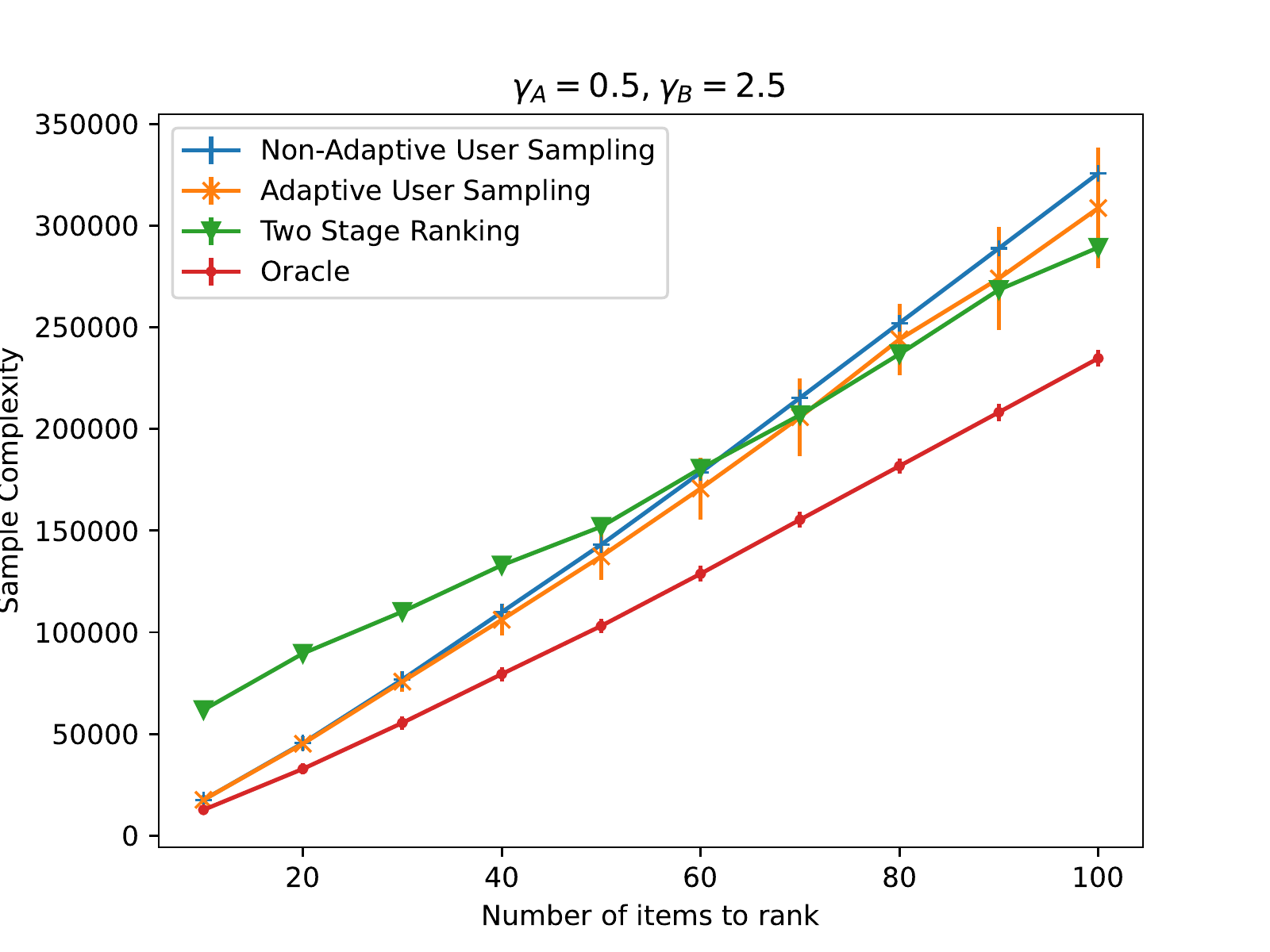} \label{fig:m36gb0.5gg2.5}}
\subfigure[$\gamma_A=1.0$, $\gamma_B=0.5$]{\includegraphics[width=0.32\textwidth]{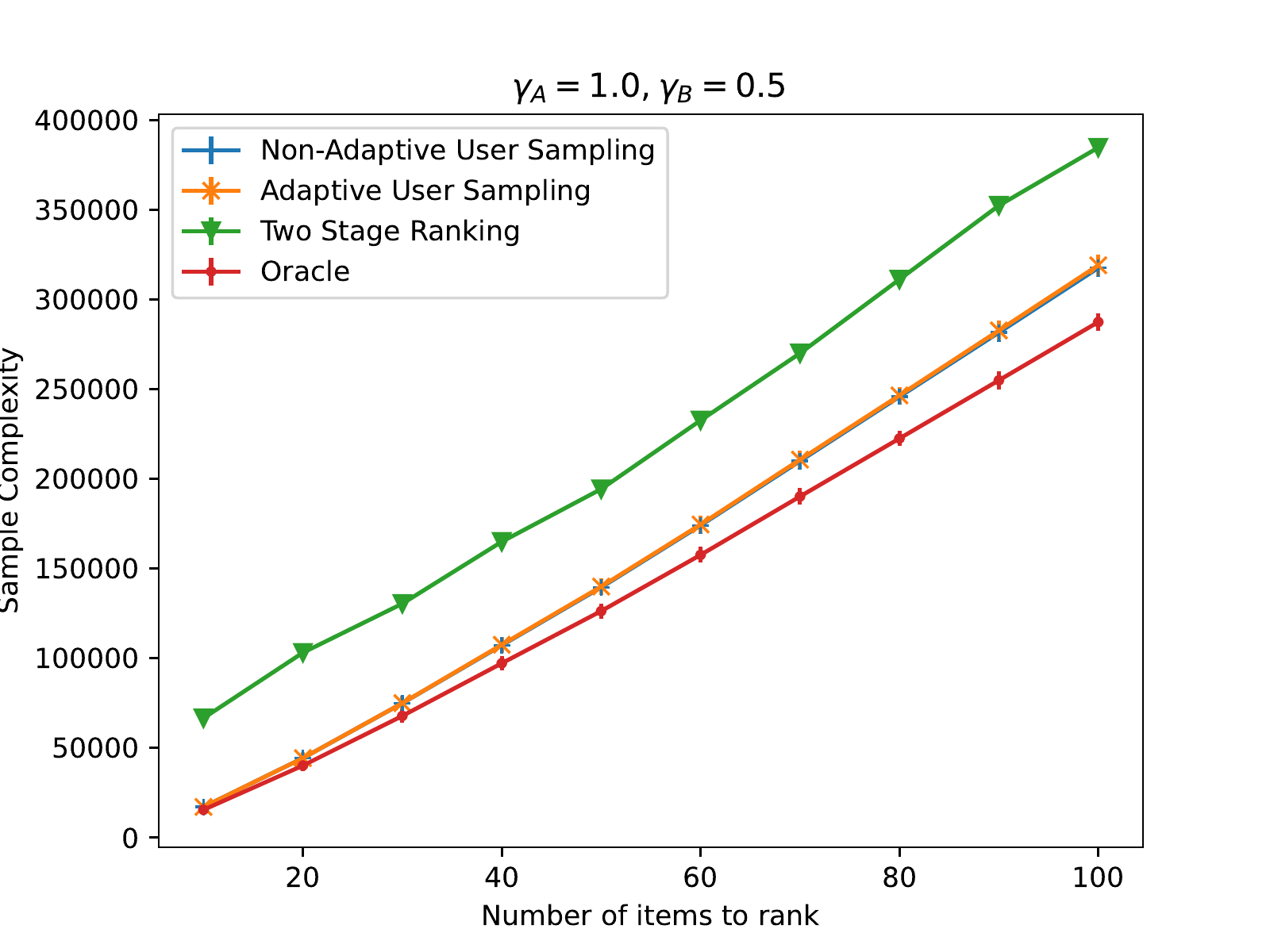} \label{fig:m36gb1.0gg0.5}}
\subfigure[$\gamma_A=1.0$, $\gamma_B=1.0$]{\includegraphics[width=0.32\textwidth]{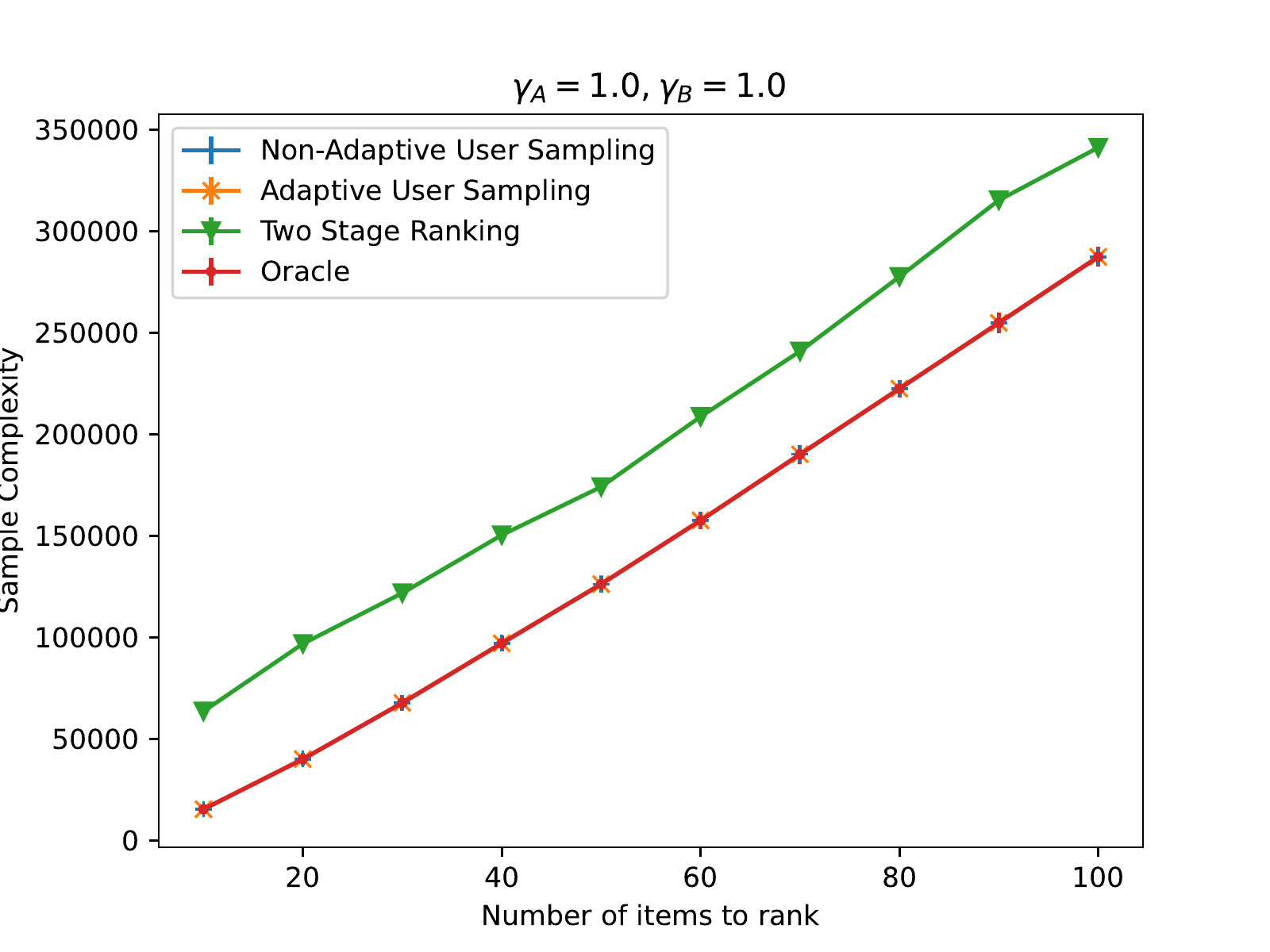} \label{fig:m36gb1.0gg1.0}}
\subfigure[$\gamma_A=1.0$, $\gamma_B=2.5$]{\includegraphics[width=0.32\textwidth]{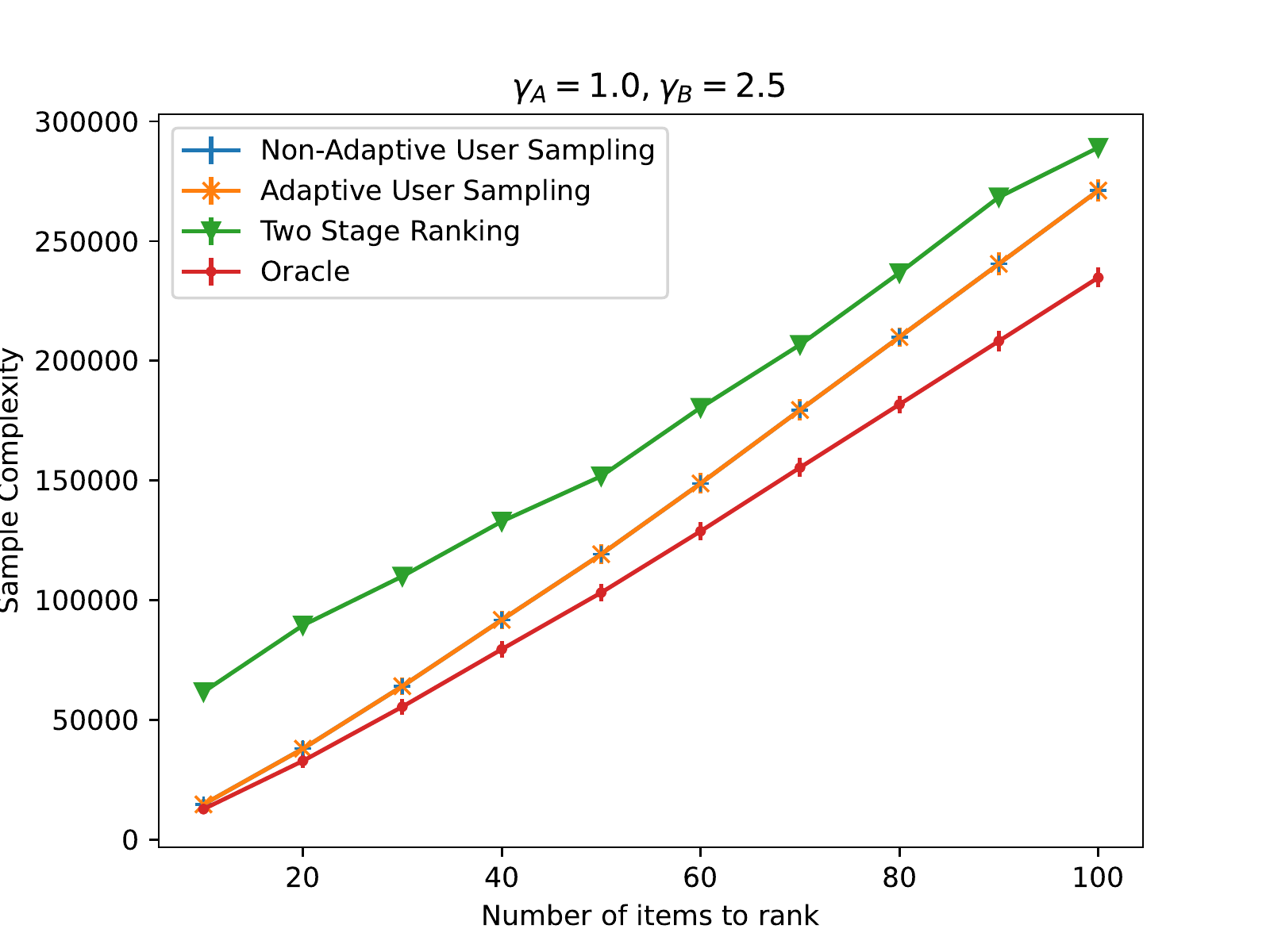} \label{fig:m36gb1.0gg2.5}}
            \caption{When $M = 36$. Sample complexities v.s. number of items for all algorithms. The 3-by-3 grid shows different heterogeneous user settings where the accuracy of two group of users differs.
            \label{fig:exp-m36}
            }
            \end{figure}

\section{Proof of Main Results}
\label{sec:proof-user-elim}
\subsection{Query Complexity of the Proposed Algorithm} \label{subsec:proof-complexity}
The following lemmas characterize the performance of each subroutine:

\begin{lemma}
[Lemma 9 in \citet{ren2019sample}]
\label{lem:atc}
    For any input pair $(i,j)$ and a set of users $\cU$, Algorithm \ref{alg:atc} terminates in $\lceil r_{\max} \rceil = \lceil \epsilon^{-2}\log(2/\delta) \rceil$ queries. 
    If $\epsilon \leq \bar{\Delta}$, then the returned $\hat{y}$ indicates the preferable item with probability at least $1 - \delta$. 
\end{lemma}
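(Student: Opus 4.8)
The plan is to reduce the multi-user sampling in Algorithm~\ref{alg:atc} to a single ``virtual'' user of accuracy $\bar\Delta$ and then run a standard anytime-confidence argument. The termination claim is immediate and I would dispose of it first: the \textbf{while} loop increments $r$ by one on every pass and exits once $r > r_{\max}$, so at most $\lceil r_{\max}\rceil$ queries are ever issued, independently of the realized responses.

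For correctness, the key observation I would make is that drawing $u$ uniformly from $\cU$ and recording whether $u$ prefers the truly better item is a single Bernoulli trial whose success probability is $|\cU|^{-1}\sum_{u\in\cU} p_u(i,j) = \tfrac12 + \bar\Delta$. Because the draws across rounds are i.i.d., the counter $w$ and hence $\hat p = w/r$ are statistically identical to what ATC produces against one user of accuracy $\bar\Delta$; this reduces the statement to the single-user guarantee (Lemma~9 of \citet{ren2019sample}), which I would nonetheless reprove to stay self-contained. Assume without loss of generality that $i$ is the preferable item, so $\EE[\hat p_r] = \tfrac12 + \bar\Delta$.

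Next I would build the good event. By Hoeffding's inequality, for each fixed $r$ one has $\PP\big(|\hat p_r - (\tfrac12+\bar\Delta)| \ge c_r\big) \le 2\exp(-2r c_r^2) = 6\delta/(\pi^2 r^2)$ for $c_r = \sqrt{\tfrac{1}{2r}\log(\pi^2 r^2/(3\delta))}$, and a union bound over all $r\ge 1$ gives, using $\sum_{r\ge1} r^{-2} = \pi^2/6$, that the event $\mathcal{E} = \{\forall r:\ |\hat p_r - (\tfrac12+\bar\Delta)| < c_r\}$ holds with probability at least $1-\delta$. On $\mathcal{E}$ I would first rule out a wrong early stop: if the loop breaks at round $r$ with $\hat p_r \le \tfrac12 - c_r$, then $\tfrac12 - c_r \ge \hat p_r > \tfrac12 + \bar\Delta - c_r$ forces $\bar\Delta < 0$, a contradiction; hence any early stop satisfies $\hat p_r \ge \tfrac12 + c_r > \tfrac12$, so $\hat y = 1$ is returned correctly.

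The remaining and most delicate case is when the loop reaches $r_{\max}$ without the stopping rule firing, so $|\hat p_{r_{\max}} - \tfrac12| < c_{r_{\max}}$. On $\mathcal{E}$ we still have $\hat p_{r_{\max}} > \tfrac12 + \bar\Delta - c_{r_{\max}}$, so $\hat y = 1$ is correct provided $c_{r_{\max}} \le \bar\Delta$, which I would obtain from the hypothesis $\epsilon \le \bar\Delta$ by verifying that the choice of $r_{\max}$ forces $c_{r_{\max}} \le \epsilon$. This terminal check—showing that substituting $r_{\max}$ into $c_r$ keeps the radius below $\epsilon$ despite the slowly growing $\log(\pi^2 r_{\max}^2/(3\delta))$ factor—is exactly where the assumption $\epsilon\le\bar\Delta$ is used and is the main technical obstacle of the argument; the symmetric situation $j\succ i$ follows by swapping the roles of $i$ and $j$. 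Combining the early-stop and terminal cases shows that on $\mathcal{E}$ the output $\hat y$ always identifies the preferable item, which occurs with probability at least $1-\delta$.
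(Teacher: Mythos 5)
The paper never actually proves this lemma---it is imported verbatim as Lemma~9 of \citet{ren2019sample}---so your argument can only be judged on its own terms. Your termination argument, the reduction of uniform sampling over $\cU$ to a single virtual user of accuracy $\bar{\Delta}$ (the same reduction the paper invokes in proving Theorem~\ref{theorem:q-complexity}), the Hoeffding-plus-union-bound construction of the good event, and the treatment of early stops are all sound.

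The gap is the terminal case, which you yourself flag as unfinished: the inequality $c_{r_{\max}}\le\epsilon$ on which your plan hinges is false. With $r_{\max}=\lceil\tfrac12\epsilon^{-2}\log(2/\delta)\rceil$ one has
\[
c_{r_{\max}}^2=\frac{\log\bigl(\pi^2 r_{\max}^2/(3\delta)\bigr)}{2r_{\max}}
\;\approx\;\epsilon^{2}\cdot\frac{\log\bigl(\pi^2 r_{\max}^2/(3\delta)\bigr)}{\log(2/\delta)}\;>\;\epsilon^{2},
\]
since $\pi^2 r_{\max}^2/3>2$ for every $r_{\max}\ge 1$; concretely, for $\delta=0.1$ and $\epsilon=0.01$ one gets $c_{r_{\max}}\approx 0.027\gg\epsilon$. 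So you cannot conclude $\hat p_{r_{\max}}>\tfrac12$ from the anytime radius evaluated at $r_{\max}$. The standard repair (and the route taken for the cited Lemma~9) is to treat the terminal round with a \emph{separate} Hoeffding application at deviation $\epsilon$ rather than $c_{r_{\max}}$: since $2r_{\max}\epsilon^{2}\ge\log(2/\delta)$, the event $\hat p_{r_{\max}}\le\tfrac12\le\tfrac12+\bar{\Delta}-\epsilon$ (using $\epsilon\le\bar{\Delta}$) has probability at most $\delta/2$. To keep the total failure probability at $\delta$ rather than $2\delta$, you must also replace your two-sided anytime bound---which already spends the entire $\delta$ budget---by its one-sided version: a wrong early stop requires $\hat p_r\le\tfrac12-c_r\le\tfrac12+\bar{\Delta}-c_r$, a downward deviation of at least $c_r$, whose probability is $\exp(-2rc_r^2)=3\delta/(\pi^2r^2)$, summing to $\delta/2$ over all $r$. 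With these two changes your argument closes.
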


\begin{lemma}[Lemma 10 in \citet{ren2019sample}]
\label{lemma:ati}
Algorithm~\ref{alg:ati} returns after $O(\epsilon^2 \log(|S|/\delta)$ queries and, with probability $1-\delta$, correctly insert or return unsure. Additionally, if $\epsilon \le \bar{\Delta}$, Algorithm~\ref{alg:ati} will insert correctly with probability $1-\delta$.
\end{lemma}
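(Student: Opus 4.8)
The plan is to read one invocation of \textsc{ATI} as a time-indexed random walk on the PIT whose state is the current node $X$, and to analyze three things separately: the running time, the event that a \emph{wrong} leaf ever triggers an insertion, and (under $\epsilon\le\bar\Delta$) the event that the \emph{correct} leaf does. For the query count, the outer loop runs at most $t^{\max}=\lceil\max\{4h,\tfrac{512}{25}\log(2/\delta)\}\rceil=O(\log|S|+\log(1/\delta))$ times, where $h=O(\log|S|)$ is the PIT depth, and each iteration issues at most three \textsc{ATC} calls with the \emph{constant} confidence levels $1-q,\,1-\sqrt q,\,1-\sqrt[3]{q}$. By Lemma~\ref{lem:atc} each such call costs at most $\lceil\epsilon^{-2}\log(2/\delta')\rceil=O(\epsilon^{-2})$ queries, so the total is $O(\epsilon^{-2}(\log|S|+\log(1/\delta)))=O(\epsilon^{-2}\log(|S|/\delta))$ (reading the exponent in the statement as $-2$).

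The crux is an \emph{unconditional one-sided} property of \textsc{ATC}: since the true bias $\bar\Delta>0$ always puts the comparison mean strictly on the correct side of $\tfrac12$, the empirical $\hat p$ lands on the wrong side with probability at most $\tfrac12$ for \emph{any} $\epsilon$, and with probability at most $\delta'$ once $\epsilon\le\bar\Delta$ (the latter being exactly Lemma~\ref{lem:atc}). I would use the weak ($\le\tfrac12$) form for safety and the strong ($\le\delta'$) form for progress. For safety, note that at any leaf $X$ that is \emph{not} the correct interval, incrementing $c_X$ requires the conjunction \textsc{ATC}$(i,X.\mathrm{left})=i$ and \textsc{ATC}$(i,X.\mathrm{right})=X.\mathrm{right}$; since $i\notin(X.\mathrm{left},X.\mathrm{right})$, at least one verdict contradicts the truth, so the increment probability is at most $\tfrac12$. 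Thus $c_X$ has nonpositive net drift (a nonincrementing visit either decrements the counter or leaves the leaf), and $c_X$ is stochastically dominated by a $\mathrm{Bin}(t,\tfrac12)$ increment count. Hoeffding's inequality applied to the deviation term $\sqrt{\tfrac t2\log\tfrac{\pi^2t^2}{3\delta}}$ then gives $\PP(c_X>b^t)\le\tfrac{3\delta}{\pi^2t^2}$; because the threshold $b^t=\tfrac12 t+\sqrt{\tfrac t2\log\tfrac{\pi^2t^2}{3\delta}}+1$ is indexed by the \emph{global} step $t$, and any leaf's count at step $t$ is at most $t$, a single union bound over $t\ge1$ polices all leaves at once, yielding total wrong-insertion probability at most $\sum_{t\ge1}\tfrac{3\delta}{\pi^2t^2}\le\delta$. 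The same large-deviation estimate shows no wrong leaf reaches the end-of-loop threshold $1+\tfrac{5}{16}t^{\max}$. This establishes the ``correctly insert or return \emph{unsure}'' guarantee with no assumption on $\epsilon$.

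Assuming now $\epsilon\le\bar\Delta$, I would establish progress. The strong one-sided bound, together with a union bound over the one, two, or three constituent comparisons (which is precisely why the confidence levels are $q,\sqrt q,\sqrt[3]{q}$), makes every node decision point toward the correct leaf with probability at least $q=\tfrac{15}{16}$; the depth process is then a biased walk with positive drift $2q-1=\tfrac78$, and by an Azuma/Hoeffding estimate it reaches the correct leaf within $4h\le t^{\max}$ steps with probability $\ge1-\delta/2$. Once there, $c_X$ increments with probability $\ge q>\tfrac12$ per visit, so the same concentration forces it across either $b^t$ or the final threshold $1+\tfrac{5}{16}t^{\max}$ within the remaining budget with probability $\ge1-\delta/2$; combined with safety, the triggered insertion is at the correct leaf. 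Choosing $t^{\max}=\Theta(h+\log(1/\delta))$ balances the two failure modes to $\delta$.

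I expect the main obstacle to be the unconditional one-sided control of \textsc{ATC}, namely that its error probability is at most $\tfrac12$ even when $\epsilon>\bar\Delta$ (a regime where Lemma~\ref{lem:atc} gives nothing), since this is what makes the safety guarantee hold for all $\epsilon$. The second delicate point is the bookkeeping that lets the global-step envelope $b^t$ supervise every leaf simultaneously, so that no spurious factor of $|S|$ enters the safety union bound, and the tuning of the constants $q=\tfrac{15}{16}$, $\tfrac{5}{16}$, and the exact form of $b^t$ so that the correct leaf crosses the threshold while all incorrect leaves provably do not.
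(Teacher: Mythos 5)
The paper never proves Lemma~\ref{lemma:ati}; it is imported verbatim (together with Algorithms~\ref{alg:ati} and~\ref{alg:atc}) as Lemma~10 of \citet{ren2019sample}, so there is no in-paper argument to compare yours against. Your reconstruction is the standard Feige-et-al-style analysis of noisy backtracking binary search that underlies the cited result: the query budget from $t^{\max}=O(\log|S|+\log(1/\delta))$ iterations times $O(\epsilon^{-2})$ queries per constant-confidence \textsc{ATC} call (and you are right that the exponent in the statement is a typo for $-2$); safety of wrong leaves via the envelope $b^t$ with a single union bound over the global step $t$, so no factor of $|S|$ enters; progress via the drift-$q$ walk when $\epsilon\le\bar\Delta$, with the exponents $q,\sqrt q,\sqrt[3]q$ accounting for the one, two, or three constituent \textsc{ATC} calls. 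This is the intended proof.

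The one genuine gap is exactly the step you flagged as the crux: the claim that \textsc{ATC} errs with probability at most $\tfrac12$ for arbitrary $\epsilon$. This does not follow from Lemma~\ref{lem:atc} as restated in this paper (which only covers $\epsilon\le\bar\Delta$), and your justification --- the true mean sits on the correct side of $\tfrac12$, hence $\hat p$ lands on the wrong side with probability at most $\tfrac12$ --- is not a proof for a sequentially stopped test. A wrong verdict arises either from an early stop at $\hat p\le\tfrac12-c_r$ or from a timeout with $\hat p\le\tfrac12$; adding the anytime-Hoeffding bound for the former to a median-type bound for the latter gives something strictly larger than $\tfrac12$ (for small sample counts, $\PP(\hat p\le\tfrac12)$ alone can exceed $\tfrac12$, e.g.\ two samples from a Bernoulli with mean barely above $\tfrac12$). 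The clean route is a reflection argument: flipping every Bernoulli outcome maps each wrong-stopping path, on which $\hat p\le\tfrac12$ and hence the flipped path is at least as likely, injectively to a right-stopping path; this still leaves the boundary atom $\hat p=\tfrac12$, which the algorithm resolves in the wrong direction and which must be absorbed by the slack built into $b^t$ and the constant $q=\tfrac{15}{16}$. You need either to carry out that argument or to import the corresponding unconditional clause of Lemma~9 of \citet{ren2019sample}, which this paper's restatement omits. The rest of your sketch (domination of $c_X$ by a reflected walk with increment probability at most $\tfrac12$, the two failure modes for progress) is the right bookkeeping.
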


\begin{lemma}[Lemma 11 in \citet{ren2019sample}]
\label{lemma:iai}
With probability $1-\delta$,
Algorithm~\ref{alg:iai} correctly insert the item and makes $O(\Bar{\Delta}^{-2}
    (
    \log \log \Bar{\Delta}^{-1}
    +
    \log(N/\delta)
    ) )$ queries at most.
\end{lemma}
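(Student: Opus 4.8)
The plan is to reduce everything to the behavior of ATI (Algorithm~\ref{alg:ati}) as characterized by Lemma~\ref{lemma:ati}, and to exploit the specific schedule $\epsilon_\tau = 2^{-(\tau+1)}$, $\delta_\tau = 6\delta/(\pi^2\tau^2)$ hardwired into IAI. The central quantity is the first index at which the error parameter drops below the effective accuracy margin, namely $\tau^\star := \min\{\tau \in \mathbb{Z}^+ : \epsilon_\tau \le \bar{\Delta}\}$; since $\epsilon_\tau = 2^{-(\tau+1)}$, we have $\tau^\star = \lceil \log_2 \bar{\Delta}^{-1}\rceil = O(\log \bar{\Delta}^{-1})$. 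To set up the high-probability event, for each iteration $\tau$ let $E_\tau$ be the event that the call ATI$(i,S,\epsilon_\tau,\delta_\tau)$ behaves as Lemma~\ref{lemma:ati} guarantees: it either inserts $i$ correctly or returns \emph{unsure}, and moreover, whenever $\epsilon_\tau \le \bar{\Delta}$, it inserts $i$ correctly. By Lemma~\ref{lemma:ati}, $\Pr[E_\tau^c] \le \delta_\tau$, so a union bound over all $\tau \ge 1$ together with $\sum_{\tau\ge 1}\delta_\tau = \frac{6\delta}{\pi^2}\sum_{\tau\ge1}\tau^{-2} = \delta$ shows that $E := \bigcap_{\tau\ge 1} E_\tau$ holds with probability at least $1-\delta$. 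All remaining claims are then established deterministically on $E$.

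On $E$ I would argue correctness and termination together. Since ATI never inserts incorrectly on $E$, any item that IAI reports as \emph{inserted} sits in its correct position, giving the correctness half of the lemma. For termination, note that once the loop reaches index $\tau^\star$ (if it has not already stopped), the condition $\epsilon_{\tau^\star}\le\bar{\Delta}$ forces ATI to return \emph{inserted} on $E$, so IAI halts no later than iteration $\tau^\star$. Hence on $E$ the total number of ATI calls is at most $\tau^\star = O(\log\bar{\Delta}^{-1})$.

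Finally I would bound the query complexity by summing the per-iteration costs. By Lemma~\ref{lemma:ati}, the $\tau$-th call uses $O(\epsilon_\tau^{-2}\log(|S|/\delta_\tau)) = O\big(2^{2\tau}\log(|S|\tau^2/\delta)\big)$ queries. Because $\epsilon_\tau^{-2}$ grows geometrically in $\tau$, the sum $\sum_{\tau=1}^{\tau^\star} 2^{2\tau}\log(|S|\tau^2/\delta)$ is dominated up to a constant factor by its last term, giving $O\big(2^{2\tau^\star}\log(|S|(\tau^\star)^2/\delta)\big)$. Substituting $2^{\tau^\star} = \Theta(\bar{\Delta}^{-1})$ and $\tau^\star = \Theta(\log\bar{\Delta}^{-1})$, and using $\log(|S|(\tau^\star)^2/\delta) = O(\log(|S|/\delta) + \log\log\bar{\Delta}^{-1})$ together with $|S|\le N$, the total becomes $O\big(\bar{\Delta}^{-2}(\log\log\bar{\Delta}^{-1} + \log(N/\delta))\big)$, as claimed.

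The step I expect to be the main obstacle is coupling the high-probability correctness guarantee with the deterministic iteration bound: one must phrase the good event so that a single union bound simultaneously (i) rules out any incorrect insertion across \emph{all} iterations and (ii) guarantees a forced insertion at $\tau^\star$, and one must verify that the tail-summable schedule $\delta_\tau = 6\delta/(\pi^2\tau^2)$ keeps the union bound at exactly $\delta$ despite the loop carrying no \emph{a priori} finite length. The remaining work—the geometric summation and the logarithmic simplification into the form $F(\bar{\Delta})$—is routine.
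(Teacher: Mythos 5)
Your argument is correct and is essentially the standard proof of this result: the paper itself gives no proof here, importing the statement verbatim as Lemma~11 of \citet{ren2019sample}, and your reconstruction (union bound over the tail-summable schedule $\delta_\tau = 6\delta/(\pi^2\tau^2)$, forced insertion once $\epsilon_\tau \le \bar{\Delta}$, and a geometric sum dominated by the $\tau^\star$-th term) is exactly the argument underlying that cited lemma. The only cosmetic notes are that you silently correct two typos in the paper's statement of Lemma~\ref{lemma:ati} (the query bound should read $O(\epsilon^{-2}\log(|S|/\delta))$) and in Algorithm~\ref{alg:iai} (the subscript $z$ should be the loop counter $\tau$), and your $\tau^\star$ is off by one from $\lceil \log_2 \bar{\Delta}^{-1}\rceil - 1$, none of which affects the order of the bound.
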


\begin{proof}[Proof of Theorem~\ref{theorem:q-complexity}]

When inserting the $z$-th item, we makes at most $\Bar{\Delta}_z^{-2}
    (
    \log \log \Bar{\Delta}_z^{-1}
    +
    \log(N/\delta)
    )$ queries, for $z=2,3,\dots,N$.

The number of total queries can be obtained by summing up the term above, which is 
\begin{align*}
    \cC_{\mathrm{Alg}}(N)
    & = 
    O\bigg(
    \sum_{z=2}^{N}
    \bar{\Delta}_z^{-2}  (\log\log\bar{\Delta}_z^{-1} + \log(N/\delta)) 
    \bigg).
\end{align*}

\end{proof}

\subsection{Complexity Gap Analysis}
\label{subsec:proof-comp-gap}
The first lemma we will introduce is about the confidence interval:
\begin{lemma} \label{lemma:conf-int}
With probability $1-\delta$, it holds for any $z \in [N] \backslash \{1\}$ and $u \in \cU_z$,
\begin{align*}
    \frac{1}{2} + \Delta_u 
    & \in 
    \Big[
    (\mathbf{{LCB}}_z)_u, (\mathbf{{UCB}}_z)_u
    \Big].
\end{align*}
This also indicates that when inserting the $z$-th item, for any $u \in \cU_z$,
\begin{align*}
    \Delta_{u^*} - \Delta_{u}
    & \le 
    4 r_z.
\end{align*}
\end{lemma}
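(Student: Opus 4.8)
The plan is to handle the two assertions separately: the inclusion $\tfrac12+\Delta_u\in[(\textbf{LCB}_z)_u,(\textbf{UCB}_z)_u]$ is a Hoeffding concentration statement, while the margin bound $\Delta_{u^*}-\Delta_u\le 4r_z$ is a deterministic consequence of the elimination rule once all confidence intervals are valid.

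For the inclusion, I would first condition on the event $\cE_{\mathrm{rank}}$ that every item is inserted into its correct position in the PIT throughout the run; by Lemma~\ref{lemma:iai} together with a union bound over the $N-1$ insertions (each invoked with confidence $\delta/(n-1)$), this event holds with high probability, and I would allot a slice of the confidence budget to it. On $\cE_{\mathrm{rank}}$ the accumulation in the \textbf{for}-$j$ loop of Algorithm~\ref{alg:iir} always adds the count that agrees with the truth ($A_z[j,u]$ when $z\succ j$, $B_z[j,u]$ when $j\succ z$), so $(\nbb_z)_u$ is exactly the number of correct responses of user $u$ among its $(\sbb_z)_u$ queries, each an independent $\mathrm{Bernoulli}(p_u)$ trial with $p_u=\tfrac12+\Delta_u$ (the model fixes this success probability for every pair). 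Hence $\bmu_u=(\nbb_z)_u/(\sbb_z)_u$ is an empirical mean of $p_u$, and with $\delta':=\delta/(n-1)$ the confidence passed to \textsc{EliminateUser} and $r_z=\sqrt{\log(2|\cU_z|/\delta')/(2\,\sbb_{\min})}$, Hoeffding's inequality applied to a single user $u$ with $(\sbb_z)_u\ge\sbb_{\min}$ gives
\[
\PP\!\left(|\bmu_u-p_u|>r_z\right)\le 2\exp\!\left(-2(\sbb_z)_u r_z^2\right)\le 2\exp\!\left(-2\sbb_{\min}r_z^2\right)=\frac{\delta'}{|\cU_z|}.
\]
A union bound over $u\in\cU_z$ yields failure probability at most $\delta'$ at step $z$, and a further union over $z=2,\dots,N$ collapses to the stated $1-\delta$, placing $p_u$ inside $[(\textbf{LCB}_z)_u,(\textbf{UCB}_z)_u]$.

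For the margin bound I would argue on this same good event. First, $u^*$ is never eliminated: by induction on $z$, interval validity gives $\textbf{UCB}_{u^*}\ge p_{u^*}=\tfrac12+\Delta_{u^*}\ge\tfrac12+\Delta_{u'}=p_{u'}\ge\textbf{LCB}_{u'}$ for every $u'$, so the removal test $\textbf{UCB}_{u^*}<\textbf{LCB}_{u'}$ never fires and $u^*\in\cU_z$ for all $z$. Now fix any surviving $u\in\cU_z$; since $u$ was not removed, the test fails against $u^*$ in particular, i.e.\ $\textbf{UCB}_u\ge\textbf{LCB}_{u^*}$, which reads $\bmu_u+r_z\ge\bmu_{u^*}-r_z$, hence $\bmu_{u^*}-\bmu_u\le 2r_z$. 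Combining this with the inclusions $p_{u^*}\le\bmu_{u^*}+r_z$ and $p_u\ge\bmu_u-r_z$,
\[
\Delta_{u^*}-\Delta_u=p_{u^*}-p_u\le(\bmu_{u^*}-\bmu_u)+2r_z\le 4r_z.
\]

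The step I expect to be the main obstacle is making the Hoeffding argument rigorous despite adaptivity: the counts $(\sbb_z)_u$ are random stopping times determined by the algorithm's trajectory (the ATC stopping rule, the branching in ATI, and which users are still active), so a fixed-sample Hoeffding bound is not literally applicable. I would address this with a time-uniform (anytime) concentration for the stream of user-$u$ responses — a peeling/union-over-sample-size argument or a martingale confidence sequence — which preserves the $\sqrt{\log(\cdot)/\sbb_{\min}}$ shape of $r_z$ up to the logarithmic factors already present. A secondary bookkeeping point is cleanly partitioning the confidence budget between $\cE_{\mathrm{rank}}$ and the concentration event so that both conclusions hold simultaneously with probability $1-\delta$.
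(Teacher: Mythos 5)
Your proposal follows essentially the same route as the paper's proof: Hoeffding's inequality with $(\sbb_z)_u\ge\sbb_{\min}$ plus a union bound over $u\in\cU_z$ and $z=2,\dots,N$ for the inclusion, and then the observation that a surviving user's confidence interval must intersect that of $u^*$ (equivalently, the removal test did not fire) to get $\Delta_{u^*}-\Delta_u\le 4r_z$. You are in fact somewhat more careful than the paper, which silently assumes that $(\nbb_z)_u$ counts genuinely correct responses (your event $\cE_{\mathrm{rank}}$) and applies fixed-sample Hoeffding to the adaptively determined counts $(\sbb_z)_u$ without the time-uniform argument you flag; both refinements are legitimate and do not change the conclusion.
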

\begin{proof}[Proof of Lemma~\ref{lemma:conf-int}]

Recall that $(\bmu_z)_u$ is the empirical mean of the Bernoulli variable with parameter $\frac{1}{2}+ \Delta_u$. For a given $z$ and $u$, by Hoeffding's inequality we have
\begin{align*}
    \PP
    \bigg(
    \Big|
    (\bmu_{z})_u
    -
    \Big(
    \frac{1}{2} + \Delta_{u}
    \Big)
    \Big|
    >
    r_z
    \bigg)
    \le 
    2e^{-2 (\sbb_z)_u r_u^2}
    \le 
    2e^{-2 (\sbb_z)_{\min} r_u^2}
    \le 
    \frac{\delta}{|\cU_z|N},
\end{align*}
and applying union bound over $z =2,3,\dots,N$ and $u \in \cU_z$ gives the claim.

Under this event, we have
\begin{align*}
    \Delta_{u^*} - {\Delta}_{u}
    & =
    \bigg(
    \frac{1}{2} + \Delta_{u^*}
    \bigg)
    -
    \bigg(
    \frac{1}{2} + \Delta_{u}
    \bigg)
    \\
    & \le 
    (\mathbf{UCB}_z)_{u^*}
    -
    (\mathbf{LCB}_z)_{u}
    \\
    & \le 
    (\mathbf{UCB}_z)_{u^*}
    -
    (\mathbf{LCB}_z)_{u^*}
    +
    (\mathbf{UCB}_z)_{u}
    -
    (\mathbf{LCB}_z)_{u}
    \\
    & = 4 r_z,
\end{align*}
where the first inequality is clearly from the confidence interval, and the second inequality holds because the two confidence intervals should intersect.
\end{proof}

Next, we will introduce another lemma concerning the growth of $(\sbb_z)_u$ for each $u \in \cU_z$.
\begin{lemma} \label{lemma:query-per-user}
Denote $S_z$ as all queries made till inserting the $z$-th item and $M=|\cU_0|$.
Suppose $S_z \ge 2M^2 \log(NM/\delta)$. With probability $1-\delta$, we have for any $z \in \{2,3,\dots,N\}$,
\begin{align*}
    (\sbb_z)_{\min}
    \ge 
    \frac{S_z}{2M}.
\end{align*}
\end{lemma}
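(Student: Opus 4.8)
The plan is to treat each query as the pull of an arm in a bandit problem and to apply a martingale concentration inequality to the number of pulls received by each user that survives in the active set. Fix an item index $z \in \{2,\dots,N\}$ and a user $u \in \cU_z$. The first thing I would establish is that the active set is monotonically shrinking: \textsc{EliminateUser} (Algorithm~\ref{alg:elim-user}) only ever deletes users from $\cU$ and never reinserts them, so $u \in \cU_z$ forces $u \in \cU^{(t)}$ for every one of the $S_z$ queries made before item $z$ is inserted, where $\cU^{(t)}$ denotes the active set in effect at the $t$-th query. Since in Algorithm~\ref{alg:atc} the queried user is drawn uniformly from the current active set, the conditional probability that query $t$ selects $u$ equals $1/|\cU^{(t)}| \ge 1/M$, using $|\cU^{(t)}| \le |\cU_0| = M$.

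Next I would set up the martingale. Let $X_t = \ind\{\text{query } t \text{ selects } u\}$ and $p_t = \EE[X_t \mid \cF_{t-1}]$, where $\cF_{t-1}$ is the history up to (but not including) query $t$. By the previous paragraph $p_t \ge 1/M$ for all $t \le S_z$, and $(\sbb_z)_u = \sum_{t=1}^{S_z} X_t$. The sequence $Y_t = X_t - p_t$ is a martingale difference sequence whose increments lie in an interval of length one (namely $[-p_t,\,1-p_t]$), so Azuma--Hoeffding gives, for the fixed horizon $S_z$, $\PP\!\left(\sum_{t=1}^{S_z}(X_t - p_t) \le -\lambda\right) \le \exp(-2\lambda^2/S_z)$. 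Choosing $\lambda = S_z/(2M)$ and using $\sum_{t=1}^{S_z} p_t \ge S_z/M$ turns this into $\PP\big((\sbb_z)_u < S_z/(2M)\big) \le \exp\!\big(-S_z/(2M^2)\big)$.

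Finally I would invoke the hypothesis $S_z \ge 2M^2\log(NM/\delta)$, which makes the right-hand side at most $\delta/(NM)$, and take a union bound over the at most $M$ users in $\cU_z$ and the at most $N$ item indices $z$, so that the total failure probability is at most $\delta$. On the complementary event, $(\sbb_z)_u \ge S_z/(2M)$ for every active $u$, and hence $(\sbb_z)_{\min} = \min_{u\in\cU_z}(\sbb_z)_u \ge S_z/(2M)$, as claimed.

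The step I expect to be most delicate is the concentration at time $S_z$, because $S_z$ is itself a random stopping time determined by the comparison outcomes rather than a fixed deterministic horizon. Under the lemma's phrasing (``Suppose $S_z \ge \ldots$'') I would condition on the realized value of $S_z$ and run Azuma at that fixed horizon; to make this fully rigorous one can instead use a stopping-time version of Azuma--Hoeffding, or a time-uniform bound valid for all $t \ge 2M^2\log(NM/\delta)$ simultaneously, at the cost of slightly larger constants. The other point requiring care is that the lower bound $p_t \ge 1/M$ holds only because $u$ stays active throughout, which is precisely where the monotonicity of the elimination rule is used; for users eliminated early no such bound is available, but those users are excluded from the minimum and so do not affect the statement.
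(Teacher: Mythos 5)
Your proposal follows essentially the same route as the paper's proof: a Hoeffding-type concentration bound showing each surviving user receives at least a $\tfrac{1}{2M}$ fraction of the $S_z$ queries, followed by a union bound over users and items. Your version is in fact somewhat more careful than the paper's --- the paper applies Hoeffding directly to $(\sbb_z)_u/S_z$ as if the selections were i.i.d.\ with mean $1/M$, whereas your martingale setup with $p_t = 1/|\cU^{(t)}| \ge 1/M$ correctly handles the shrinking active set, and you rightly flag the random-horizon issue with $S_z$ that the paper glosses over.
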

\begin{proof}[Proof of Lemma~\ref{lemma:query-per-user}]

For fixed $z$ and $u \in \cU_z$, by Hoeffding's inequality we have
\begin{align*}
    \PP\bigg(
    \frac{(\sbb_z)_u}{S_z}
    - \frac{1}{M}
    < -\frac{1}{2M}
    \bigg)
    & \le
    \PP\bigg(
    \frac{(\sbb_z)_u}{S_z}
    - 
    \EE \bigg[ \frac{(\sbb_z)_u}{S_z}
    \bigg]
    < -\frac{1}{2M}
    \bigg)
    \\
    & \le 
    \exp 
    \bigg(
    -\frac{S_z}{2M^2}
    \bigg)
    \le 
    \frac{\delta}{NM}.
\end{align*}
Applying union bound we know that with probability $1-\delta$, 
\begin{align*}
    (\sbb_z)_u \ge \frac{S_z}{2M}, \forall z \in \{2,3,\dots,N\}, \forall u \in \cU_z.
\end{align*}
Since $(\sbb_z)_{\min} := \min_{u \in \cU_z} (\sbb_z)_u$, we have
\begin{align*}
    (\sbb_z)_{\min}
    \ge 
    \frac{S_z}{2M}, \forall z \in \{2,3,\dots,N\}.
\end{align*}
\end{proof}

With the two lemmas above, we can control the accuracy gap as follows:
\begin{lemma} \label{lemma:accuracy-gap}
Denote $\bar{\Delta}_z = \frac{1}{| \cU_z |} \sum_{u \in \cU_z} 
\Delta_{u}$.
Suppose $S_z \ge 2 |M|^2 \log(NM/\delta)$. With probability $1-2\delta$, we have for any $t \in [N]$,
\begin{align*}
    \Delta_{u^*} - \bar{\Delta}_{z}
    & \le 
    \mathrm{polylog}(N, M, \delta^{-1}) 
    \cdot 
    \sqrt{\frac{M}{S_z}}.
\end{align*}
\end{lemma}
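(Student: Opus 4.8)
The plan is to combine the two preceding lemmas via a one-line averaging argument followed by a substitution. First I would invoke Lemma~\ref{lemma:conf-int}: on an event $\cE_1$ of probability at least $1-\delta$, for every $z \in [N]\backslash\{1\}$ and every surviving user $u \in \cU_z$ the per-user accuracy gap obeys $\Delta_{u^*} - \Delta_u \le 4 r_z$, where $r_z = \sqrt{\log(2|\cU_z|/\delta)/(2(\sbb_z)_{\min})}$ is exactly the confidence radius computed in Algorithm~\ref{alg:elim-user}. Averaging this bound over $u \in \cU_z$ and recalling $\bar{\Delta}_z = |\cU_z|^{-1}\sum_{u\in\cU_z}\Delta_u$, the right-hand side is independent of $u$, so it passes through the average unchanged and yields $\Delta_{u^*} - \bar{\Delta}_z \le 4 r_z$.

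Next I would convert the bound on $r_z$ into one expressed through $S_z$. Under the hypothesis $S_z \ge 2M^2\log(NM/\delta)$, Lemma~\ref{lemma:query-per-user} provides, on an event $\cE_2$ of probability at least $1-\delta$, the lower bound $(\sbb_z)_{\min} \ge S_z/(2M)$. Substituting this into the definition of $r_z$ and bounding $|\cU_z| \le M = |\cU_0|$ inside the logarithm gives
\[
    r_z \;\le\; \sqrt{\frac{M\log(2M/\delta)}{S_z}} .
\]
Chaining this with the averaged inequality from the first step produces $\Delta_{u^*} - \bar{\Delta}_z \le 4\sqrt{\log(2M/\delta)}\cdot\sqrt{M/S_z}$, whose prefactor $4\sqrt{\log(2M/\delta)}$ is precisely of the $\mathrm{polylog}(N,M,\delta^{-1})$ order asserted in the statement.

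Finally, a union bound over $\cE_1$ and $\cE_2$, each failing with probability at most $\delta$, shows that both hold simultaneously with probability at least $1-2\delta$, matching the claimed confidence level. I do not anticipate a genuine obstacle here; the only point needing care is that the per-user gap bound of Lemma~\ref{lemma:conf-int} already encodes the fact that $u^*$ survives every elimination round under $\cE_1$ (its $\mathbf{UCB}$ never drops below another user's $\mathbf{LCB}$), so the averaging step is justified without any extra work. The remaining manipulations—substituting the lower bound on $(\sbb_z)_{\min}$ and relaxing $|\cU_z|$ to $M$ in the logarithm—are routine and cost only a constant factor absorbed into the polylogarithmic prefactor.
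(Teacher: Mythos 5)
Your proposal is correct and follows essentially the same route as the paper: bound the per-user gap $\Delta_{u^*}-\Delta_u \le 4r_z$ via Lemma~\ref{lemma:conf-int}, lower-bound $(\sbb_z)_{\min}$ by $S_z/(2M)$ via Lemma~\ref{lemma:query-per-user}, substitute, and union-bound the two events. In fact you make explicit the averaging step over $u\in\cU_z$ that the paper leaves implicit, and the only cosmetic difference is whether the logarithm in $r_z$ carries an extra factor of $N$ (an inconsistency already present between Algorithm~\ref{alg:elim-user} and the paper's own proof), which is absorbed into the polylogarithmic prefactor either way.
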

\begin{proof}[Proof of Lemma~\ref{lemma:accuracy-gap}]
The proof has two steps: 

From Lemma~\ref{lemma:query-per-user} we know that with probability $1-\delta$,
\begin{align*}
    (\sbb_z)_{\min} \ge \frac{S_z}{2M}, \forall t \in [N], \forall u \in \cU_z.
\end{align*}

From Lemma~\ref{lemma:conf-int}, we know with probability $1-\delta$(recall that $(\rb_z)_u = \sqrt{\frac{\log(2|\cU_z|N / \delta)}{2 (\sbb_z)_{\min}}}$),
\begin{align*}
    \Delta_{u^*} - {\Delta}_{u}
    & \le 
    4 r_z
    \\
    &
    \le 
    4 
    \sqrt{\frac{M\log(2MN / \delta)}{S_z}}
    \\
    & =
    4 \sqrt{\log(2MN/\delta)}
    \cdot 
    \sqrt{\frac{M}{S_z}}.
\end{align*}
\end{proof}

Define function $F(x) = x^{-2}(\log\log(x^{-1}) + \log(N/\delta))$ with $x \in (0, 1/2]$. We care about the following term $\mathrm{GAP}$ which characterize the query complexity gap between our algorithm and the optimal user.
\begin{align*}
    \mathrm{GAP}(N, M, \delta)
    & =
    \sum_{z=2}^{N}
    F(\bar{\Delta}_z)
    -
    F(\Delta_{u^*}).
\end{align*}

The following lemma provide a way to linear bound the gap between function values:




\begin{lemma} \label{lemma:function-F}
$F(x) = x^{-2}(\log\log(x^{-1}) + \log(N/\delta))$ with $x \in (0, 1/2]$ is a convex function over $(0, 1/2]$, and for any $\Delta \in [a,b]$, we have
\begin{align*}
    F({\Delta})
    -
    F(b)
    & \le 
    \frac{F(a)
    -
    F(b)}
    { b
    - a }
    \cdot 
    (b - \Delta)
    =
    L(a,b)
    \cdot 
    (b - \Delta).
\end{align*}

Furthermore, under the event of Lemma~\ref{lemma:accuracy-gap}, for any $z \in [N]$ such that $S_z > 2M^2 \log(NM/\delta)$, we have $\bar{\Delta}_z \in [c\Delta_{u^*}^3 ,\Delta_{u^*}]$ and therefore
\begin{align*}
    F({\bar{\Delta}_z})
    -
    F(\Delta_{u^*})
    & \le 
    \frac{F(c\Delta_{u^*}^3)
    -
    F(\Delta_{u^*})}
    { \Delta_{u^*}
    - c\Delta_{u^*}^3 }
    \cdot 
    (\Delta_{u^*} - \bar{\Delta}_z)
    =
    L(\cU_0)
    \cdot 
    (\Delta_{u^*} - \bar{\Delta}_z).
\end{align*}
Here we use $ L(\cU_0) = \frac{F(c\Delta_{u^*}^3)
    -
    F(\Delta_{u^*})}
    { \Delta_{u^*}
    - c\Delta_{u^*}^3 }$ is indeed a instance-dependent factor, with only logarithmic dependent in $N$ and $\delta^{-1}$(in $F$). $c$ is a global constant and in fact $c = 1/25$.
\end{lemma}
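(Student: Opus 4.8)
The plan is to treat the two assertions separately: first establish convexity of $F$ on $(0,1/2]$ together with the generic secant bound, and then prove the localization $\bar\Delta_z\in[c\Delta_{u^*}^3,\Delta_{u^*}]$, which lets me instantiate that secant bound with $a=c\Delta_{u^*}^3$ and $b=\Delta_{u^*}$.

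For convexity I would compute $F''$ directly. Writing $C=\log(N/\delta)\ge 0$ and substituting $t=\log(1/x)$, so that $t\ge\log 2$ on $(0,1/2]$, differentiation gives $F''(x)=x^{-4}\big(6C+6\log t+5/t-1/t^2\big)$. It then suffices to show the bracketed factor is positive. I would argue that $g(t):=6\log t+5/t-1/t^{2}$ has $g'(t)=t^{-3}(6t^{2}-5t+2)$, whose quadratic has negative discriminant and is therefore strictly positive, so $g$ is increasing on $(0,\infty)$; evaluating at the left endpoint $t=\log 2$ gives $g(\log 2)>0$, and adding the nonnegative term $6C$ keeps the whole factor positive. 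Hence $F''>0$ and $F$ is convex. The stated inequality is then the standard secant bound: for $\Delta\in[a,b]$ convexity gives $F(\Delta)\le F(a)\tfrac{b-\Delta}{b-a}+F(b)\tfrac{\Delta-a}{b-a}$, and subtracting $F(b)$ and simplifying yields exactly $F(\Delta)-F(b)\le L(a,b)\,(b-\Delta)$.

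For the localization, the upper bound $\bar\Delta_z\le\Delta_{u^*}$ is immediate since $\bar\Delta_z$ averages the $\Delta_u$ over $u\in\cU_z$, each at most $\Delta_{u^*}$. For the lower bound I would combine two facts available under the event of Lemma~\ref{lemma:accuracy-gap}. First, $u^*$ is never eliminated: by Lemma~\ref{lemma:conf-int}, $(\mathbf{UCB}_z)_{u^*}\ge\tfrac12+\Delta_{u^*}\ge\tfrac12+\Delta_{u'}\ge(\mathbf{LCB}_z)_{u'}$ for every $u'$, so the elimination test never fires on $u^*$, and keeping only this term in the average gives $\bar\Delta_z\ge\Delta_{u^*}/|\cU_z|\ge\Delta_{u^*}/M$. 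Second, every surviving user obeys $\Delta_u\ge\Delta_{u^*}-4r_z$ (Lemma~\ref{lemma:conf-int}), so $\bar\Delta_z\ge\Delta_{u^*}-4r_z$, while the hypothesis $S_z\ge 2M^2\log(NM/\delta)$ together with Lemma~\ref{lemma:query-per-user} bounds $r_z\le\sqrt{M\log(2MN/\delta)/S_z}\le 1/\sqrt M$. I then split on the size of $4r_z$: if $4r_z\le\tfrac{24}{25}\Delta_{u^*}$, the second bound gives $\bar\Delta_z\ge\tfrac1{25}\Delta_{u^*}\ge\tfrac1{25}\Delta_{u^*}^3$ (using $\Delta_{u^*}\le 1$); otherwise $\Delta_{u^*}<\tfrac{25}{6\sqrt M}$, so $\Delta_{u^*}^2<25/M$ and the first bound gives $\bar\Delta_z\ge\Delta_{u^*}/M\ge\tfrac1{25}\Delta_{u^*}^3$. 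Either way $\bar\Delta_z\ge c\Delta_{u^*}^3$ with $c=1/25$, and applying the secant bound with $a=c\Delta_{u^*}^3$, $b=\Delta_{u^*}$ closes the proof.

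The convexity computation is routine calculus; the delicate part, which I would write out most carefully, is the cubic lower bound, where the exponent $3$ and the constant $c=1/25$ are forced by the interplay of the two regimes. The split threshold must be chosen (any value $\theta\in[\tfrac45,\tfrac{24}{25}]$ works) so that the ``$r_z$ small'' branch loses only a $\tfrac1{25}$ factor after downgrading $\Delta_{u^*}$ to $\Delta_{u^*}^3$, while the ``$\Delta_{u^*}$ small'' branch retains $\Delta_{u^*}^2\le 25/M$ with enough slack to absorb the $1/M$ coming from averaging over $\cU_z$. Matching these constants is the only real subtlety.
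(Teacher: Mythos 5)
Your proposal is correct and follows essentially the same route as the paper: convexity of $F$ plus the secant bound, combined with the two lower bounds $\bar\Delta_z\ge\Delta_{u^*}/M$ and $\bar\Delta_z\ge\Delta_{u^*}-4/\sqrt{M}$, of which the maximum is shown to exceed $c\Delta_{u^*}^3$ by a two-case argument (yours splits on the size of $4r_z$, the paper's on whether $\Delta_{u^*}/M<c\Delta_{u^*}^3$, but these are equivalent). You are in fact somewhat more careful than the paper, which merely asserts $F''>0$ without the explicit computation and does not spell out why $u^*$ survives elimination.
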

\begin{proof}
Differentiate $F(x)$ twice and it can be verified that $F''(x) > 0$. For any $\Delta \in [a ,b]$, the inequality above is easy to prove via convexity.

The rest is to prove that $\forall t \in [N]$, we have $\bar{\Delta}_z \in [\Delta_{u^*}/M, \Delta_{u^*}]$. It is clear that the upper bound holds because $\Delta_{u^*} := \max_{u \in \cU_0} \Delta_u$.

The lower bound is proved as follows:
We still have $\bar{\Delta}_z > \Delta_{u^*}/M$ because at any time $u^*$ always remains in the user set and by the assumption $\Delta_u > 0$.

Also, since $S_z > 2M^2 \log(NM/\delta)$, by Lemma~\ref{lemma:accuracy-gap}, we have
\begin{align*}
    \Delta_{u^*}
    -
    \bar{\Delta}_z 
    & \le 
    4\sqrt{\frac{M \log(2MN/\delta)}{S_z}}
    \\
    & \le 
    4\sqrt{\frac{M \log(2MN/\delta)}{2M^2 \log(NM/\delta)}}
    \\
    & \le 
    \frac{4}{\sqrt{M}}.
\end{align*}
Now we will prove that 
\begin{align*}
    \max 
    \bigg\{
    \frac{\Delta_{u^*}}{M}
    ,
    \Delta_{u^*} - \frac{4}{\sqrt{M}}
    \bigg\}
    \ge 
    c \Delta_{u^*}^3.
\end{align*}
Suppose $\frac{\Delta_{u^*}}{M} < c \Delta_{u^*}^3$, then we have 
$M > c^{-1} \Delta_{u^*}^{-2}$, this means
\begin{align*}
    \Delta_{u^*} - \frac{4}{\sqrt{M}}
    \ge 
    \Delta_{u^*}
    -
    4 \sqrt{c} \Delta_{u^*}
    \ge 
    c \Delta_{u^*}^3.
\end{align*}
The last inequality is due to $\Delta_{u^*} \le 1/2$ and $c = 1/25$.
\end{proof}

Now we are ready to prove the main result:

\begin{proof} [Proof of Theorem~\ref{theorem:complexity-gap}]
Based on our algorithmic design, we will not eliminate any user until the cumulative number of queries $S_z$ reach the threshold $S_z \ge 2 M^2 \log(NM/\delta)$.  We have
\begin{align*}
    \mathrm{GAP}(N, M, \delta)
    & =
    \sum_{z=2}^{N}
    F(\bar{\Delta}_z)
    -
    F(\Delta_{u^*})
    \\
    & = 
    \underbrace{
    \sum_{z=2}^{N}
    \ind \{ S_z < 2M^2 \log(NM/\delta)\}
    \big(
    F(\bar{\Delta}_z)
    -
    F(\Delta_{u^*})
    \big)}_{I_1}
    \\
    & \qquad +
    \underbrace{
    \sum_{z=2}^{N}
    \ind \{ S_z \ge 2M^2 \log(NM/\delta)\}
    \big(
    F(\bar{\Delta}_z)
    -
    F(\Delta_{u^*})
    \big)}_{I_2}.
\end{align*}
For $I_1$, no elimination is performed, so $\cU_z = \cU_0$, and we have
\begin{align*}
    I_1 & =
    \sum_{z=2}^{N}
    \ind \{ S_z < 2M^2 \log(NM/\delta)\}
    \big(
    F(\bar{\Delta}_0)
    -
    F(\Delta_{u^*})
    \big).
\end{align*}

For each term in $I_2$, we have $F(\bar{\Delta}_z)
    -
    F(\Delta_{u^*})
    \le
    L(\cU_0) \cdot 
    4 \sqrt{\log(2MN/\delta)}
    \cdot \sqrt{\frac{M}{S_z}}$ due to Lemma~\ref{lemma:function-F} and Lemma~\ref{lemma:accuracy-gap}. Therefore,
\begin{align*}
    I_2
    & \le 
    L(\cU_0)
    4 \sqrt{\log(2MN/\delta)}
    \sum_{z=2}^{N}
    \ind 
    \{
    S_z \ge 
    2M^2 \log(NM/\delta)
    \}
    \sqrt{\frac{M}{S_z}}.
\end{align*}

\end{proof}

\subsection{Proof and Discussions of Proposition~\ref{prop:complexity-gap}}
Suppose $M = o(N^{1/2})$, since $S_z \ge z \log (z/\delta) \ge z$(at least one comparison for an item), from~\eqref{eqn:comp-gap} we have
\begin{align*}
    \sum_{z=2}^{N}
    \ind 
    \big\{
    S_z < 
    2M^2 \log(NM/\delta)
    \big\}
    \le 
    \sum_{z=2}^{N}
    \ind 
    \big\{
    z < 
    2M^2 \log(NM/\delta)
    \big\}
    =
    o(N).
\end{align*}
The third term can be bounded with the fact $\ind 
    \big\{
    z < 
    2M^2 \log(NM/\delta)
    \big\} \le 1$,
\begin{align*}
    & L(\cU_0)\sqrt{\log(2MN/\delta)}
    \sum_{z=2}^{N}
    \ind 
    \{
    S_z \ge 
    2M^2 \log(NM/\delta)
    \}
    \sqrt{\frac{M}{S_z}}
    \\
    & \qquad \le 
    L(\cU_0)\sqrt{\log(2MN/\delta)}
    \sum_{z=2}^{N}
    \sqrt{\frac{M}{S_z}}
    \\
    & \qquad \le 
    L(\cU_0)\sqrt{\log(2MN/\delta)}
    \sum_{z=2}^{N}
    \sqrt{\frac{M}{z}}
    \\
    & \qquad \le 
    2L(\cU_0)\sqrt{\log(2MN/\delta)}
    \sqrt{MN}
    \\
    & \qquad =
    O(L(\cU_0)\sqrt{\log(MN/\delta)}
    \sqrt{MN}).
\end{align*}
$L(\cU_0)$ is actually dominated by the minimal mean accuracy $\min_z \Bar{\Delta}_z$ throughout the algorithm. 
In practice, $L(\cU_0)$ is usually a constant, related to all users' accuracy. In the worst theoretical case, $L(\cU_0)$ will be dominated by $F(\Delta_{u^*} / M) = \Tilde{O}(M^2)$, which further turns the last term into $\Tilde{O}(M^{5/2} N^{1/2})$, and requires $M = o(N^{1/5})$ so that this term becomes negligible.

\bibliography{ref}
\bibliographystyle{ims}
\end{document}